\def\eqref#1{equation~\ref{#1}}
\def\1{\bm{1}}
\DeclareMathAlphabet{\mathsfit}{\encodingdefault}{\sfdefault}{m}{sl}
\SetMathAlphabet{\mathsfit}{bold}{\encodingdefault}{\sfdefault}{bx}{n}
\newtheorem{theorem}{Theorem}
\newtheorem{proposition}[theorem]{Proposition}
\newtheorem{definition}{Definition}
\definecolor{Gray}{gray}{0.85}
\definecolor{LightCyan}{rgb}{0.88,1,1}
\newcolumntype{a}{>{\columncolor{Gray}}c}
\definecolor{Gray0}{gray}{0.95}
\definecolor{Gray1}{gray}{0.9}
\definecolor{Gray2}{gray}{0.85}
\definecolor{Gray3}{gray}{0.80}
\definecolor{Gray4}{gray}{0.75}
\definecolor{Gray5}{gray}{0.7}
\definecolor{Gray6}{gray}{0.65}
\definecolor{Gray7}{gray}{0.6}
\definecolor{Gray8}{gray}{0.55}
\definecolor{Gray9}{gray}{0.5}
\definecolor{Gray10}{gray}{0.45}
\definecolor{Gray11}{gray}{0.4}
\definecolor{Gray12}{gray}{0.35}
\definecolor{Gray13}{gray}{0.3}
\definecolor{Gray14}{gray}{0.25}
\definecolor{Gray15}{gray}{0.2}
\definecolor{citeColor}{RGB}{0,20,115}
\title{Rethinking LLM Unlearning Objectives: \\ A Gradient Perspective and Go Beyond}
\author{Qizhou Wang$^{1}$\thanks{Work done during a remote visiting at Cornell University.}\quad Jin Peng Zhou$^{2}$\quad Zhanke Zhou$^{1}$\quad Saebyeol Shin$^{2}$\\\textbf{Bo Han}$^{1}$\thanks{Correspondence to Bo Han (bhanml@comp.hkbu.edu.hk).}\quad \textbf{Kilian Q. Weinberger}$^{2}$ \\ \\
$^1$TMLR Group, Department of Computer Science, Hong Kong Baptist University \\
$^2$Department of Computer Science, Cornell University\\
}
\begin{document}

\maketitle

\begin{abstract}
Large language models (LLMs) should undergo rigorous audits to identify potential risks, such as copyright and privacy infringements. Once these risks emerge, timely updates are crucial to remove undesirable responses, ensuring legal and safe model usage. It has spurred recent research into LLM unlearning, focusing on erasing targeted undesirable knowledge without compromising the integrity of other, non-targeted responses. Existing studies have introduced various unlearning objectives to pursue LLM unlearning without necessitating complete retraining. However, each of these objectives has unique properties, and no unified framework is currently available to comprehend them thoroughly. To fill the gap, we propose a toolkit of the gradient effect (G-effect), quantifying the impacts of unlearning objectives on model performance from a gradient perspective. A notable advantage is its broad ability to detail the unlearning impacts from various aspects across instances, updating steps, and LLM layers. Accordingly, the G-effect offers new insights into identifying drawbacks of existing unlearning objectives, further motivating us to explore a series of new solutions for their mitigation and improvements. Finally, we outline promising directions that merit further studies, aiming at contributing to the community to advance this important field. The code is publicly available at: \url{https://github.com/tmlr-group/G-effect}.

\end{abstract}

\section{Introduction}

Large language models (LLMs)~\citep{brown2020language,touvron2023llama2,achiam2023gpt} represent the cutting edge of machine learning for the field of language understanding. These models typically leverage multi-head attention decoder-based architectures~\citep{vaswani2017attention} with billions of learnable parameters and are autoregressively trained~\citep{zhao2023survey} over web-sourced datasets encompassing trillions of tokens. Such extensive scaling enables LLMs to handle a broad spectrum of complex linguistic tasks, demonstrating remarkable proficiency in understanding and generating languages across a board range of practical applications.

The scaling of LLMs, on the other side, also brings notable drawbacks alongside its benefits. A primary concern is their high tendency to memorize data, which can reproduce sensitive information once encountered during web-sourced training, such as copyright and privacy-related content~\citep{yao2023survey}. These issues are particularly concerning due to the potential misuse of LLMs for illegal activities~\citep{li2024wmdp}, also posing challenges to protect individual rights to be forgotten~\citep{zhang2023right}. Mitigating these undesirable behaviors in LLMs is non-trivial, involving regularly auditing LLMs to recognize sensitive content and subsequently adjusting the associated, parameterized knowledge. In previous works, supervised fine-tuning~\citep{de2021editing} and alignment methods~\citep{ouyang2022training} have been explored to overwrite LLMs against such undesirable model behaviors. However, these well explored directions face practical deficiencies---they can be costly~\citep{yao2023large}, require high-quality crafted preference datasets~\citep{chowdhury2024provably}, and exhibit concerns regarding robustness~\citep{patil2023can,qi2023fine,wang2024comprehensive}.

\textbf{LLM unlearning}~\citep{yao2023large} has emerged as a promising alternative, with a direct goal of removing parameterized knowledge targeted to be unlearned, meanwhile preserving the model integrity for all other, non-targeted data~\citep{wang2024unlearning}. Highlighted by~\citet{yao2023large}, LLM unlearning is cost-effective over aforementioned more demanding methods, thus attracting emerging research attention these days~\citep{liu2024rethinking}. A representative baseline of LLM unlearning is gradient ascent (GA)~\citep{maini2024tofu}, adjusting LLMs to increase the prediction losses for targeted data---thereby removing parameterized knowledge. GA offers a potentially viable path to implement LLM unlearning; however, it is severely susceptible to excessive unlearning~\citep{zhang2024negative}, where the effectiveness in removing undesirable data comes at the high cost of compromising the overall model integrity. It motivates a series of subsequent works that improve upon GA, such as negative preference optimization (NPO)~\citep{zhang2024negative}, preference optimization (PO)~\citep{maini2024tofu}, and representation misdirection for unlearning (RMU)~\citep{li2024wmdp}.

Given the increasing number of unlearning objectives, we need to discern good objectives from those less promising ones. 
A step further, it is also interesting to pinpoint beneficial components within existing methods, isolating those that are useless or potentially harmful. Sadly, to our knowledge, a general toolkit for in-depth analysis of various unlearning methods is still lacking. To bridge this gap, we propose the concept of the gradient effect (G-effect), which approximates the performance change associated with particular unlearning objectives via the dot product of their gradients, cf., Definition~\ref{def: gradient effect}.  The G-effect provides more than mere performance evaluations---it enables detailed examinations of various unlearning methods for their impacts with respect to data points, updating steps, and layers, cf., Section~\ref{sec: case studies}. We outline below for some of the general observations we achieved.

\begin{itemize}[leftmargin=15pt]
    \item \textbf{Unlearning affects shallow layers more.} Shallow layers are more affected than deeper layers, showing that general knowledge encoded in shallow layers undergoes substantial alterations. 

    \item \textbf{Unlearning compromises retention.} Although conceptually existing (cf., Section~\ref{sec: g-effect}), current unlearning objectives all fail to retain the overall performance when conducting unlearning.

    \item \textbf{Excessive unlearning is harmful.} An excessive extent of unlearning has severe impacts such that the deterioration in common model responses can outweigh improvements in unlearning.

    \item \textbf{Risk weighting is powerful.} Prioritizing certain points is justified to be effective for unlearning. However, there still exists a large space to further refine risk weighting mechanisms. 

    \item \textbf{Regularization is important.} Regularization terms continue to play a crucial role in maintaining overall model integrity, with the KL~\citep{maini2024tofu} emerging as a promising choice.

\end{itemize}

We benchmark both existing and new methods explored throughout our analysis on the well-established TOFU fictitious unlearning datasets~\citep{maini2024tofu}. Our experiments identify several new state-of-the-arts that merit further attention. 
Additionally, based on our analysis, we highlight promising research directions that warrant exploration to further advance the field.

\section{LLM Unlearning}\label{sec: preliminary}
We focus on auto-regressive LLMs parameterized by $\boldsymbol{\theta}$, which recursively estimate the probability distributions over the next tokens, denoted as $p(\cdot|s;\boldsymbol{\theta})$. LLMs are, in general, trained on large-scale, web-sourced corpora following the distribution $\mathcal{D}_{\mathrm{t}}$ with the negative log-likelihood (NLL) loss function of $-\log p(s;\boldsymbol{\theta})$, where $p(s;\boldsymbol{\theta})=\prod_{i=2}^{\vert s\vert}p(s^i|s^{<i};\boldsymbol{\theta})$ with $s^i$ the $i$-th token and $s^{<i}$ the prefix up to $s^i$. While LLMs are capable of handling a broad spectrum of language generation tasks, the use of training corpora sourced from the open world raises the risk that our LLMs will learn from sensitive data, precipitating a series of legal and ethical concerns~\citep{liu2023trustworthy}.

\textbf{LLM Unlearning.} These issues necessitate the need for a post-training mechanism that enables LLMs to eradicate any parameterized knowledge that is undesirable. This requirement motivates the recent research on LLM unlearning~\citep{yao2023large,maini2024tofu}, of which the main goals are in two folds---\textbf{(a)} ensuring the removal of data / knowledge targeted to be unlearned and \textbf{(b)} retaining the integrity of model responses for non-targeted data. Formally, we consider the data distribution $\mathcal{D}_{\mathrm{u}}$ that should be unlearned and define the risk metric $\mathcal{R}$ to assess model performance. 
Then, our goal
is to adjust the original LLM parameters $\boldsymbol{\theta}_{\mathrm{o}}$ to get the unlearned ones $\boldsymbol{\theta}_{\mathrm{u}}$, such that:
\begin{itemize}[leftmargin=15pt]
    \item \textbf{Removal.} The performance on the unlearning dataset \(\mathcal{D}_{\mathrm{u}}\) should significantly deteriorate, i.e., $\mathcal{R}(\mathcal{D}_{\mathrm{u}};\boldsymbol{\theta}_{\mathrm{u}})\gg\mathcal{R}(\mathcal{D}_{\mathrm{u}};\boldsymbol{\theta}_{\mathrm{o}})$, revealing effective unlearning on data targeted to be erased.
    \item \textbf{Retention.}  The performance on other data, i.e., \(\mathcal{D}_{\mathrm{t}} \backslash \mathcal{D}_{\mathrm{u}}\), should be maintained or enhanced, i.e., $\mathcal{R}(\mathcal{D}_{\mathrm{t}}\backslash\mathcal{D}_{\mathrm{u}};\boldsymbol{\theta}_{\mathrm{u}})\le\mathcal{R}(\mathcal{D}_{\mathrm{t}}\backslash\mathcal{D}_{\mathrm{u}};\boldsymbol{\theta}_{\mathrm{o}})$, 
    ensuring model responses on common data are not damaged.
\end{itemize}
Here, we consider a practical objective of erasing targeted knowledge as much as possible~\citep{liu2024rethinking}, i.e., full removal, slightly diverging from the classical definition of machine unlearning~\citep{bourtoule2021machine} that seeks to make models behave as if they were trained without the targeted data, i.e., influence removal. 
The pursuit of full removal simplifies our subsequent discussions, and as we will see later in Section~\ref{sec: experiment}, the outcomes of these two goals are typically aligned.

% Our goal is more suitable for LLM unlearning, driven by the need to eliminate content that poses privacy and copyright concerns, with the understanding that more thorough elimination leads to more favorable behaviors.

This paper delves into exploring various objective functions that implement LLM unlearning, a topic that requires our fundamental interest. 
As an example, GA~\citep{yao2023large} directly increases the NLL loss for targeted data, of which the objective is articulated as $\min_\theta\mathbb{E}_{s_{\mathrm{u}}\sim\mathcal{D}_{\mathrm{u}}}\log p(s_{\mathrm{u}};\boldsymbol{\theta})$. GA represents one of the pioneering methods for LLM unlearning, paving a feasible road to implement unlearning in practice. However, it often exhibits the propensity to excessive unlearning~\citep{zhang2024negative,wang2024unlearning}---the efficacy in eliminating undesirable knowledge comes at a high cost to compromise the {model integrity}. It motivates a series of subsequent works~\citep{zhang2024negative,maini2024tofu,li2024wmdp}, which will be discussed  later in Section~\ref{sec: case studies}.  

\section{G-effect}\label{sec: g-effect}
Before delving into specific methods, we need proper criteria for assessing whether an objective is suitable for unlearning or not. 
Recalling our earlier discussion on the main goals of unlearning, we can quantify the performance change before and after unlearning to evaluate their effects, i.e., 
$\mathcal{R}(\mathcal{D}_{\mathrm{u}};\boldsymbol{\theta}_{\mathrm{u}})-\mathcal{R}(\mathcal{D}_{\mathrm{u}};\boldsymbol{\theta}_{\mathrm{o}})$ for removal and $\mathcal{R}(\mathcal{D}_{\mathrm{t}}\backslash\mathcal{D}_{\mathrm{u}};\boldsymbol{\theta}_{\mathrm{u}}) - \mathcal{R}(\mathcal{D}_{\mathrm{t}}\backslash\mathcal{D}_{\mathrm{u}};\boldsymbol{\theta}_{\mathrm{o}})$ for retention. Sadly, merely comparing performance provides limited insights into understanding the underlying mechanisms (cf., Section~\ref{sec: case studies}). Therefore, we suggest a more insightful scheme that can facilitate the analysis of various unlearning methods from a gradient perspective, named the gradient effect (G-effect).

Generally speaking, the G-effect compares the gradients of the unlearning objective $\mathcal{L}_{\mathrm{u}}$ and the risk metric $\mathcal{R}$. If the gradients of $\mathcal{L}_{\mathrm{u}}$ align in similar directions to $\mathcal{R}$, model updating based on $\mathcal{L}_{\mathrm{u}}$ is capable to enhance model performance measured by $\mathcal{R}$, an obvious alternative of $\mathcal{R}(\mathcal{D};\boldsymbol{\theta}_{\mathrm{u}})-\mathcal{R}(\mathcal{D};\boldsymbol{\theta}_{\mathrm{o}})$ to measure the performance change.
The degree of such similarity between gradients can be quantified using their dot products~\citep{lopez2017gradient}: A positive dot product indicates that  $\mathcal{L}_{\mathrm{u}}$ is capable to improve $\mathcal{R}$, whereas a negative dot product suggests potential harm to $\mathcal{R}$. Please refer to Appendix~\ref{app: g-effect} for a formal derivation. It motivates the G-effect as follows.

\begin{definition}[\textbf{G-effect}]\label{def: gradient effect}
    The G-effect $e^{(t)}$ for an unlearning objective $\mathcal{L}_{\mathrm{u}}$ at the $t$-th step of model updating is given by $\nabla_{\boldsymbol{\theta}}\mathcal{R}(\mathcal{D};\boldsymbol{\theta}^{(t)})^\top\nabla_{\boldsymbol{\theta}}\mathcal{L}_{\mathrm{u}}(\mathcal{D}_{\mathrm{u}};\boldsymbol{\theta}^{(t)})$. We further define the \textbf{unlearning G-effect}  $e_{\mathrm{u}}^{(t)} \leftarrow \nabla_{\boldsymbol{\theta}}\mathcal{R}(\mathcal{D}_{\mathrm{u}};\boldsymbol{\theta}^{(t)})^\top\nabla_{\boldsymbol{\theta}}\mathcal{L}_{\mathrm{u}}(\mathcal{D}_{\mathrm{u}};\boldsymbol{\theta}^{(t)})$ and the \textbf{retaining G-effect} $e_{\mathrm{r}}^{(t)} \leftarrow \nabla_{\boldsymbol{\theta}}\mathcal{R}(\mathcal{D}_{\mathrm{t}}\backslash\mathcal{D}_{\mathrm{u}};\boldsymbol{\theta}^{(t)})^\top\nabla_{\boldsymbol{\theta}}\mathcal{L}_{\mathrm{u}}(\mathcal{D}_{\mathrm{u}};\boldsymbol{\theta}^{(t)})$ to reflect the respective goals of removal and retention.
\end{definition}

\setlength{\intextsep}{-25pt}
\begin{wrapfigure}{R}{0.43\linewidth}
    \centering  
		\includegraphics[width=.75\linewidth]{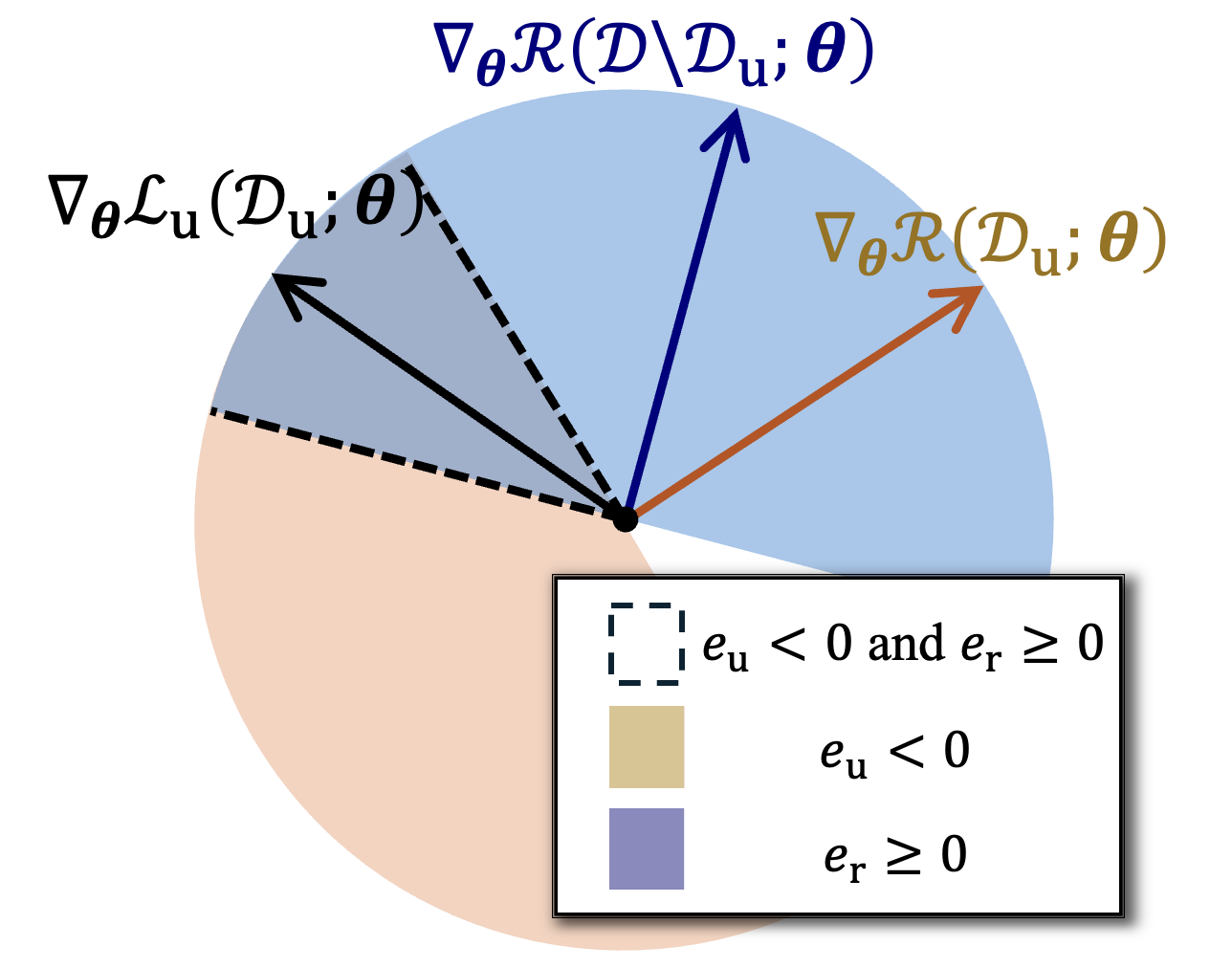}
	    \centering  
	\caption{\textbf{Gradient Directions and Unlearning Behaviors}. We show directions for $\nabla_{\boldsymbol{\theta}}\mathcal{R}(\mathcal{D}_{\mathrm{u}};\boldsymbol{\theta}_{\mathrm{o}})$ and $\nabla_{\boldsymbol{\theta}}\mathcal{R}(\mathcal{D}\backslash\mathcal{D}_{\mathrm{u}};\boldsymbol{\theta}_{\mathrm{o}})$ and regions ensuring $e^{(t)}_{\mathrm{u}}<0$ (red) and $e^{(t)}_{\mathrm{r}} \geq 0$ (blue). Their intersection (black dashed) fulfills the unlearning goals.} \label{fig: motivation}
\end{wrapfigure}

The G-effect measures the impacts of unlearning objectives on either targeted or non-targeted data when implementing gradient updates. Overall, to fulfill the unlearning goals outlined in Section~\ref{sec: preliminary}, we aim for notably negative values of $e^{(t)}_{\mathrm{u}}$ to pursue a sufficient removal of targeted knowledge and non-negative values of $e^{(t)}_{\mathrm{r}}$ to maintain the model integrity for non-targeted data. Figure~\ref{fig: motivation} further depicts these two essential gradient conditions to ensure effective unlearning:
\begin{itemize}[leftmargin=15pt]
    \item \textbf{Removal.} The red region indicates $e^{(t)}_{\mathrm{u}}<0$, ensuring  $\mathcal{L}_{\mathrm{u}}$ to eliminate targeted knowledge.
    \item \textbf{Retention.} The blue region represents $e^{(t)}_{\mathrm{r}} \geq 0$, ensuring  $\mathcal{L}_{\mathrm{u}}$ to retain the overall model integrity.
\end{itemize}

\textbf{What Can We Learn from the G-effect?} 
In Figure~\ref{fig: motivation}, the intersection, delineated by black dashed lines, is the region that meets the primary goals of unlearning---effective removal of targeted knowledge while retaining the integrity of  non-targeted data. This area highlights the conceptual possibilities of achieving a perfect unlearning objective, under the mild conjecture that $\nabla_{\boldsymbol{\theta}}\mathcal{R}(\mathcal{D}_{\mathrm{u}};\boldsymbol{\theta}_{\mathrm{o}})$ will differ from $\nabla_{\boldsymbol{\theta}}\mathcal{R}(\mathcal{D}\backslash\mathcal{D}_{\mathrm{u}};\boldsymbol{\theta}_{\mathrm{o}})$ more or less.  
Also, the dependency on $t$ enables us to examine the dynamics of unlearning, and the gradients helps us to explore the impacts of particular layers or data points involved during unlearning. It will facilitate our understanding of existing unlearning mechanisms, which will be detailed below. Please refer to Appendix~\ref{app: g-effect} for more discussions.

\section{Analysis for Unlearning Objectives}\label{sec: case studies}

\setlength{\intextsep}{5pt}
\begin{wrapfigure}{R}{0.5\linewidth}
    \centering  
		\includegraphics[width=.9\linewidth]{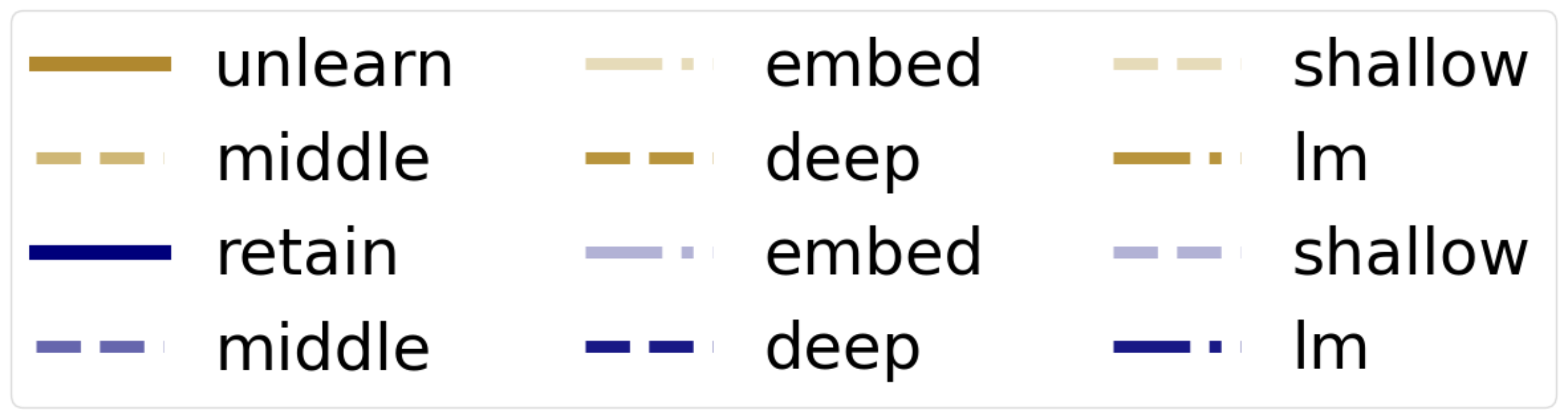}
	    \centering  
	\caption{\textbf{Figure Legends}. We present the unlearning (unlearn) and the retaining (retain) G-effect, and also their values for specific layers, including input embedding layer (embed), layers 1-11 (shallow), layers 12-22 (middle), layers 23-33 (deep), and output unembedding layer (lm).}
    \label{fig: legend}
\end{wrapfigure}

In this section, we employ the G-effect to assess a range of unlearning objectives that are well-recognized, aiming to understand their mechanisms as well as identify their advantages and deficiencies. Due to the high costs in fully computing the G-effect, we focus on experiments based on 5\% TOFU fictitious unlearning~\citep{maini2024tofu} with Llama-2-7B~\citep{touvron2023llama} (cf. Appendix~\ref{app: experiment setups}). Later in Section~\ref{sec: experiment}, we will demonstrate that the conclusions can be generalized to  many other unlearning scenarios. All the methods will run for 5 epochs, totaling about 60 steps. Moreover, as indicated in Figure~\ref{fig: legend}, we will report the unlearning (red) and retaining (blue) G-effect, as well as their detailed values for particular layers within Llama-2-7B (dashed lines for the stacks of layers and dash-dotted lines for input/output layers). 
We default to consider the NLL for the risk metric $\mathcal{R}$.

\subsection{Gradient Ascent (GA)}
\label{sec: ga}

As discussed in Section~\ref{sec: preliminary}, GA represents one of the most basic unlearning objectives~\citep{yao2023editing}, which simply decreases the log-likelihood $\log p(s_{\mathrm{u}};\boldsymbol{\theta})$ for the unlearning data. 
% \footnote{+zk+ You might consider summarizing some \textit{observations} as a quick take-home message to help understand this section, which is quite long (around 5 pages).}

\textbf{The G-effect across Unlearning Steps.} We illustrate the G-effect of GA in Figure~\ref{fig: ga_and_wga_effect}(a). As we can see, the unlearning G-effect reflects the high capability of GA in erasing parameterized knowledge for targeted data, with its values rapidly declining from about $0$ to $-3.5\times10^{5}$. However, this excessive extent of unlearning incurs a large cost to the integrity for non-targeted data, evidenced by the trajectory of negative values in the retaining G-effect that mirror the scales and trends of the unlearning G-effect. Overall, such a scenario suggests that the improvements in unlearning are accompanied by similar, or even greater, deterioration on non-targeted data. 

Note that relatively near-zero values of the G-effect in the later updating stages do not imply that the model can relearn the knowledge. In general, the G-effect exhibits cumulative behaviors, where the presence of extremely negative G-effect, particularly between steps 20 to 40,  has already indicated a large deterioration on model performance. Smaller values of the G-effect in the later stages only suggest that the subsequent damage to model integrity is less severe, mainly due to the GA objective reaching its empirical convergence stage. Refer to {Figure~\ref{fig: ga_effect_appendix}(b) in Appendix~\ref{app: ga}} for more analysis.

\textbf{The G-effect across Layers.} We also observe that the G-effect values are notably greater in the shallow layers than those in the middle and deep layers, which can be more clearly shown in Figure~\ref{fig: ga_and_wga_effect}(b). It indicates that the general knowledge, which is parameterized within shallow layers, is notably distorted, while such side impacts are less severe for middle and deep layers with context-specific knowledge~\citep{geva2020transformer,belrose2023eliciting}. 
It is also worth noting that we isolate the input embedding layer (embed) from other shallow layers (shallow), where we observe that the input embedding layer has relatively negligible impacts on both removal and retention, highlighting the distinct influences of the GA unlearning procedure on the input embedding layer and other shallow layers. 
Furthermore, the middle and top layers exhibit much smaller G-effect values than that for shallow layers. However, the G-effect for the last layer, i.e., the output unembedding layer (LM), is notably large and do not converge to near zero. This behavior suggests that such a linear model performs some scaling operations to further reduce the GA objective that is lower unbounded.

\textbf{Unlearning Mechanisms.} We hope to further explore the unlearning behaviors behind GA, particularly focusing on its wrong tendency towards exhibit extremely negative values of the retaining G-effect. Specifically, the gradients of $\mathcal{L}_{\text{GA}}(\mathcal{D}_{\mathrm{u}};\boldsymbol{\theta})$ with respect to  $\boldsymbol{\theta}$, i.e., $\nabla_{\boldsymbol\theta}\mathcal{L}_{\text{GA}}(\mathcal{D}_{\mathrm{u}};\boldsymbol{\theta})$, are 
\begin{equation}
    \mathbb{E}_{s_{\mathrm{u}}\sim\mathcal{D}_{\mathrm{u}}}\sum_{i=2}^{\vert s\vert}\underbrace{\frac{1}{p(s_{\mathrm{u}}^i|s_{\mathrm{u}}^{<i};\boldsymbol{\theta})}}_{\text{inverse confidence}}\nabla_{\boldsymbol{\theta}} p(s_{\mathrm{u}}^i|s_{\mathrm{u}}^{<i};\boldsymbol{\theta}),\label{eq: ga graient}
\end{equation}
where the inverse confidence term tends to allocate more attention to those tokens that have been notably unlearned, along with the decrease of the likelihood $p(s_{\mathrm{u}}^i|s_{\mathrm{u}}^{<i};\boldsymbol{\theta})$ throughout GA. 

In this case, even minor negative values of each $\nabla_{\boldsymbol{\theta}}\mathcal{R}(\mathcal{D};\boldsymbol{\theta})^\top\nabla_{\boldsymbol{\theta}} p(s_{\mathrm{u}}^i|s_{\mathrm{u}}^{<i};\boldsymbol{\theta})$ can result in the corresponding $\nabla_{\boldsymbol{\theta}}\mathcal{R}(\mathcal{D};\boldsymbol{\theta})^\top{p(s_{\mathrm{u}}^i|s_{\mathrm{u}}^{<i};\boldsymbol{\theta})^{-1}}\nabla_{\boldsymbol{\theta}} p(s_{\mathrm{u}}^i|s_{\mathrm{u}}^{<i};\boldsymbol{\theta})$ to be extreme. This trend will lead to the extreme negative values of the unlearning G-effect, consistent with prior findings for the excessive unlearning of GA~\citep{wang2024unlearning}. Therefore, this inverse confidence mechanism is predominantly responsible for excessive unlearning. 
One can counteract the impacts of the inverse confidence by weighting the log-likelihood for each token via its own confidence, i.e.,
\begin{equation}
    \mathbb{E}_{s_{\mathrm{u}}\sim\mathcal{D}_{\mathrm{u}}}\sum_{i=2}^{\vert s\vert}w^{\mathrm{wga}}_{s_{\mathrm{u}},i}\log p(s_{\mathrm{u}}^i|s_{\mathrm{u}}^{<i};\boldsymbol{\theta})\label{eq: wga}
\end{equation}
with $w^{\mathrm{wga}}_{s_{\mathrm{u}},i}={p(s_{\mathrm{u}}^i|s_{\mathrm{u}}^{<i};\boldsymbol{\theta})^\alpha}$ the confidence weights for the $i$-th token and $\alpha$ the hyper-parameter of inverse temperature. 
We refer to this approach as the weighted GA (WGA). 
% \footnote{+zk+ This is a new objective. Putting the two objectives into a new section may be better. In this case, the new objectives in Sec. 5 (which can merge the evaluation section as well) are designed based on the understanding in Sec. 4.}
An example of its G-effect in mitigating excessive unlearning is illustrated in Figure~\ref{fig: ga_and_wga_effect}(c). Remarkably, we observe that the negative impact on common data is considerably less severe compared to the improvements observed on targeted data. 
Its underlying mechanism is not mystic, functioning as early stopping to curb the unlearning extent. Particularly, when the unlearning extent is well-controlled, even the original GA can outweigh the improvements of unlearning over the deterioration on integrity, a less obvious scenario that is further elaborated in Figure~\ref{fig: ga_effect_different_scale_appendix} of Appendix~\ref{app: ga}. Overall, the findings emphasize that excessive unlearning profoundly compromises the overall model integrity, necessitating careful controls. For more detailed discussions about WGA, please refer to Appendix~\ref{app: wga}.

\begin{figure}
    \centering
    \subfigure[GA]{\includegraphics[width=0.32\textwidth]{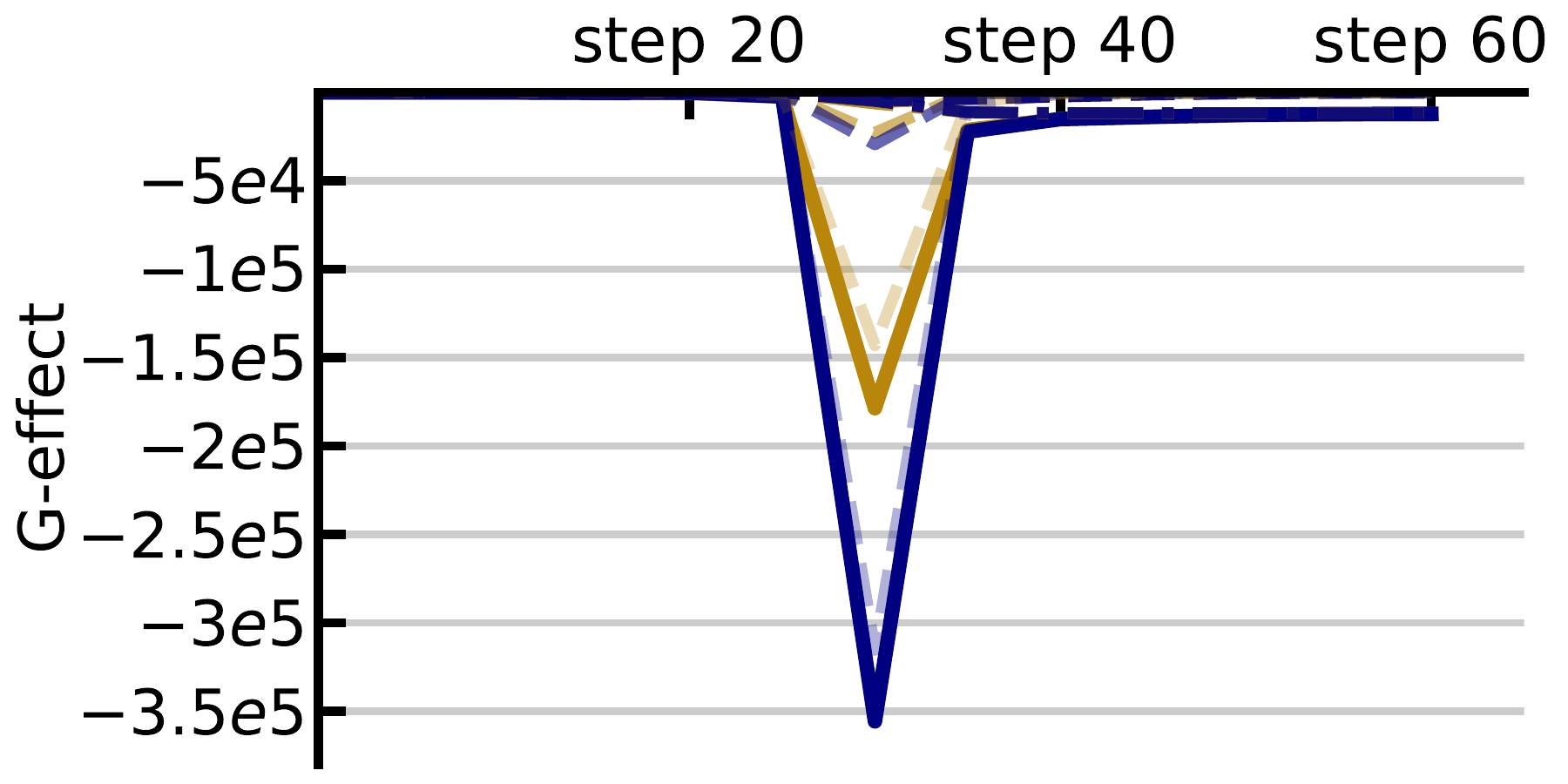}} ~
    \subfigure[GA (detailed)]{\includegraphics[width=0.32\textwidth]{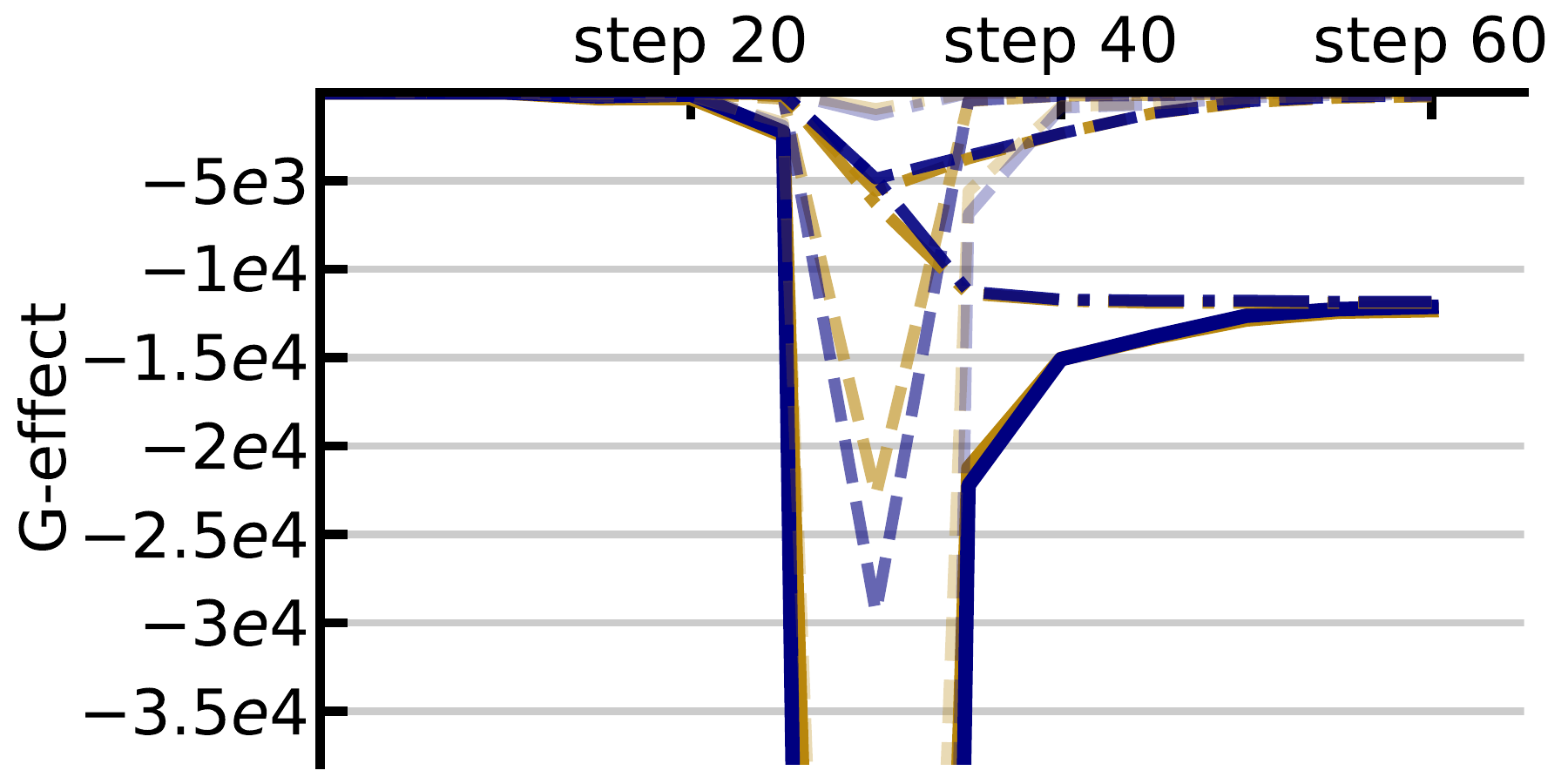}} ~
    \subfigure[{WGA ($\alpha=0.5$)}]{\includegraphics[width=0.32\textwidth]{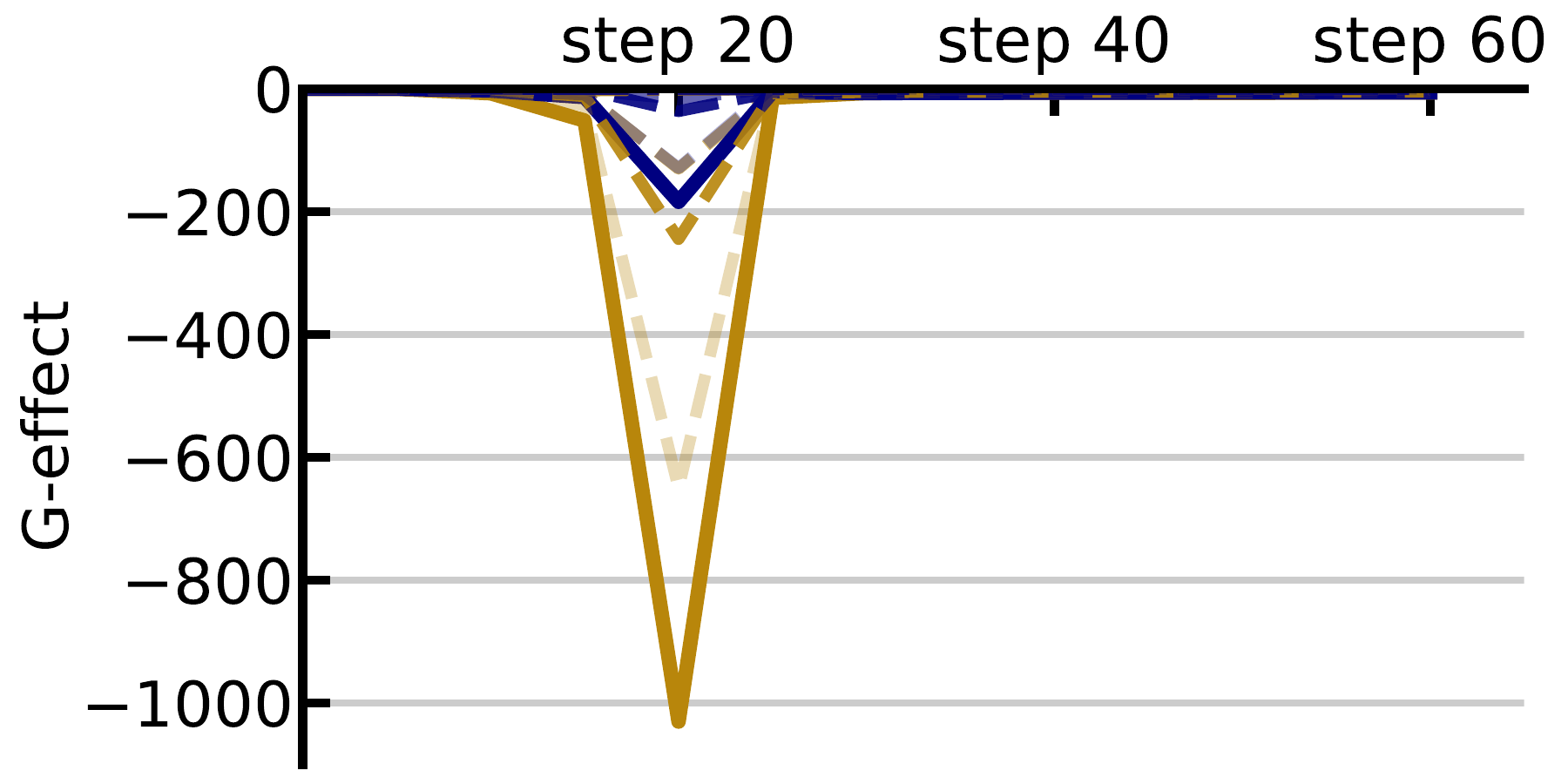}}
    \caption{\textbf{The G-effect for GA and WGA.} We depict the G-effect for GA in (a) and its values in the range between about $-3.5\times10^4$ and 0 in (b). We further depict the G-effect for WGA, which improves upon GA following \eqref{eq: wga}, in (c). 
    The \textbf{legends} are summarized in Figure~\ref{fig: legend}. The \textbf{horizontal axis} denotes the unlearning step and the \textbf{vertical axis} denotes the values of the G-effect. 
    % \textcolor{blue}{+zk+ The legends might be too far away.}
    }
    \label{fig: ga_and_wga_effect}
\end{figure}

\subsection{Negative Preference Optimization (NPO)}
\label{sec: npo}
NPO is motivated by direct preference optimization (DPO), an  alignment method~\citep{rafailov2024direct}, which originally utilizes paired corpora comprising preferred versus dis-preferred data. NPO segregates the dis-preferred part from DPO, heuristically employing it as the unlearning objective, of which the formulation can be written as 
\begin{equation}
    \frac{2}{\beta}\mathbb{E}_{s_{\mathrm{u}}\sim\mathcal{D}_{\mathrm{u}}}\log \big(1 +(\frac{p(s_{\mathrm{u}};\boldsymbol{\theta})}{p(s_{\mathrm{u}};\boldsymbol{\theta}_{\textrm{o}})})^\beta\big),
    \label{eq: npo}
\end{equation}
where $\beta$ is the inverse temperature. NPO has shown notable enhancements over GA in preserving the model integrity, which is recognized as the current state-of-the-art within the community.

\textbf{The G-effect across Unlearning Steps.} We show the G-effect of NPO in Figure~\ref{fig: npo_ge}. We observe that its values converge much faster than GA, aligning with previous observations~\citep{zhang2024negative}. 
Moreover, the magnitudes of G-effect for NPO are notably smaller than those observed with GA. In terms of the unlearning G-effect, it indicates that the unlearning strength of NPO is weaker; however, for the retaining G-effect, it suggests that NPO better preserves the model integrity. More importantly, magnitudes of retaining G-effect outweigh those of unlearning when $\beta=1$ or $2$, signifying that the negative impacts on model integrity are less pronounced than the beneficial effects of unlearning, rendering NPO a promising method to mitigate excessive unlearning. 

\textbf{The G-effect across Layers and $\beta$.} Similar to GA, deeper layers exhibit weaker G-effect. However, both the input embedding and output linear layers display negligible values, which are different from the behaviors seen with GA. For both middle and deep layers, their retaining G-effect is relatively small. Furthermore, across different values of the inverse temperature, we observe that larger $\beta$ makes the G-effect converge faster and their magnitudes become smaller. This phenomenon generally arises because smaller $\beta$ causes the NPO formulation more closely resemble to that of GA~\citep{zhang2024negative}, of which the power in controlling the extent of unlearning is weaken. The relationship between GA and NPO is further elucidated below in \eqref{eq: npo gradient}.

\begin{figure}
    \centering
    \subfigure[{$\beta=0.1$}]{\includegraphics[width=0.32\textwidth]{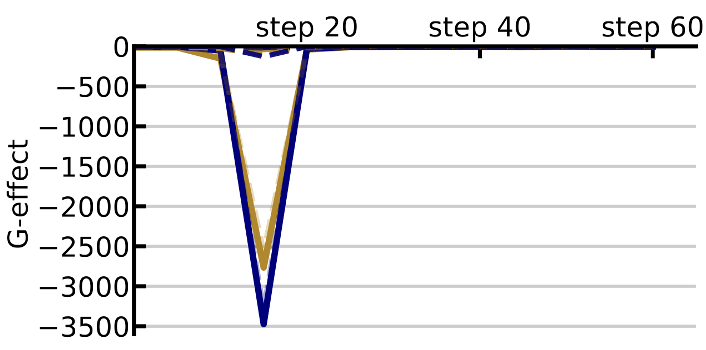}} ~
    \subfigure[$\beta=1$]{\includegraphics[width=0.32\textwidth]{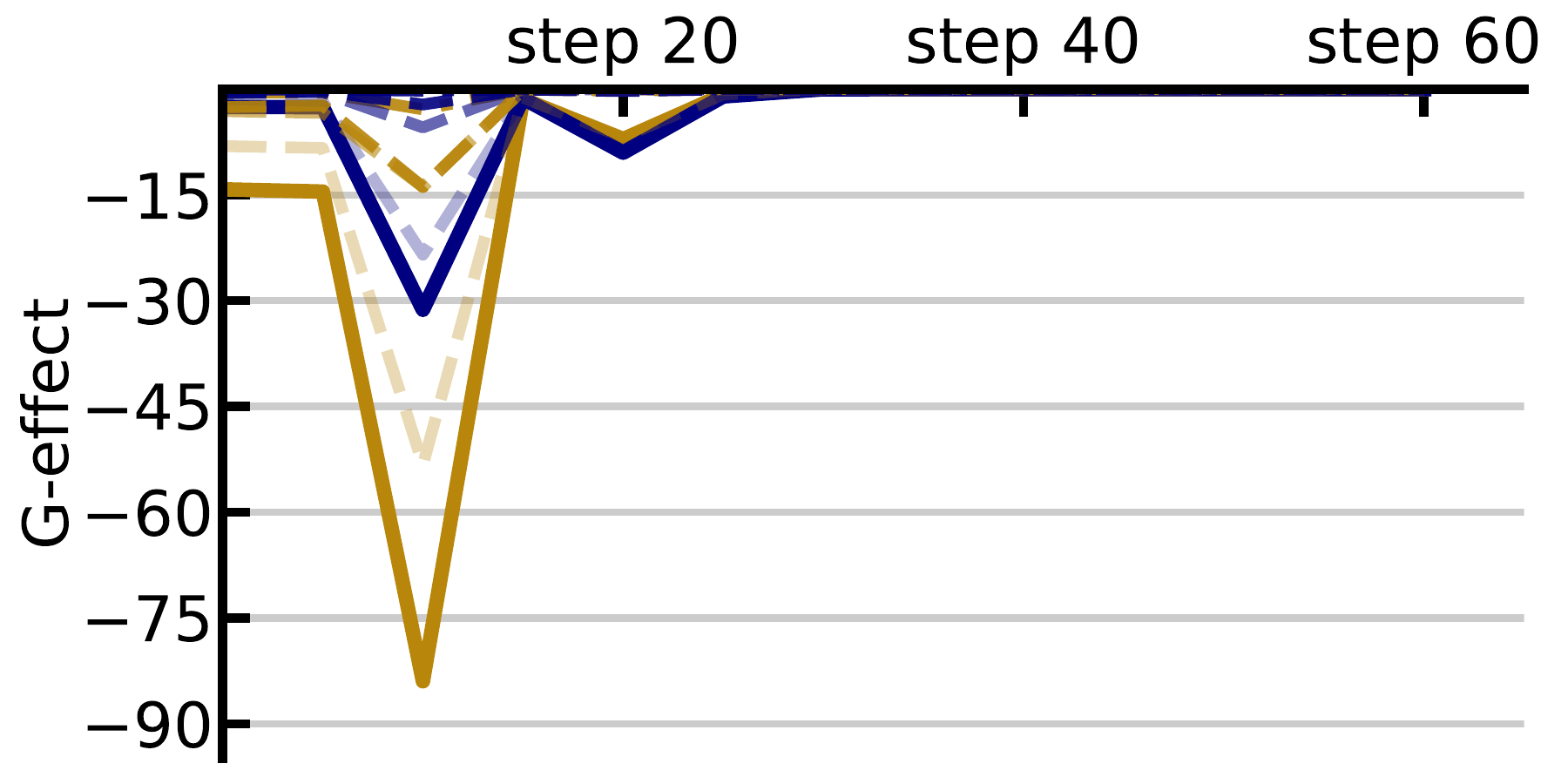}} ~
    \subfigure[$\beta=2$]{\includegraphics[width=0.32\textwidth]{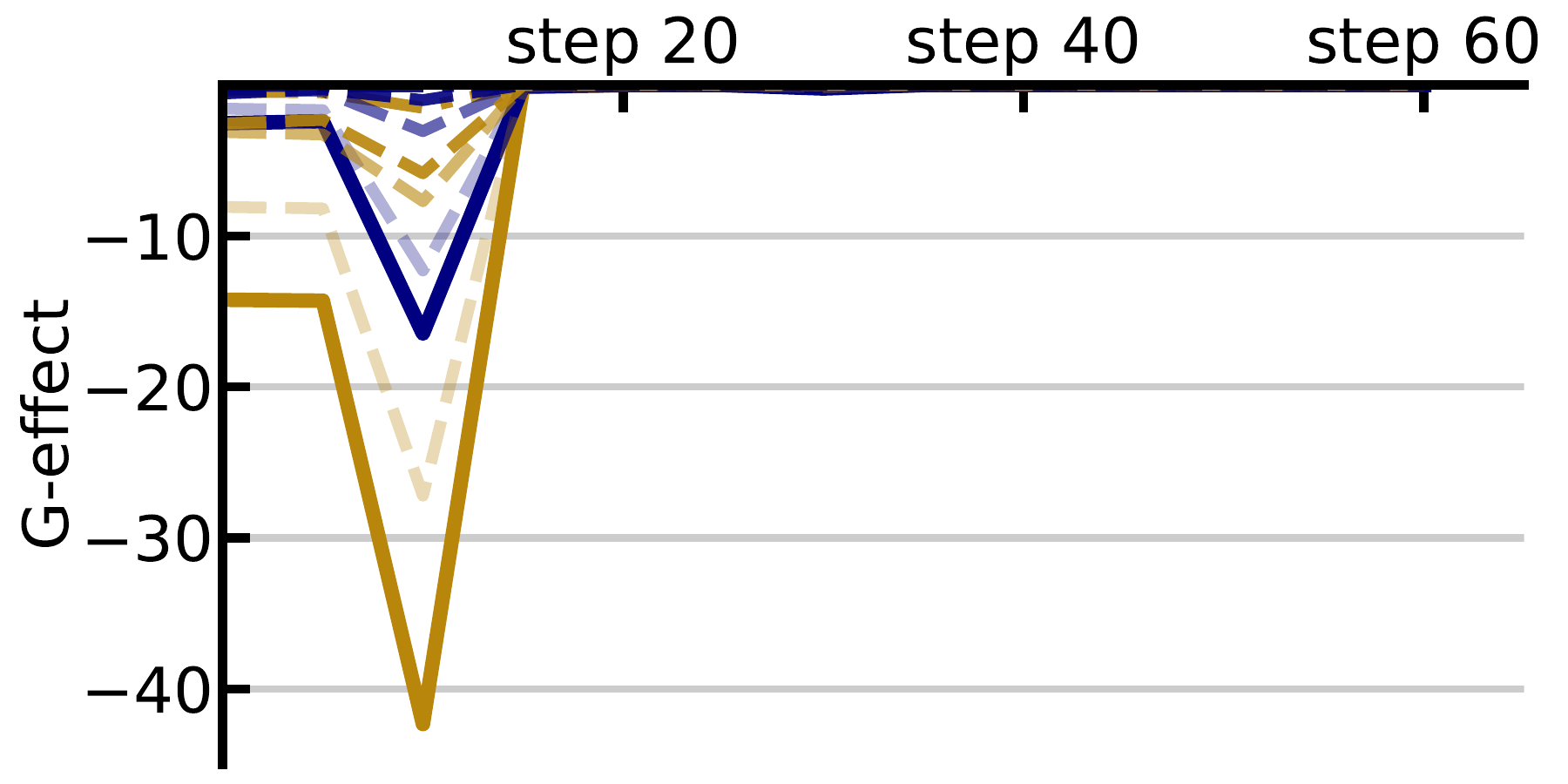}}
    \caption{\textbf{The G-effect for NPO.} The \textbf{legends} are summarized in Figure~\ref{fig: legend}. The \textbf{horizontal axis} represents the unlearning step and the \textbf{vertical axis} indicates the values of the G-effect.  }
    \label{fig: npo_ge}
\end{figure}

\begin{figure}
    \centering
    \subfigure[NPO Weights ($\beta=1$)]{\includegraphics[width=0.32\textwidth]{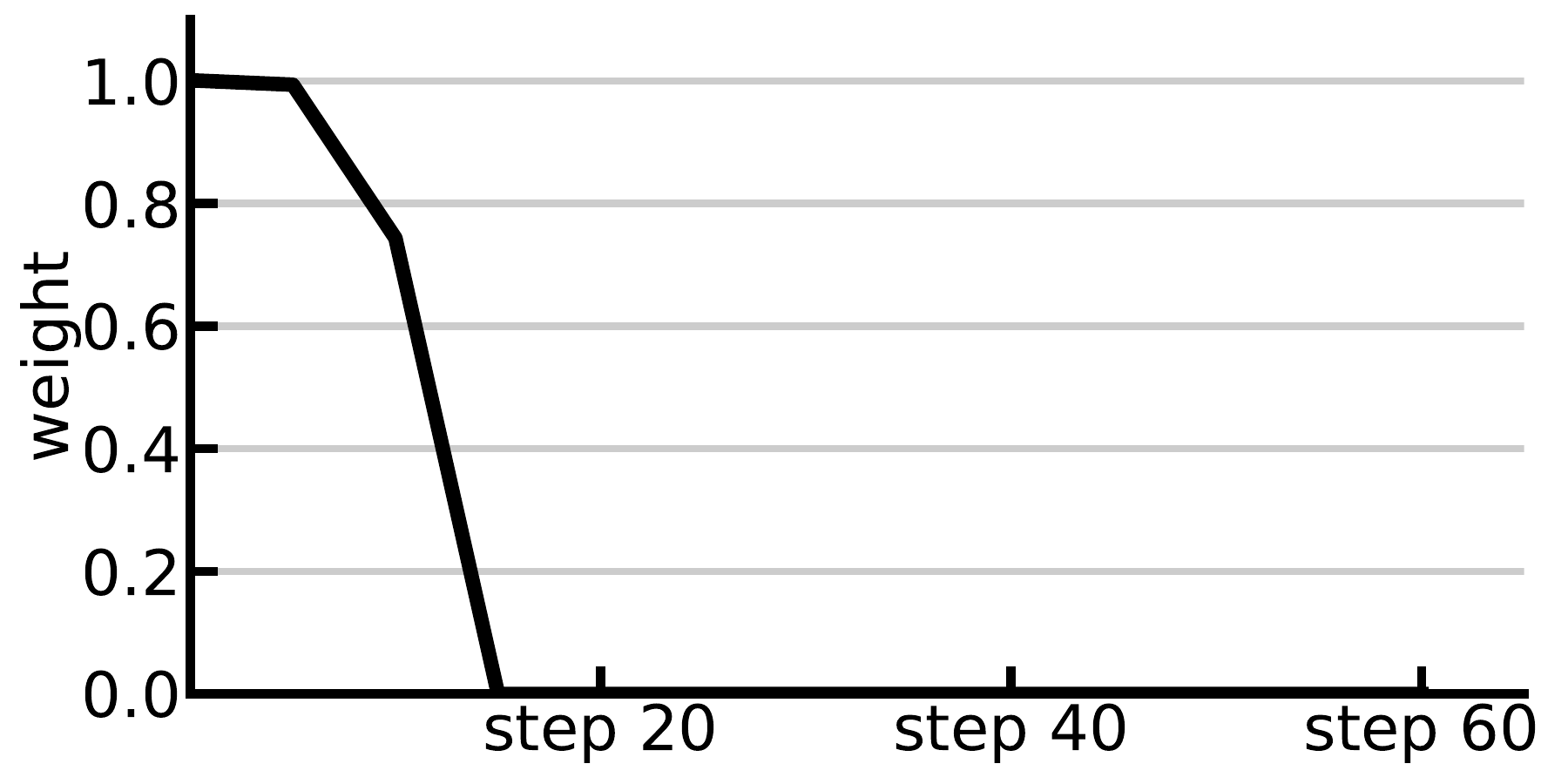}}  ~
    \subfigure[PG-effect ($\beta=1$)]{\includegraphics[width=0.32\textwidth]{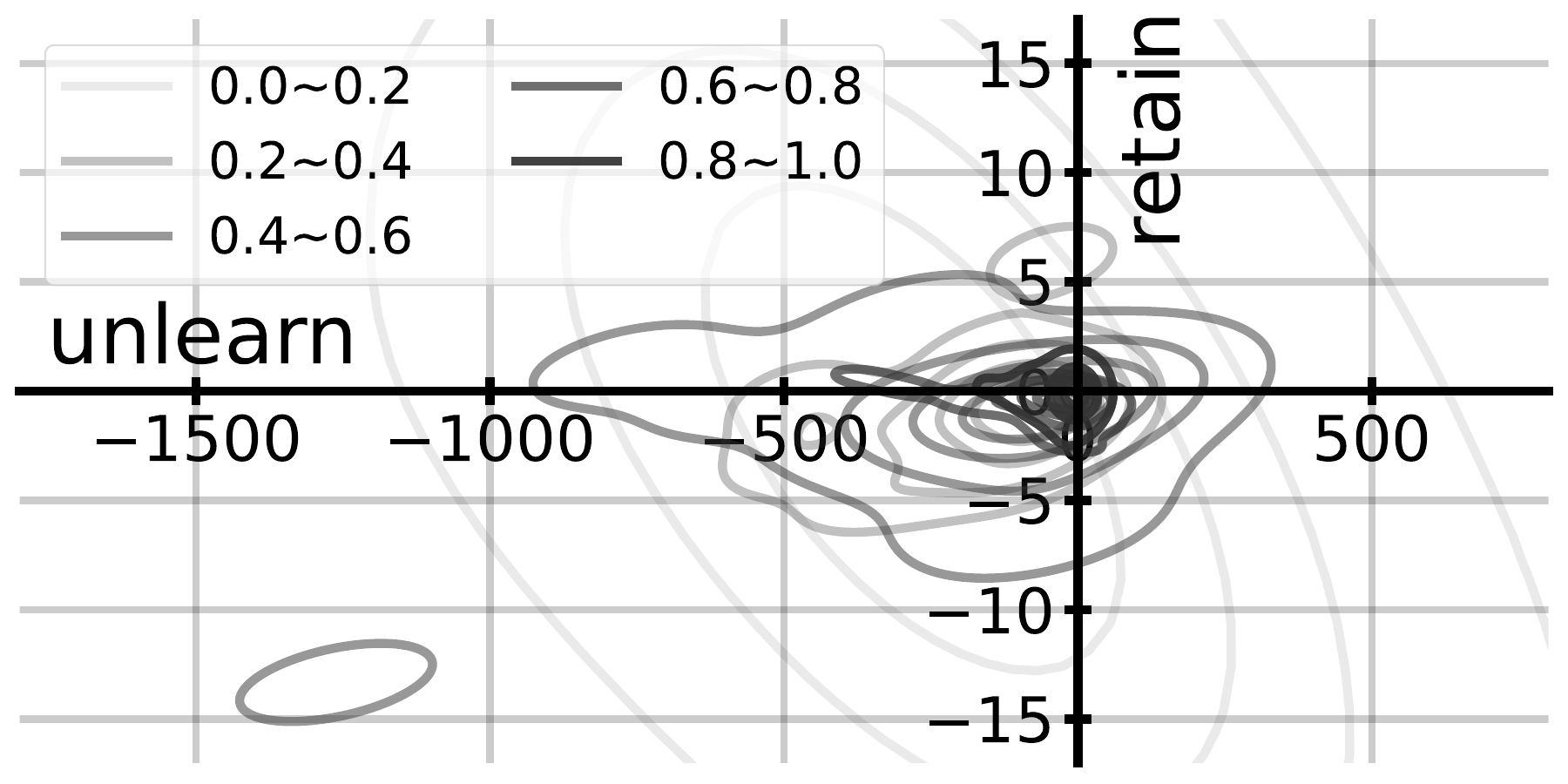}} ~
    \subfigure[{TNPO ($\beta=1$)}]{\includegraphics[width=0.32\textwidth]{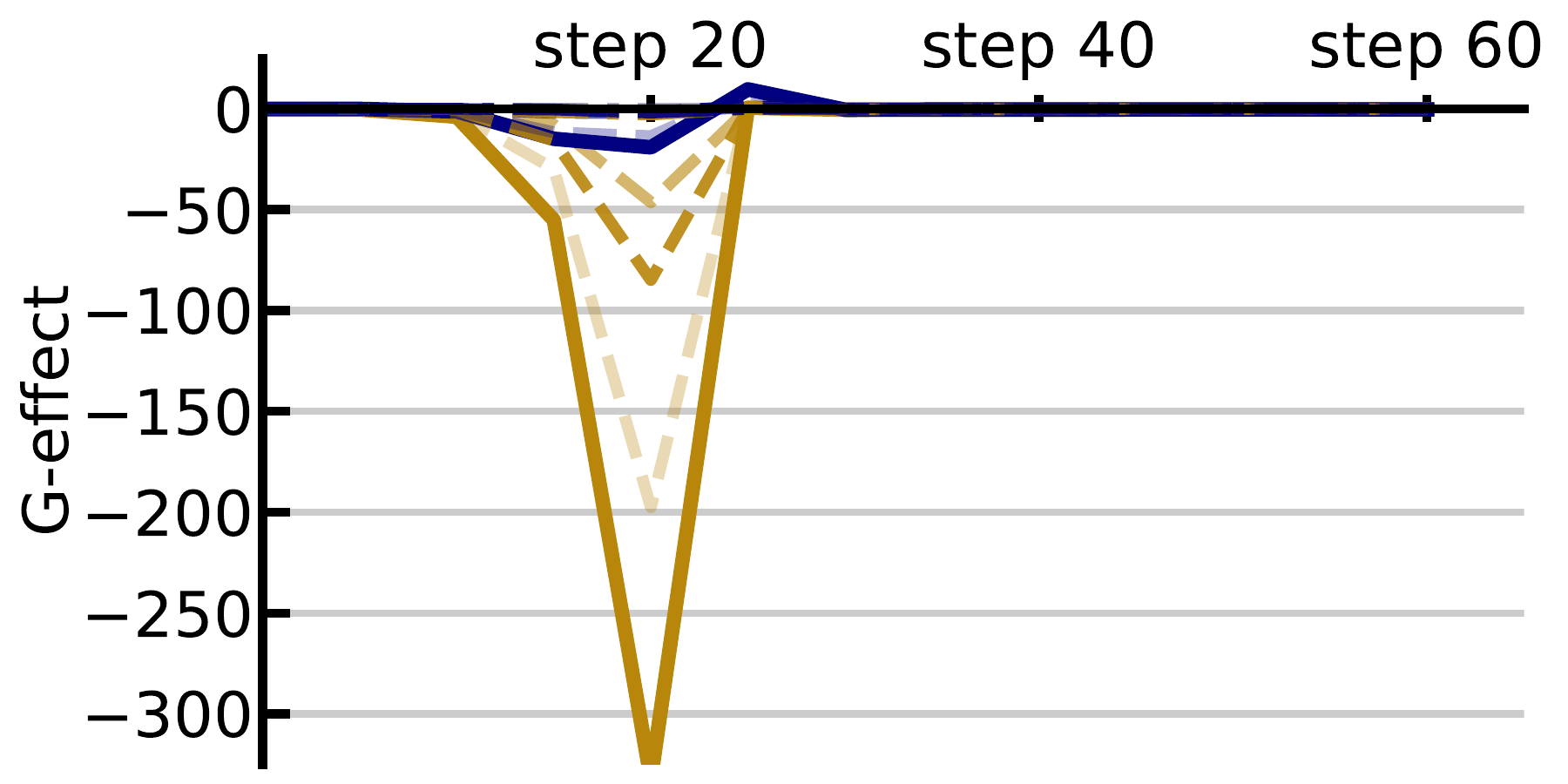}}
    \caption{\textbf{The NPO Weighting Mechanisms.} We depict the curves of average NPO weights in (a) and the relationship of NPO weights with PG-effect in (b). Distributions of PG-effect for different value ranges of $w_{s_{\mathrm{u}}}^{\mathrm{npo}}$ are depicted, considering the checkpoints at $5$, $10$, and $15$-th steps jointly. Moreover, darker shades within distribution contours signify the groups of $w_{s_{\mathrm{u}}}^{\mathrm{npo}}$  with overall larger weights. We further depict the G-effect for an improved version of NPO, named TNPO, in (c). The \textbf{horizontal axes} denote the unlearning step for (a) and (c), and the unlearning G-effect for (b). The \textbf{vertical axes} denote the NPO weights for (a), the retaining G-effect for (b), and the G-effect for (c). }
    \label{fig: understand_npo}
\end{figure}

\textbf{Unlearning Mechanisms.} We now aim to understand the factors that contribute to the efficacy of NPO. To begin with, we write the gradients of NPO with respect to $\boldsymbol{\theta}$ in the following
\begin{equation}
\mathbb{E}_{s_{\mathrm{u}}\sim\mathcal{D}_{\mathrm{u}}} w_{s_{\mathrm{u}}}^{\mathrm{npo}} \nabla_{\boldsymbol{\theta}}~\log p(s_{\mathrm{u}};\boldsymbol{\theta}), 
\label{eq: npo gradient}
\end{equation}
with $w_{s_{\mathrm{u}}}^{\mathrm{npo}}=\frac{2p(s_{\mathrm{u}};\boldsymbol{\theta})^\beta}{p(s_{\mathrm{u}};\boldsymbol{\theta})^\beta+p(s_{\mathrm{u}};\boldsymbol{\theta}_{\textrm{o}})^\beta}$. Notably, compared with GA in \eqref{eq: ga graient}, we find that NPO exhibits similar gradient formulation, albeit with a reweighting scheme $w_{s_{\mathrm{u}}}^{\mathrm{npo}}$. Therefore, $w_{s_{\mathrm{u}}}^{\mathrm{npo}}$ primarily contributes to the advantages of NPO over GA, thus requiring our in-depth focus.

We illustrate the curves of $w^{\mathrm{npo}}_{s_{\mathrm{u}}}$ in Figure~\ref{fig: understand_npo}(a), observing a rapid decrease of $w^{\mathrm{npo}}_{s_{\mathrm{u}}}$ from 1 to 0. The formulation of $w_{s}^{\mathrm{npo}}$ reveals that, as the NPO risk decreases---indicative of the drop in the confidence $p(s_{\mathrm{u}};\boldsymbol{\theta})$---the weight $w_{s_{\mathrm{u}}}^{\mathrm{npo}}$ reduces consequently. This weighting behavior seems quite resemble to WGA. Then, the question arises whether $w_{s_{\mathrm{u}}}^{\mathrm{npo}}$ encompasses some intriguing mechanisms beyond early stopping as in WGA. To further elucidate the G-effect of NPO, we expand it as follows:
\begin{equation}
    \mathbb{E}_{s_{\mathrm{u}}\sim\mathcal{D}_{\mathrm{u}}} w_{s_{\mathrm{u}}}^{\mathrm{npo}} \underbrace{\nabla_{\boldsymbol{\theta}}\mathcal{R}(\mathcal{D};\boldsymbol{\theta}^{(t)}) \nabla_{\boldsymbol{\theta}}\log p(s_{\mathrm{u}};\boldsymbol{\theta}^{(t)})}_{\text{PG-effect of GA}},\label{eq: w vs gae}
\end{equation}
which details the G-effect on individual data points, represented as the product of the NPO weighting term $w_{s_{\mathrm{u}}}^{\mathrm{npo}}$ and the point-wise G-effect (refer to as PG-effect) of GA. 
Accordingly, we plot the joint distributions for the PG-effect of GA with respect to unlearning (i.e., $\mathcal{D}=\mathcal{D}_{\mathrm{u}}$) and retention (i.e., $\mathcal{D}=\mathcal{D}_{\mathrm{t}}\backslash\mathcal{D}_{\mathrm{u}}$) in Figure~\ref{fig: understand_npo}(b). These distributions are categorized into five groups based on the associated different value ranges of $w_{s_{\mathrm{u}}}^{\mathrm{npo}}$. As we can see, the distributions of PG-effects vary notably across different ranges of $w_{s_{\mathrm{u}}}^{\mathrm{npo}}$. Generally speaking, $w_{s_{\mathrm{u}}}^{\mathrm{npo}}$ tends to allocate larger weights to points where the retaining PG-effect is near-zero. it is a preferred scenario as $w_{s_{\mathrm{u}}}^{\mathrm{npo}}$ can prioritize data points that have small negative impacts on model integrity. However, the side effect is to emphasize those data points with less contributions to unlearning, thus compromising the unlearning strengths. We conclude that NPO weighting can prioritize certain points that have small negative impacts on non-targeted data, thereby enhancing the overall model integrity after unlearning.

\textbf{One Step Further.} We also notice some shortcomings for the NPO weighting mechanism. First, there are many failure cases where some data points with near-zero retaining PG-effect while large unlearning PG-effect is inappropriately assigned with small weights. Also, the distribution of PG-effect with $w_{s_{\mathrm{u}}}^{\mathrm{npo}}$ in the range between 0.4 to 0.6 demonstrates a wrong trend in assigning large weights to those data points that have large negative impacts on model integrity, i.e., notably negative retaining PG-effect. Ideally, we hope the weighting mechanism can prioritize points that not only have near-zero retaining G-effect and also exhibit large negative unlearning G-effect, a capability that the current NPO weighting does not possess.

It is worth noting that our above analysis does not disqualify $w_{s_{\mathrm{u}}}^{\mathrm{npo}}$ as a meaningful mechanism. Indeed, when  $w_{s_{\mathrm{u}}}^{\mathrm{npo}}$ is applied token-wise, which allows for more granular control over the unlearning process, the unlearning procedures are notably more effective. Formally, we consider the objective
\begin{equation}
    \mathbb{E}_{s_{\mathrm{u}}\sim\mathcal{D}_{\mathrm{u}}}  \sum_{i=2}^{\vert s_{\mathrm{u}}\vert}w^{\mathrm{tnpo}}_{s_{\mathrm{u}}, i}\log p(s_{\mathrm{u}}^i|s_{\mathrm{u}}^{<i};\boldsymbol{\theta}),\label{eq: tnpo}
\end{equation}
where $w^{\mathrm{tnpo}}_{s_{\mathrm{u}}, i}= \frac{2p(s_{\mathrm{u}}^i|s_{\mathrm{u}}^{<i};\boldsymbol{\theta})^{\beta}}{p(s_{\mathrm{u}}^i|s_{\mathrm{u}}^{<i};\boldsymbol{\theta})^\beta+p(s_{\mathrm{u}}^i|s_{\mathrm{u}}^{<i};\boldsymbol{\theta}_{\textrm{o}})^\beta}$ generalizes the weighting mechanism of NPO for tokens. We refer to \eqref{eq: tnpo} as token-wise NPO (TNPO) and show its G-effect in Figure~\ref{fig: understand_npo}(c). Therein, we observe that the unlearning G-effect exhibits sufficiently large negative values while the retaining G-effect is overall close-to-zero. It underscores the efficacy of $w^{\mathrm{tnpo}}_{s_{\mathrm{u}}, i}$ in properly prioritizing certain tokens during unlearning, thus achieving unlearning efficacy. Please refer to Appendix~\ref{app: tnpo} for more discussions about TNPO, as well as its further improved version named WTNPO.

\subsection{More  Objectives}

We also examine two other unlearning objectives that do not fall into the variants of GA.

\setlength{\intextsep}{10pt}
\begin{wrapfigure}{r}{0.45\textwidth}
  \begin{center}
    \includegraphics[width=0.35\textwidth]{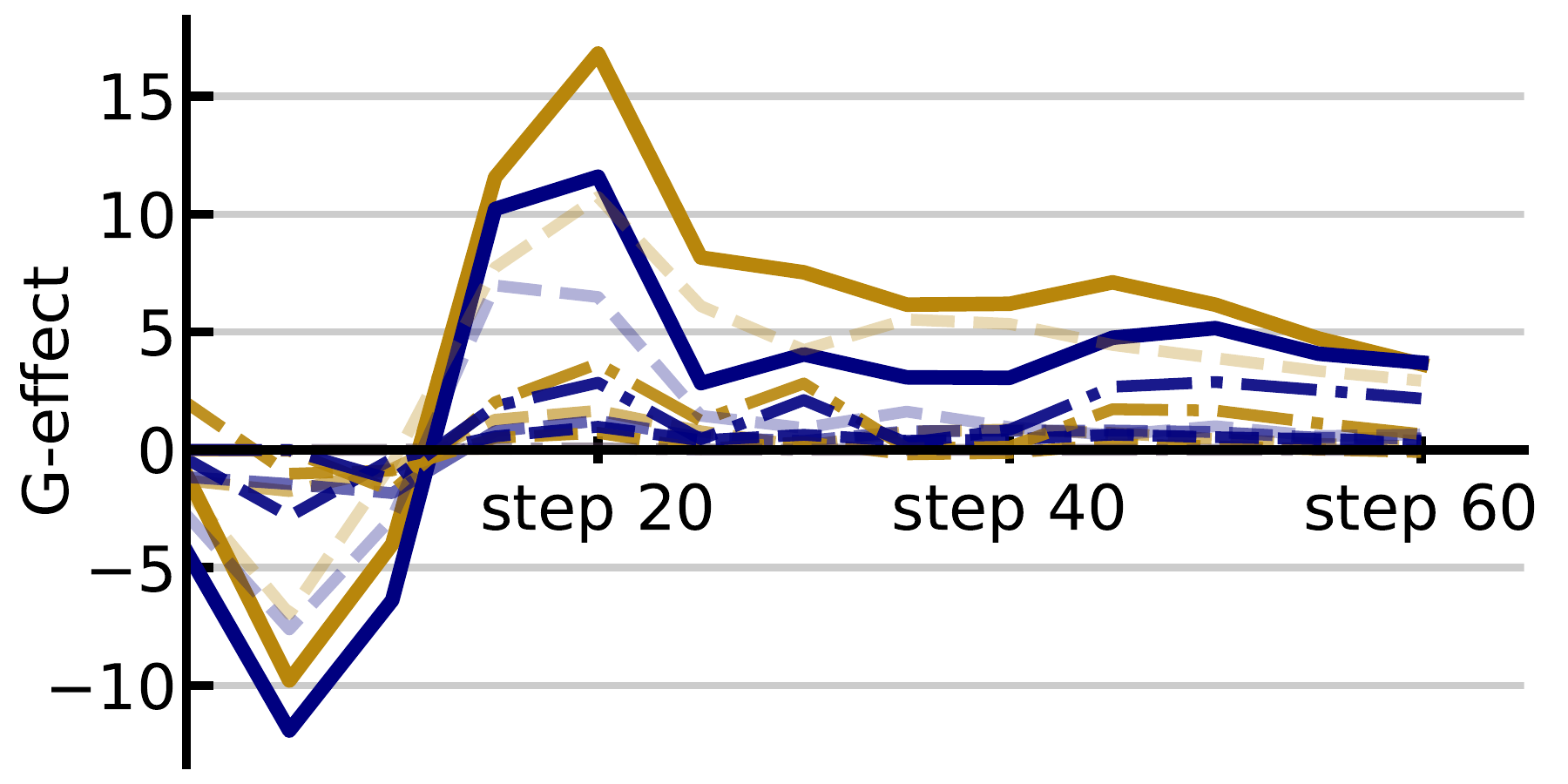}
  \end{center}
  \caption{\textbf{The G-effect for PO.} The \textbf{legends} for the G-effect are in Figure~\ref{fig: legend}. The \textbf{horizontal axis} denotes the unlearning step and the \textbf{vertical axis} is the values of the G-effect.}\label{fig: ge_idk}
\end{wrapfigure}

\textbf{Preference Optimization (PO)}~\citep{maini2024tofu} overwrites LLMs with new outcomes instead of erasing old ones. Given some prefix $s^{<i}$ and the new suffix $s_{\text{po}}$, the PO unlearning objective is given by
\begin{equation}
\mathbb{E}_{s_{\mathrm{u}}\sim\mathcal{D}_{\mathrm{u}}}-\log p(s_{\text{po}}|s^{<i};\boldsymbol{\theta}).
    \label{eq: po}
\end{equation}
It is particular suitable for LLMs fine-tuned for question answering, where $s^{<i}$ is the original question and $s_{\text{po}}$ is the new answer. We show its G-effect in Figure~\ref{fig: ge_idk}. Unfortunately, we note that the PO may not be suitable for LLM unlearning: Its validity in erasing targeted knowledge is limited to the early phases of model updating. Then, PO may even inadvertently facilitate the knowledge relearning. 

\textbf{Representation Misdirection for Unlearning (RMU)}~\citep{li2024wmdp} implements unlearning by perturbing model representation. Denote the embedding features by $\boldsymbol\phi(s;\boldsymbol{\theta})$, RMU is articulated as
\begin{equation}
    \mathbb{E}_{s_{\mathrm{u}}\sim\mathcal{D}_{\mathrm{u}}} \frac{1}{\vert s\vert-1}\sum_{i=1}^{\vert s\vert-1}\vert\vert\boldsymbol{\phi}(s^{<i};\boldsymbol{\theta})-c\cdot\boldsymbol{u}\vert\vert_2^2,
    \label{eq: rmu}
\end{equation}
where $\boldsymbol{u}$ is a random vector with elements sampled uniformly from $[0, 1)$ and $c$ is a scaling hyper-parameter. We adopt outputs for 11-th, 22-th, and 33-th (before unembedding) layers as $\boldsymbol\phi(s;\boldsymbol{\theta})$, and their G-effect is summarized in Figure~\ref{fig: rmu_effect}. We notice that its performance is very sensitive to different choices of $\boldsymbol\phi(s;\boldsymbol{\theta})$, where middle (22-th) layers seem to be a better choice than shallow (11-th) and deep (33-th) layers. In Appendix~\ref{app: rmu}, we further show that RMU is also sensitive to varying $c$, where a wrong configuration may be even completely contrary to the goal of unlearning. 

Moreover, we observe that the improvements on unlearning come at similar costs in terms of impairing the general utility, a phenomenon reminiscent of the challenges faced with the vanilla GA. 
It can also be considered as a scenario of excessive unlearning, where the magnitudes of parameter updates are too large, thus failing to preserve essential knowledge for common data. Given its current limitations, more explorations are needed to advance unlearning through embedding perturbation.

\begin{figure}
    \centering
    \subfigure[11-th layer ($c=5$)]{\includegraphics[width=0.32\textwidth]{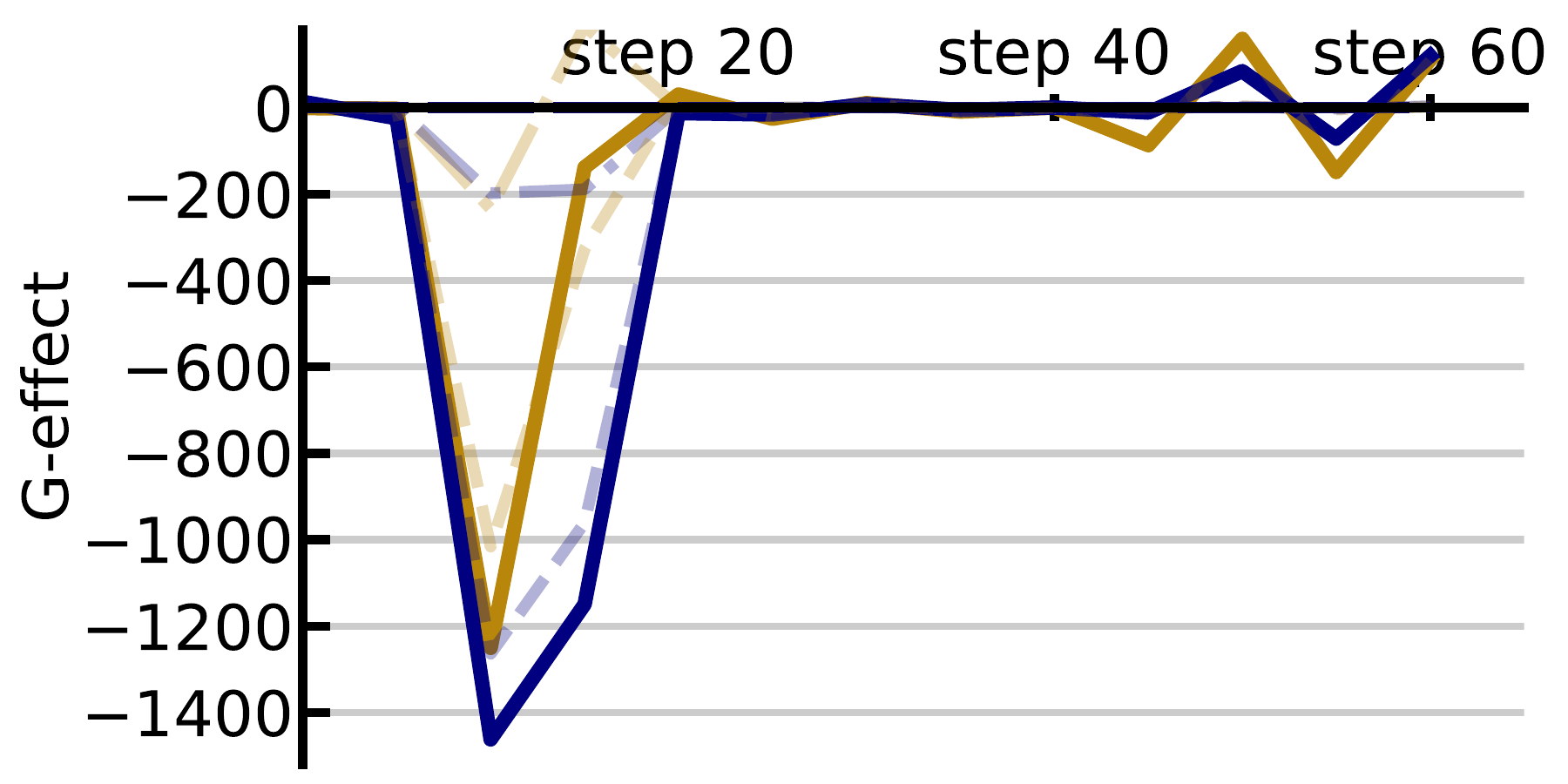}} ~
    \subfigure[22-th layer ($c=5$)]{\includegraphics[width=0.32\textwidth]{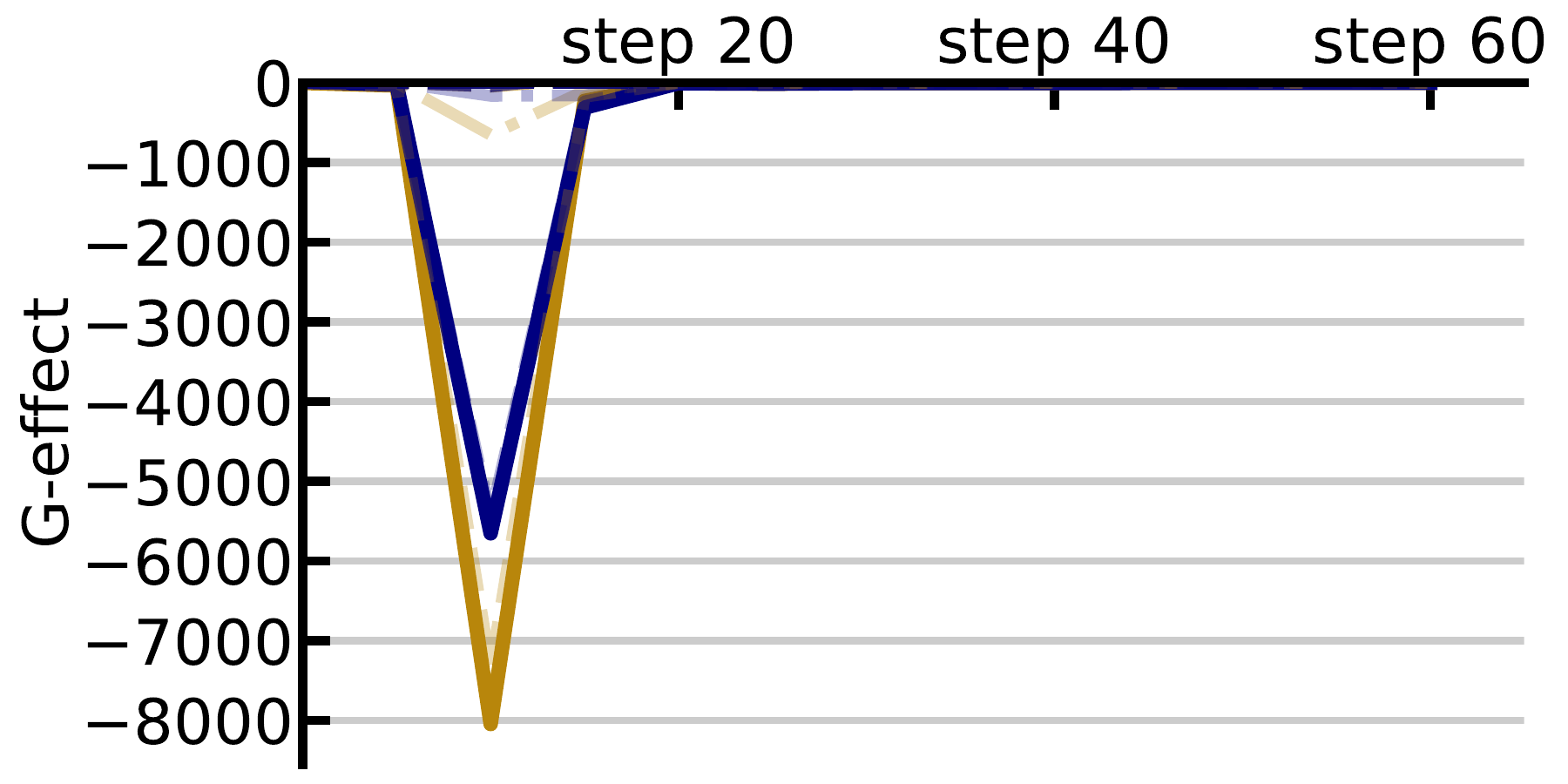}} ~
    \subfigure[33-th layer ($c=5$)]{\includegraphics[width=0.32\textwidth]{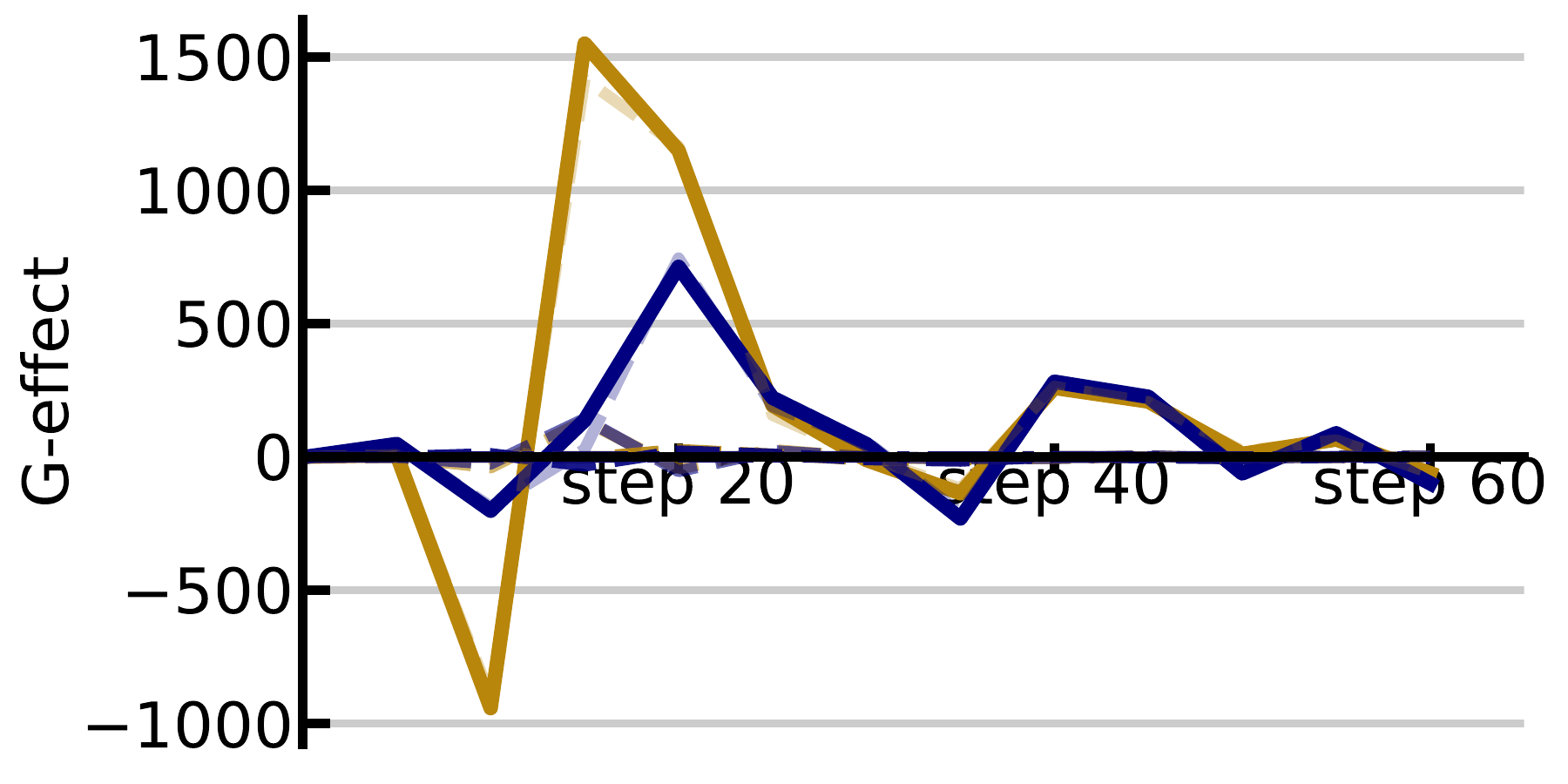}}
    \caption{\textbf{The G-effect for RMU}. The \textbf{legends} for the G-effect are summarized in Figure~\ref{fig: legend}. The \textbf{horizontal axis} denotes the unlearning step and the \textbf{vertical axis} indicates the values of the G-effect.}
    \label{fig: rmu_effect}
\end{figure}

\subsection{Regularization}
\label{sec: reg}
\begin{figure}
    \centering
    \subfigure[{GD}]{\includegraphics[width=0.32\textwidth]{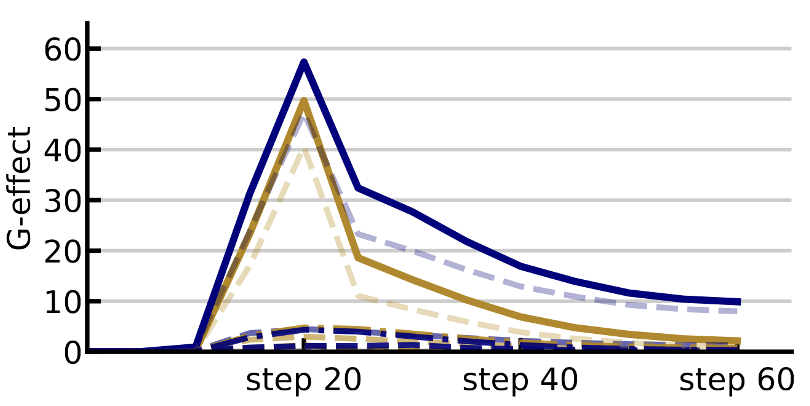}} ~
    \subfigure[KL]{\includegraphics[width=0.32\textwidth]{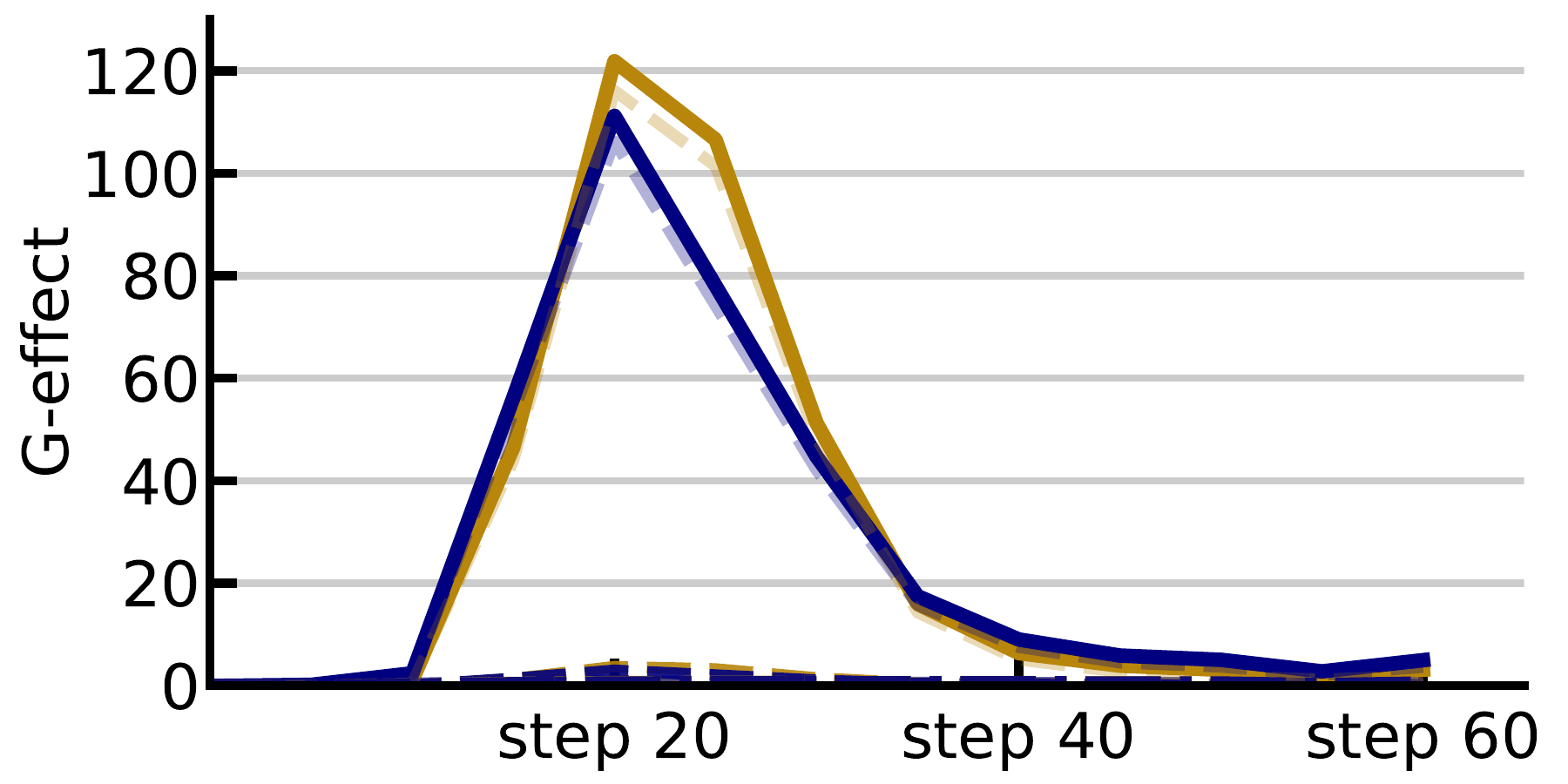}} ~
    \subfigure[RR]{\includegraphics[width=0.32\textwidth]{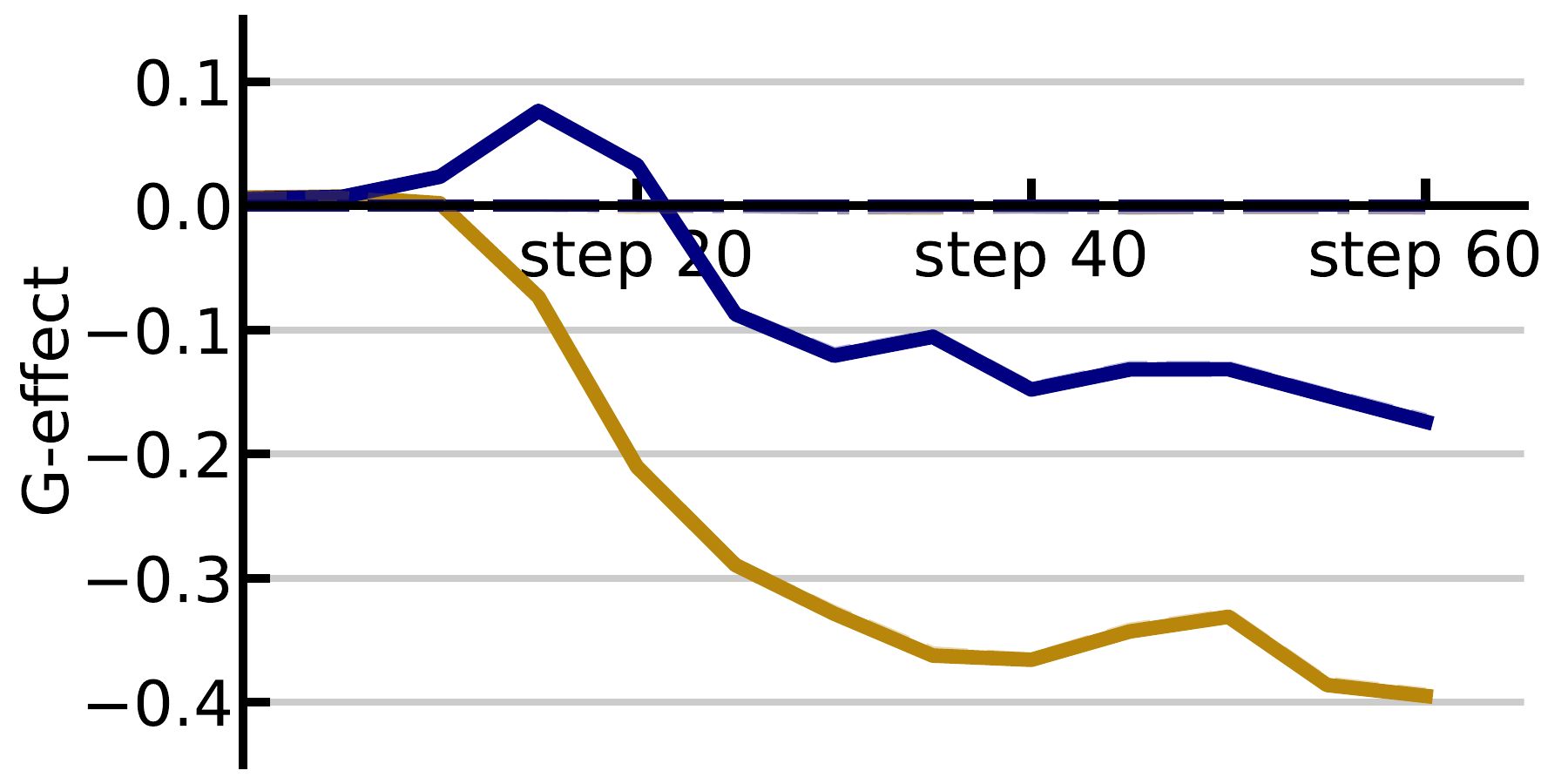}}
    \caption{\textbf{The G-effect for Regularization}. The \textbf{legends} for G-effect are in Figure~\ref{fig: legend}. The \textbf{horizontal axis} denotes the unlearning step and the \textbf{vertical axis} indicates the values of the G-effect.}
    \label{fig: reg_effect}
\end{figure}

Although we have identified several promising objectives, the retaining G-effect overall remains negative. It indicates that there are still adverse effects on the common model integrity. We also want to note that, while some of the magnitudes are steadily small, e.g., for the retaining G-effect of TNPO in Figure~\ref{fig: understand_npo}(c), their accumulation 
across steps will still have a notable impact.  
A wide-accepted strategy to improve retention is by regularization, involving a set of additional common data to maintain the original model responses. In this section, we explore 3 representative regularization terms, named gradient difference (GD)~\citep{yao2023large}, KL divergence (KL)~\citep{maini2024tofu}, and representation retention (RR)~\citep{li2024wmdp} (cf., Appendix~\ref{app: reg}). We choose NPO as the unlearning objective, computing the G-effect for various regularization terms. The results are summarized in Figure~\ref{fig: reg_effect}. 
Overall, our observations indicate that RR does not serve for effective regularization due to its unstable G-effect behaviors. In contrast, both GD and KL effectively facilitate knowledge retention. However, the strength of the G-effect associated with KL surpasses that of GD, leading us to suggest KL as a default choice of regularization for retention.

\section{Evaluations}
\label{sec: experiment}

In this section, we further benchmark aforementioned unlearning objectives on the TOFU unlearning datasets~\citep{maini2024tofu}, focusing on the removal of fictitious author profiles from LLMs fine-tuned on them. Comprising a series of question-answer pairs about author profiles, the TOFU dataset is further separated into targeted and non-targeted parts, thereby providing an intuitive platform to effectively evaluate the impact of various unlearning methods.

We test two popular LLMs: Phi-1.5~\citep{li2023textbooks} and Llama-2-7B~\citep{touvron2023llama2}, under three ratios---1\%, 5\%, and 10\%---of targeted data. 
For hyper-parameter tuning, we follow the unlearning with control (UWC) framework~\citep{wang2024unlearning}, which surpasses the challenges of trade-offs between removal and retention.
Please refer to Appendix~\ref{app: experiment setups} for additional details on the experimental setups and Appendix~\ref{app: tuning} for hyper-parameter configurations.

\textbf{Configurations.}
We default to apply the following settings: the AdamW optimizer~\citep{loshchilov2017decoupled}, a batch size of $16$, a maximal gradient norm of $1$, and the (un)learning rate of $2e^{-5}$ for Phi-1.5 and $1e^{-5}$ for Llama-2-7b with linear warm-up for the first epoch. Each method is executed over a total of 5 epochs.
Moreover, the model-specific hyper-parameters after fine-tuning are as follows:
{For the 1\% and 5\% setups}, we set $\alpha=5$ for WGA; $\beta=0.5$ for NPO; $\beta=4$ for TNPO; $\alpha=1.5$ and $\beta=5$ for WTNPO. {For the 10\% setup}, we set $\alpha=7$ for WGA; $\beta=0.5$ for NPO; $\beta=5$ for TNPO; $\alpha=1.5$ and $\beta=7$ for WTNPO. 
For RMU, we set the 9-th layer with $c=4$ for Phi-1.5 and the 21-th layer with $c=2$ for Llama-2-7B.  
Moreover, our experiments are conducted on computation nodes equipped with NVIDIA-A100-80GB GPUs and Intel(R) Xeon(R) Gold 6248R CPUs. The systems utilize Transformers version 4.42.4 and CUDA version 12.1.

% For evaluation metrics, we report the PS scores suggested by~\citep{wang2024unlearning}, which quantify the extent to which knowledge parameterized within unlearned models. We also report FQ and MU as in~\citep{maini2024tofu}. 

\textbf{Evaluation Metrics.} We adopt the suggest evaluation metrics from~\citep{maini2024tofu}, specifically forget quality (FQ) for unlearning and  model utility (MU) for retention. FQ evaluates model performance by jointly examining output quality, confidence, and truth ratio, fully reflecting the common model integrity. MU produces $p$-values to assess the change of model outputs between the gold standard model, which is trained from scratch without targeted data, and the unlearned model. We utilize the log-scale for these $p$-value to make the results more readable. We aim for high values in both FQ and MU.
% Nevertheless, FQ is not ideally suited for the LLM goals of unlearning, as discussed in Section~\ref{sec: preliminary}---it can occur that, despite extensive removal of targeted data, FQ remains small as model behaviors of unlearning are much stronger than those of the gold standard models.
We further report the ES scores~\citep{wang2024unlearning}, which more directly quantify the extent of knowledge parameterized within models, potentially making them more effective metrics than FQ and MU.
The ES scores can be calculated for either targeted data or non-targeted data, thereby reflecting model performance of removal and retention, respectively. 
Notably, the ES scores are available in two variants: ES-exact, which is used for original data to reflect direct parametrization, and ES-perturb, which applies to their rephrasing to reflect generalization. Overall, the ES scores should be high for retention and low for removal.

\begin{table}[t]
\centering
\caption{\textbf{Comparison between Unlearning Objectives} on TOFU with \textbf{KL regularization} to stabilize unlearning.  $\downarrow$ / $\uparrow$ indicate smaller / larger values are preferable. The log scale is used for FQ to improve readability. The top two results are in bold font for each unlearning setup. 
% The results with FQ are not highlighted, as it is a less meaningful metric that deviates the unlearning goals› of LLMs.
}\label{tab: ps main}\vspace{5pt}
\resizebox{0.99\textwidth}{!}{
\begin{tabular}{ccccccccccccccc}
\toprule[1.5pt]
\multicolumn{2}{c}{LLM} & \multicolumn{6}{c}{Phi-1.5} && \multicolumn{6}{c}{Llama-2-7B} \\ \cline{3-8}\cline{10-15}
\multirow{2}{*}{setup} & \multirow{2}{*}{method} &  \multicolumn{2}{c}{ES-exact} & \multicolumn{2}{c}{ES-perturb} & \multirow{2}{*}{MU $\uparrow$ } & \multirow{2}{*}{FQ $\uparrow$ } && \multicolumn{2}{c}{ES-exact} & \multicolumn{2}{c}{ES-perturb} & \multirow{2}{*}{MU $\uparrow$ } & \multirow{2}{*}{FQ $\uparrow$ }\\
& & retain $\uparrow$ & unlearn $\downarrow$ & retain $\uparrow$ & unlearn $\downarrow$ & & && retain $\uparrow$ & unlearn $\downarrow$ & retain $\uparrow$ & unlearn $\downarrow$ \\ 
\midrule[1.2pt]
\multicolumn{2}{c}{before unlearning} & 0.4433 & 0.5969 & 0.2115 & 0.1605 & 0.5232 & -5.8031 && 0.8277 & 0.8039 & 0.5302 & 0.4001  &0.6345 & -7.5930\\
\cline{3-8}\cline{10-15}
\multirow{7}{*}{1\%} 
& GA    & 0.1103 & 0.0530 & 0.0850 & 0.0828 & 0.3799 & -0.5471 && 0.4298 & \textbf{0.0570} & 0.2692 & \textbf{0.0422} & 0.5378 & -0.5471 \\
& PO    & \textbf{0.3667} & 0.8472 & 0.1622 & 0.3658 & 0.5112 & -4.2474 && \textbf{0.7508} & 0.8359 & \textbf{0.4724} & 0.5259 & 0.6246 & -5.8031 \\
& WGA   & 0.3629 & \textbf{0.0344} & \textbf{0.1857} & \textbf{0.0282} & \textbf{0.5191} & -0.5471 && {0.6701} & 0.0818 & 0.3814 & 0.0601 & \textbf{0.6541} & \textbf{-0.0847} \\
& NPO   & 0.2727 & 0.0916 & 0.1125 & 0.0733 & 0.4845 & -2.9162 && 0.4757 & 0.1216 & 0.3890 & 0.0905 & 0.6243 & -1.3254 \\
& TNPO  & 0.3351 & 0.0365 & 0.1239 & 0.0412 & 0.4991 & \textbf{-0.0847} && 0.5168 & \textbf{0.0337} & \textbf{0.4304} & \textbf{0.0337} & \textbf{0.6495} & \textbf{-0.0847} \\
& WTNPO & \textbf{0.4117} & \textbf{0.0285} & \textbf{0.1969} & \textbf{0.0255} & \textbf{0.5126} & \textbf{-0.2667} && \textbf{0.6701} & 0.0807 & 0.3734 & 0.0601 & 0.6453 & {-0.0847} \\
& RMU   & 0.2397 & 0.0850 & 0.1539 & 0.0567 & 0.4349 & -0.5471 && 0.2397 & 0.0850 & 0.1539 & 0.0567 & 0.5298 & -1.3254 \\
\midrule[0.8pt]
\multicolumn{2}{c}{before unlearning} & 0.4433 & 0.5619 & 0.2115 & 0.2374  &0.5232 & -29.6514&& 0.8277 & 0.7735 & 0.5302 & 0.4126  &0.6345 & -32.1330\\
\cline{3-8}\cline{10-15}
\multirow{7}{*}{5\%} 
& GA    & 0.0000 & \textbf{0.0000} & 0.0000 & \textbf{0.0000} & 0.0000 & -11.4040 && 0.0300 & \textbf{0.0000} & 0.0206 & \textbf{0.0000} & 0.0000 & \textbf{-12.4230} \\
& PO    & \textbf{0.2646} & 0.7986 & \textbf{0.1639} & 0.4925 & \textbf{0.5118} & -26.5061 && \textbf{0.5572} & 0.8437 & 0.3652 & 0.4933 & \textbf{0.6466} & -28.8476 \\
& WGA   & \textbf{0.2980} & 0.0179 & \textbf{0.1645} & 0.0199 & \textbf{0.5108} & \textbf{-1.3076} && 0.4709 & 0.0053 & \textbf{0.3982} & 0.0050 & \textbf{0.6438} & -16.3271 \\
& NPO   & 0.0876 & 0.1267 & 0.0876 & 0.0609 & 0.3841 & -7.7503 && 0.1747 & 0.0764 & 0.1273 & 0.0802 & 0.5285 & \textbf{-9.9550} \\
& TNPO  & 0.1695 & 0.0126 & 0.0803 & 0.0038 & 0.4673 & -2.1867 && \textbf{0.5017} & 0.0160 & 0.3495 & 0.0099 & 0.6348 & -32.1330 \\
& WTNPO & 0.2185 & 0.0179 & 0.1281 & 0.0188 & 0.5084 & {-3.2299} && 0.4595 & 0.0061 & \textbf{0.3989} & 0.0040 & 0.6410 & -21.4429 \\
& RMU   & 0.2162 & \textbf{0.0000} & 0.1299 & \textbf{0.0000} & 0.2744 & \textbf{-1.9514} && 0.1262 & \textbf{0.0000} & 0.1299 & \textbf{0.0000} & 0.5801 & -21.4429 \\
\midrule[0.8pt] 
\multicolumn{2}{c}{before unlearning} & 0.4433 & 0.4799 & 0.2115 & 0.1843  &0.5232 & -39.0042 && 0.8277 & 0.8307 & 0.5302 & 0.3099  &0.6345 & -44.4594\\
\cline{3-8}\cline{10-15}
\multirow{7}{*}{10\%} 
& GA    & 0.0000 & \textbf{0.0000} & 0.0000 & \textbf{0.0000} & 0.0000 & -45.2697 && 0.0000 & \textbf{0.0000} & 0.0000 & \textbf{0.0000} & 0.0000 & -20.8637 \\
& PO    & \textbf{0.3222} & 0.7321 & 0.1406 & 0.2667 & 0.5078 & -38.2556 && \textbf{0.5572} & 0.8437 & \textbf{0.3777} & 0.4305 & \textbf{0.6240} & -39.7604 \\
& WGA   & \textbf{0.3466} & \textbf{0.0000} & \textbf{0.1651} &\textbf{0.0000} & \textbf{0.5183} & -9.0636 && \textbf{0.6642} & 0.0287 & \textbf{0.4289} & \textbf{0.0123} & 0.6235 & {-24.8591} \\
& NPO   & 0.0859 & 0.0955 & 0.0716 & 0.0710 & 0.3878 & -10.5721 && 0.1296 & 0.1388 & 0.1085 & 0.1440 & 0.5055 & \textbf{-12.1912} \\
& TNPO  & 0.2085 & 0.0163 & 0.0991 & 0.0134 & 0.5009 & {-7.6651} && 0.4531 & \textbf{0.0192} & 0.2690 & 0.0165 & \textbf{0.6381} & \textbf{-13.4785} \\
& WTNPO & 0.2969 & 0.0048 & \textbf{0.1862} & 0.0105 & \textbf{0.5123} & \textbf{-7.0070} && 0.4997 & 0.0278 & 0.3246 & 0.0174 & {0.6236} & -26.6801 \\
& RMU   & 0.0317 & 0.0541 & 0.0357 & 0.0632 & 0.3163 & \textbf{-7.0070} && 0.2580 & 0.0194 & 0.2017 & 0.0174 & 0.5930 & -16.7271 \\
\bottomrule[1.5pt]
\end{tabular}}
\end{table}

\textbf{Analysis.} The results are summarized in Table~\ref{tab: ps main}, where we use KL regularization to stabilize the unlearning procedures. Among previous methods, PO is identified as the least attractive choice, which may even inadvertently maintain data that ought to be unlearned, corroborating our observations from the G-effect analysis. Conversely, GA is most effective in removing targeted data but at the expense of compromising model integrity. 
Both NPO and RMU offer a better balance between data removal and retention, with NPO overall outperforming RMU (except for 10\% unlearning with Llama-2-7B). This can be attributed to the more stable G-effect of NPO over that of RMU.

For our newly explored methods, WGA remarkably overcomes the drawbacks of GA, particularly in mitigating excessive unlearning, meanwhile maintaining its strong capability for the removal of targeted data. Additionally, both TNPO and WTNPO improve upon NPO by not only enhancing unlearning performance but also excelling in retaining common performance. WTNPO typically outperforms TNPO as it further mitigates the potential issues of excessive unlearning observed in TNPO. Overall, when comparing methods across different unlearning setups and models, WGA and WTNPO stand out as the most effective, underscoring the crucial role of loss weighting in the unlearning process for LLMs. However, we recommend the default use of WGA, as it requires tuning only one hyper-parameter and generally perform well, recognized as effective for LLM unlearning. 

It is worth noting that most of the above analyses are based on our results measured by ES. When it comes to MU and FQ, our refined methods can still outperform it previous counterparts in most cases, suggesting that the goals of full removal and influence removal are generally aligned. 
However, there is also an abnormal case of 5\% unlearning with Llama-2-7B, where the MU scores of our proposed methods remain high, while the FQ for NPO notably surpasses our methods. Given that our methods better preserve retention, we believe they offer more practical applicability than NPO. 
Nonetheless, the disagreement between ES and FQ metrics still deserves our further attention, necessitating deeper explorations into the reliability of these metrics and determining which goals, either full removal or influence removal, are better suitable for LLM unlearning. 

% When analyzing FQ, we find that our previous conclusions typically hold, except for the Llama-2-7B model under 5\% and 10\% unlearning setups, where the FQ values for NPO are better than those for WGA and TNPO. However, FQ measures the difference in model behaviors over the gold standard, which is trained without targeted data. In scenarios where the extent of removal after unlearning is much higher than that of the gold standard, this can also result in extremely low values of FQ. Jointly considering the results of ES, we find that the unlearned Llama-2-7B models have largely removed the targeted knowledge for WGA and TNPO, indicating that the above scenario in exceeding the gold standard occurs. Such results may further reflect that larger models have a greater capability to effectively unlearn targeted data.

\section{Conclusions}
LLM unlearning aims to eliminate unwanted knowledge while preserving the overall model integrity. This paper particularly focuses on understanding the mechanisms behind various unlearning objectives, based on our proposed evaluation tool named the G-effect. Our findings suggest that GA-based unlearning objectives remain to be promising, but we need to mitigate the risk of excessive unlearning and the potential harm on model integrity. We further introduce advanced unlearning objectives, such as WGA and TNPO, that set as new state-of-the-arts within the community. 

\textbf{Drawbacks of the G-effect.} As shown in Appendix~\ref{app: g-effect}, to motivate the G-effect, we assume that singular values of the matrix $A$ have low variance. However, it may neglect important model properties with unlearning smoothness. Refining the G-effect to better incorporate $A$ could make the evaluation scheme more accurate and insightful. However, its computation requires estimating the Hessian matrix, a tedious process that needs approximation~\citep{singh2020woodfisher}. Also, using NLL as the risk metric to define $\mathcal{R}$ may not be the optimal choice, given that model likelihood can be misleading to characterize the knowledge parametrization~\citep{duan2024membership}. 

\textbf{Promising Directions.} Although we achieve several powerful unlearning objectives, their practical implementations still require regularization for retention; otherwise, the common model integrity will be compromised. Thus, further enhancements in unlearning objectives are anticipated, such as devising improved weighting mechanisms~\citep{ren2018learning} and exploring robust representation methods. Beyond refining unlearning objectives, the investigation of advanced optimization approaches is also crucial, including sub-model updating~\citep{yao2024knowledge} and layer-adapted updating~\citep{schaul2013no}. On the data-oriented side, unlearning methods that incorporate filtering or prompting to foster improved G-effect  behaviors also be intriguing, while currently are not covered.

\section*{Ethic Statement and Reproducibility}

Unlearning mechanisms are crucial for LLMs, as they facilitate the removal of sensitive data that may lead to copyright and privacy violations, significantly boosting the overall confidentiality of models. By identifying and eradicating privacy risks, we fulfill the ethical obligation to respect individual privacy. Adapting LLMs to prevent the replication of sensitive information further aligns with the principles of responsible data use. In essence, the process of unlearning in LLMs enhances societal well-being by improving both the safety and legal compliance of these technologies.
Additionally, we benefit the research community by introducing a new analytical tool, the G-effect, designed to measures the comprehensive impacts of unlearning objectives on LLMs. This tool facilitates a detailed analysis of existing unlearning objectives and offers the potential to evaluate the efficacy of a broad range of new methods. The deployment of such a toolkit contributes to open inquiry and could encourage collaboration and further studies in this pivotal area.
For reproducibility, we have detailed the configurations, hyper-parameter setups, and hardware specifications. The code is publicly available at: \url{https://github.com/tmlr-group/G-effect}.

\section*{Acknowledgments}
JPZ is supported by grant from the Natural Sciences and Engineering Research Council of Canada (NSERC) (567916).
The authors also would like to express their sincere gratitude to the anonymous reviewers and the area chairs for their thorough review and constructive feedback. Their insightful comments and valuable suggestions have significantly enhanced the quality and clarity of this manuscript. We deeply appreciate their time and effort in helping us improve our work.

\bibliography{iclr2025_conference}
\bibliographystyle{iclr2025_conference}

\clearpage
\appendix

\section{A Formal Motivation for the G-effect}
\label{app: g-effect}

\textbf{Overview.} To formalize our key concept of the G-effect, we begin by examining the impacts of an unlearning objective $\mathcal{L}_{\mathrm{u}}$ on model parameters $\boldsymbol{\theta}$ with mini-batch gradient updates. We simplify the expression for the unlearned parameters $\boldsymbol{\theta}_{\mathrm{u}}$ such that it is independent of the intermediate parameter stages, cf. \eqref{eq: sgd_accu_unlearning}. Then, substituting the approximation of $\boldsymbol{\theta}_{\mathrm{u}}$ into $\mathcal{R}(\mathcal{D};\boldsymbol{\theta}_{\mathrm{u}})$, we observe that the change in model performance  can be primarily characterized by the dot product of gradients between the risk metric $\mathcal{R}$ and the unlearning objective $\mathcal{L}_{\mathrm{u}}$, cf. \eqref{eq: estimated perf}. Its generalized version leads to our G-effect in Definition~\ref{def: gradient effect}. Please see below for a formal description.

Without loss of generality, we consider an objective $\mathcal{L}_{\mathrm{u}}$ and a sequence of mini-batches $\{S_{\mathrm{u}}^{(t)}\}_T$ that are randomly drawn from $\mathcal{D}_{\mathrm{u}}$. These batches are sequentially fed in LLMs to minimize $\mathcal{L}_{\mathrm{u}}$. Specifically, for the $t$-th iteration, the model parameters are updated from $\boldsymbol{\theta}^{(t-1)}$ to $\boldsymbol{\theta}^{(t)}$ following
\begin{equation}
    \boldsymbol{\theta}^{(t)}\leftarrow\boldsymbol{\theta}^{(t-1)}-\texttt{lr}\nabla_{\boldsymbol{\theta}}\mathcal{L}_{\mathrm{u}}(S_{\mathrm{u}}^{(t-1)};\boldsymbol{\theta}^{(t-1)}),\label{eq: sgd_unlearning}
\end{equation}
with $\texttt{lr}$ the (un)learning rate. To understand the impacts of \eqref{eq: sgd_unlearning} on model parameters and subsequent effects on model performance, we further simplify the accumulative effects of gradient updates: When assuming $\texttt{lr}$ is small and each point in $\mathcal{D}_{\mathrm{u}}$ occurs $k$ times within $\{S_{\mathrm{u}}^{(t)}\}_T$, we can approximate the final parameters after unlearning as
\begin{equation}
\boldsymbol{\theta}^{(T)}\approx\boldsymbol{\theta}^{(0)}-\texttt{lr} k A \nabla_{\boldsymbol{\theta}}\mathcal{L}_{\mathrm{u}}(\mathcal{D}_{\mathrm{u}};\boldsymbol{\theta}^{(0)}). \label{eq: sgd_accu_unlearning}
\end{equation}
$A$ is a symmetric matrix associated with model smoothness and orders of mini-batches. Also, $A$ will converge to the identity matrix when $\alpha$ approaches 0. Please see below for the detailed derivations.

\begin{proposition} \label{theorem1}
    Given the original parameters $\boldsymbol{\theta}^{(0)}$ and the objective $\mathcal{L}$. During the stochastic gradient updates, the model will receive a sequence of $T$ random mini-batches of samples $\{S^{(t)}\}_T$, which will be fed into the model orderly via $\boldsymbol{\theta}^{(t)}\leftarrow\boldsymbol{\theta}^{(t-1)}-\texttt{lr}\nabla_{\boldsymbol{\theta}}\mathcal{L}(S^{(t-1)};\boldsymbol{\theta}^{(t-1)})$. With a small  $\texttt{lr}$, we can approximate the final parameters $\boldsymbol{\theta}^{(T)}$ after stochastic gradient updates as
    \begin{equation}
        \boldsymbol{\theta}^{(T)}\approx
        \boldsymbol{\theta}^{(0)}-\texttt{lr} A\sum_{t=0}^{T-1}\nabla_{\boldsymbol{\theta}}\mathcal{L}(S^{(t)};\boldsymbol{\theta}^{(0)}),
    \end{equation}
    where $A=I-\texttt{lr}\sum_{t=1}^{T-1}\nabla^2_{\boldsymbol{\theta}}\mathcal{L}(S^{(t)};\boldsymbol{\theta}^{(0)})$ and $I$ is the identify matrix. The matrix $A$ characterizes the smoothness with respect to $\mathcal{L}$, the impacts of $\texttt{lr}$, and the influence of ordering within $\{S^{(t)}\}_T$.
\end{proposition}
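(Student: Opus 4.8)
The plan is to prove the claim by induction on $T$ (or, equivalently, by unrolling the recursion once and discarding terms of order $\texttt{lr}^2$ or higher). The intuition is that to first order in $\texttt{lr}$, each intermediate parameter $\boldsymbol{\theta}^{(t)}$ differs from $\boldsymbol{\theta}^{(0)}$ only by an $O(\texttt{lr})$ quantity, so that Taylor-expanding the gradient $\nabla_{\boldsymbol{\theta}}\mathcal{L}(S^{(t)};\boldsymbol{\theta}^{(t)})$ around $\boldsymbol{\theta}^{(0)}$ and keeping only the zeroth- and first-order terms suffices. Concretely, I would write $\boldsymbol{\theta}^{(t)} = \boldsymbol{\theta}^{(0)} + \texttt{lr}\,\boldsymbol{\delta}^{(t)} + O(\texttt{lr}^2)$ for some displacement vectors $\boldsymbol{\delta}^{(t)}$ to be determined, and substitute this ansatz into the update rule \eqref{eq: sgd_unlearning}.

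The key steps, in order, are as follows. First, establish the base case: $\boldsymbol{\theta}^{(1)} = \boldsymbol{\theta}^{(0)} - \texttt{lr}\nabla_{\boldsymbol{\theta}}\mathcal{L}(S^{(0)};\boldsymbol{\theta}^{(0)})$ exactly, so $\boldsymbol{\delta}^{(1)} = -\nabla_{\boldsymbol{\theta}}\mathcal{L}(S^{(0)};\boldsymbol{\theta}^{(0)})$. Second, for the inductive step, use a first-order Taylor expansion of $\nabla_{\boldsymbol{\theta}}\mathcal{L}(S^{(t)};\cdot)$ about $\boldsymbol{\theta}^{(0)}$:
\begin{equation}
\nabla_{\boldsymbol{\theta}}\mathcal{L}(S^{(t)};\boldsymbol{\theta}^{(t)}) = \nabla_{\boldsymbol{\theta}}\mathcal{L}(S^{(t)};\boldsymbol{\theta}^{(0)}) + \nabla^2_{\boldsymbol{\theta}}\mathcal{L}(S^{(t)};\boldsymbol{\theta}^{(0)})\,(\boldsymbol{\theta}^{(t)}-\boldsymbol{\theta}^{(0)}) + O(\|\boldsymbol{\theta}^{(t)}-\boldsymbol{\theta}^{(0)}\|^2).
\end{equation}
Third, substitute the inductive hypothesis $\boldsymbol{\theta}^{(t)}-\boldsymbol{\theta}^{(0)} = O(\texttt{lr})$ so that the Hessian term contributes at order $\texttt{lr}$ and the remainder at order $\texttt{lr}^2$; then telescope the recursion $\boldsymbol{\theta}^{(t+1)} = \boldsymbol{\theta}^{(t)} - \texttt{lr}\nabla_{\boldsymbol{\theta}}\mathcal{L}(S^{(t)};\boldsymbol{\theta}^{(t)})$ from $t=0$ to $T-1$. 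Fourth, collect terms: the zeroth-order contributions sum to $-\texttt{lr}\sum_{t=0}^{T-1}\nabla_{\boldsymbol{\theta}}\mathcal{L}(S^{(t)};\boldsymbol{\theta}^{(0)})$, and the Hessian corrections, after dropping $O(\texttt{lr}^3)$ pieces and reorganizing, assemble into the prefactor $-\texttt{lr}\bigl(-\texttt{lr}\sum_{t=1}^{T-1}\nabla^2_{\boldsymbol{\theta}}\mathcal{L}(S^{(t)};\boldsymbol{\theta}^{(0)})\bigr)\sum_{t}\nabla_{\boldsymbol{\theta}}\mathcal{L}(S^{(t)};\boldsymbol{\theta}^{(0)})$, which combines with the leading term to give $-\texttt{lr}\,A\sum_{t=0}^{T-1}\nabla_{\boldsymbol{\theta}}\mathcal{L}(S^{(t)};\boldsymbol{\theta}^{(0)})$ with $A = I - \texttt{lr}\sum_{t=1}^{T-1}\nabla^2_{\boldsymbol{\theta}}\mathcal{L}(S^{(t)};\boldsymbol{\theta}^{(0)})$. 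Finally, remark that $A$ is symmetric since each Hessian is, that $A\to I$ as $\texttt{lr}\to 0$, and interpret the appearance of the $S^{(t)}$ from $t=1$ to $T-1$ (rather than all $T$) as reflecting the ordering dependence.

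The main obstacle — or rather, the main subtlety — is bookkeeping the order of approximation consistently: one must be careful that the Hessian correction terms, when accumulated across all $T$ steps, genuinely contribute at order $\texttt{lr}^2$ in total (so they survive in $A$) while all cross-terms and higher Taylor remainders are genuinely $O(\texttt{lr}^3)$ and can be dropped. This requires the implicit assumption that $T$ is fixed (or grows slowly) relative to $1/\texttt{lr}$ and that $\mathcal{L}$ is twice continuously differentiable with bounded Hessian in a neighborhood of $\boldsymbol{\theta}^{(0)}$; I would state these smoothness and scaling assumptions explicitly rather than hide them. A secondary point worth flagging is that the statement is an approximation ($\approx$), so the proof is really a first-order perturbative expansion, and the precise meaning of the error term should be spelled out (e.g., the discarded terms are $O(\texttt{lr}^2)$ uniformly, treating $T$ and the gradient/Hessian norms as constants). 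Everything else is routine telescoping and collection of terms.
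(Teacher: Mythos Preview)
Your approach---Taylor-expand each $\nabla_{\boldsymbol{\theta}}\mathcal{L}(S^{(t)};\boldsymbol{\theta}^{(t)})$ around $\boldsymbol{\theta}^{(0)}$, telescope the recursion, and retain terms through $O(\texttt{lr}^2)$---is essentially the same as the paper's, which phrases it as a two-step substitution generalized via recursive correction terms $\psi^{(t)}$ and then drops higher-order pieces in $\texttt{lr}$.

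One point you should not paper over in step four: the honest second-order expansion produces the \emph{triangular} double sum $\texttt{lr}^2\sum_{t=1}^{T-1}\nabla^2_{\boldsymbol{\theta}}\mathcal{L}(S^{(t)};\boldsymbol{\theta}^{(0)})\sum_{t'=0}^{t-1}\nabla_{\boldsymbol{\theta}}\mathcal{L}(S^{(t')};\boldsymbol{\theta}^{(0)})$, with inner upper limit $t-1$, not $T-1$. This does \emph{not} factor into $\bigl(\sum_{t} H_t\bigr)\bigl(\sum_{t'} g_{t'}\bigr)$ merely by discarding $O(\texttt{lr}^3)$ terms; replacing each partial inner sum by the full sum is an additional simplification at order $\texttt{lr}^2$. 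The paper's own proof makes exactly this move (its $\psi^{(t)}$ is written with inner limit $T-1$), so your derivation matches it, but your word ``reorganizing'' undersells what is actually happening there---if you want the error analysis you describe in your last paragraph to be honest, you should flag this replacement explicitly rather than fold it into bookkeeping.
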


\begin{proof}
We begin by showing parameter changes after two consecutive steps, i.e., from the $t$-th to the $t+2$-th step. Substituting $\boldsymbol{\theta}^{(t+1)}\leftarrow\boldsymbol{\theta}^{(t)}-\texttt{lr}\nabla_{\boldsymbol{\theta}}\mathcal{L}(S^{(t)};\boldsymbol{\theta}^{(t)})$ into $\boldsymbol{\theta}^{(t+2)}\leftarrow\boldsymbol{\theta}^{(t+1)}-\texttt{lr}\nabla_{\boldsymbol{\theta}}\mathcal{L}(S^{(t+1)};\boldsymbol{\theta}^{(t+1)})$, we can express the parameter update at $t+2$-th step in terms of $\boldsymbol{\theta}^{(t)}$ as
\begin{equation*}
    \boldsymbol{\theta}^{(t+2)}\leftarrow\boldsymbol{\theta}^{(t)}-\texttt{lr}\nabla_{\boldsymbol{\theta}}\mathcal{L}(S^{(t)};\boldsymbol{\theta}^{(t)})-\texttt{lr}\nabla_{\boldsymbol{\theta}}\mathcal{L}(S^{(t+1)};\boldsymbol{\theta}^{(t)}-\texttt{lr}\nabla_{\boldsymbol{\theta}}\mathcal{L}(S^{(t)};\boldsymbol{\theta}^{(t)})).
\end{equation*}
When further applying the first-order Taylor approximation around $\boldsymbol{\theta}^{(t)}$, we have
\begin{align*}
    \boldsymbol{\theta}^{(t+2)}\approx\boldsymbol{\theta}^{(t)}-\texttt{lr}\big[\nabla_{\boldsymbol{\theta}}\mathcal{L}(S^{(t)};\boldsymbol{\theta}^{(t)})+&\nabla_{\boldsymbol{\theta}}\mathcal{L}(S^{(t+1)};\boldsymbol{\theta}^{(t)})\\+&\nabla^2_{\boldsymbol{\theta}}\mathcal{L}(S^{(t+1)};\boldsymbol{\theta}^{(t)})(-\texttt{lr}\nabla_{\boldsymbol{\theta}}\mathcal{L}(S^{(t+1)};\boldsymbol{\theta}^{(t)}) )\big].
\end{align*}
The above formulation can be expanded to incorporating more updating steps: Considering the accumulations of gradient updating from the $0$-th to $T$-th steps, we have 
\begin{equation*}
    \boldsymbol{\theta}^{(T)}\approx\boldsymbol{\theta}^{(0)}-\texttt{lr}\sum_{t=0}^{T-1}\nabla_{\boldsymbol{\theta}}\mathcal{L}(S^{(t)};\boldsymbol{\theta}^{(0)})+\sum_{t=1}^{T-1}\psi^{(t)} \label{eq:theorem1_1}
\end{equation*}
where $\psi^{(t)}=-\texttt{lr}\nabla^2_{\boldsymbol{\theta}}\mathcal{L}(S^{(t)};\boldsymbol{\theta}^{(0)})\big(-\texttt{lr}\sum_{t'=0}^{T-1}\nabla_{\boldsymbol{\theta}}\mathcal{L}(S^{(t')};\boldsymbol{\theta}^{(0)})+\sum_{t'=0}^{T-1}\psi^{(t')}\big)$ and $\psi^{(0)}=0$. When the learning rate $\texttt{lr}$ is small (e.g., notably less than 1), the influence of higher-order terms with respect to $\texttt{lr}$ diminishes. Therefore, we can further simplify the formulation of $\psi^{(t)}$ as $\psi^{(t)}\approx\texttt{lr}^2\nabla^2_{\boldsymbol{\theta}}\mathcal{L}(S^{(t)};\boldsymbol{\theta}^{(0)})\sum_{t'=0}^{T-1}\nabla_{\boldsymbol{\theta}}\mathcal{L}(S^{(t')};\boldsymbol{\theta}^{(0)})$.
Substituting the approximation of $\psi^{(t)}$ back into the formulation of $\boldsymbol{\theta}^{(T)}$, we complete the proof. The
analysis is motivated by~\citep{thudi2022unrolling}.
\end{proof}

\textbf{What Ensures a Good Unlearning Objective?} We go beyond \eqref{eq: sgd_accu_unlearning} and substitute it into $\mathcal{R}(\mathcal{D};\boldsymbol{\theta}_{\mathrm{u}})$. When the difference between the unlearned model $\boldsymbol{\theta}_{\mathrm{u}}$ and the original model $\boldsymbol{\theta}_{\mathrm{o}}$ is acceptably small, we can apply the first-order Taylor expansion upon $\mathcal{R}(\mathcal{D};\boldsymbol{\theta}_{\mathrm{u}})$, which can help us to simplify the formulation of the performance change by
\begin{equation}
    \mathcal{R}(\mathcal{D};\boldsymbol{\theta}_{\mathrm{u}})-\mathcal{R}(\mathcal{D};\boldsymbol{\theta}_{\mathrm{o}})\approx-\texttt{lr} k \nabla_{\boldsymbol{\theta}}\mathcal{R}(\mathcal{D};\boldsymbol{\theta}_{\mathrm{o}})^\top A \nabla_{\boldsymbol{\theta}}\mathcal{L}_{\mathrm{u}}(\mathcal{D}_{\mathrm{u}};\boldsymbol{\theta}_{\mathrm{o}}).
\end{equation}
One step further, by eigenvalue decomposition, $\nabla_{\boldsymbol{\theta}}\mathcal{R}(\mathcal{D};\boldsymbol{\theta}_{\mathrm{o}})^\top A \nabla_{\boldsymbol{\theta}}\mathcal{L}_{\mathrm{u}}(\mathcal{D}_{\mathrm{u}};\boldsymbol{\theta}_{\mathrm{o}})$ is lower and upper bounded by $\lambda_{\text{min}}\vert\vert\nabla_{\boldsymbol{\theta}}\mathcal{R}(\mathcal{D};\boldsymbol{\theta}_{\mathrm{o}})\vert\vert~\vert\vert\nabla_{\boldsymbol{\theta}}\mathcal{L}_{\mathrm{u}}(\mathcal{D}_{\mathrm{u}};\boldsymbol{\theta}_{\mathrm{o}})\vert\vert$ and $\lambda_{\text{max}}\vert\vert\nabla_{\boldsymbol{\theta}}\mathcal{R}(\mathcal{D};\boldsymbol{\theta}_{\mathrm{o}})\vert\vert~\vert\vert\nabla_{\boldsymbol{\theta}}\mathcal{L}_{\mathrm{u}}(\mathcal{D}_{\mathrm{u}};\boldsymbol{\theta}_{\mathrm{o}})\vert\vert$. $\lambda_{\text{min}}$ and $\lambda_{\text{max}}$ are the minimal and the maximal eigenvalues of $A$.
Furthermore, when $\alpha$ is small, the difference between $\lambda_{\text{min}}$ and $\lambda_{\text{max}}$ is negligible, thus existing $\lambda\in[\lambda_{\text{min}},\lambda_{\text{max}}]$ such that $\lambda \nabla_{\boldsymbol{\theta}}\mathcal{R}(\mathcal{D};\boldsymbol{\theta}_{\mathrm{o}})^\top\nabla_{\boldsymbol{\theta}}\mathcal{L}_{\mathrm{u}}(\mathcal{D}_{\mathrm{u}};\boldsymbol{\theta}_{\mathrm{o}})$ is a good approximation of $\nabla_{\boldsymbol{\theta}}\mathcal{R}(\mathcal{D};\boldsymbol{\theta}_{\mathrm{o}})^\top A \nabla_{\boldsymbol{\theta}}\mathcal{L}_{\mathrm{u}}(\mathcal{D}_{\mathrm{u}};\boldsymbol{\theta}_{\mathrm{o}})$. Thus, 
\begin{equation}
    \mathcal{R}(\mathcal{D};\boldsymbol{\theta}_{\mathrm{u}})-\mathcal{R}(\mathcal{D};\boldsymbol{\theta}_{\mathrm{o}})\approx-\texttt{lr} k \lambda \nabla_{\boldsymbol{\theta}}\mathcal{R}(\mathcal{D};\boldsymbol{\theta}_{\mathrm{o}})^\top \nabla_{\boldsymbol{\theta}}\mathcal{L}_{\mathrm{u}}(\mathcal{D}_{\mathrm{u}};\boldsymbol{\theta}_{\mathrm{o}}).\label{eq: estimated perf}
\end{equation}
Moreover, when taking $\texttt{lr} k \lambda$ as a constant, we conclude that the dot product between $\nabla_{\boldsymbol{\theta}}\mathcal{R}(\mathcal{D};\boldsymbol{\theta}_{\mathrm{o}})$ and $\nabla_{\boldsymbol{\theta}}\mathcal{L}_{\mathrm{u}}(\mathcal{D}_{\mathrm{u}};\boldsymbol{\theta}_{\mathrm{o}})$ quantifies the impacts of $\mathcal{L}_{\mathrm{u}}$ on model performance measured by $\mathcal{R}(\mathcal{D};\boldsymbol{\theta}_{\mathrm{u}})$. 
Specifically, echoing the general goal of LLM unlearning in Section~\ref{sec: preliminary}, we can claim that a good unlearning objective should meet the following two conditions jointly:
\begin{itemize}[leftmargin=15pt]
    \item\textbf{Removal.} We define $e_{\mathrm{u}}=\nabla_{\boldsymbol{\theta}}\mathcal{R}(\mathcal{D}_{\mathrm{u}};\boldsymbol{\theta}_{\mathrm{o}})^\top\nabla_{\boldsymbol{\theta}}\mathcal{L}_{\mathrm{u}}(\mathcal{D}_{\mathrm{u}};\boldsymbol{\theta}_{\mathrm{o}})$, which should be much smaller than 0. It ensures the sufficient removal for targeted data, i.e., $\mathcal{R}(\mathcal{D}_{\mathrm{u}};\boldsymbol{\theta}_{\mathrm{u}})\gg\mathcal{R}(\mathcal{D}_{\mathrm{u}};\boldsymbol{\theta}_{\mathrm{o}})$.
    \item\textbf{Retention.} We define $e_{\mathrm{r}}=\nabla_{\boldsymbol{\theta}}\mathcal{R}(\mathcal{D}\backslash\mathcal{D}_{\mathrm{u}};\boldsymbol{\theta}_{\mathrm{o}})^\top\nabla_{\boldsymbol{\theta}}\mathcal{L}_{\mathrm{u}}(\mathcal{D}_{\mathrm{u}};\boldsymbol{\theta}_{\mathrm{o}})$, which should be greater than or equal to 0, ensuring performance on common data will not reduce, i.e., $\mathcal{R}(\mathcal{D}_{\mathrm{t}}\backslash\mathcal{D}_{\mathrm{u}};\boldsymbol{\theta}_{\mathrm{u}})\le\mathcal{R}(\mathcal{D}_{\mathrm{t}}\backslash\mathcal{D}_{\mathrm{u}};\boldsymbol{\theta}_{\mathrm{o}})$.
\end{itemize}

Although $e_{\mathrm{u}}$ and $e_{\mathrm{r}}$ can anticipate performance changes following a sequence of gradient updates based on $\mathcal{L}_{\mathrm{u}}$, their validity  relies  on our assumption that the difference between $\boldsymbol{\theta}_{\mathrm{o}}$ and $\boldsymbol{\theta}_{\mathrm{u}}$ remains small. Otherwise, the first-order Taylor approximation may introduce large errors. Therefore, we generalize $e_{\mathrm{u}}$ and $e_{\mathrm{r}}$ to make its expression depend on particular updating steps, thereby leading to our definition of the G-effect in Section~\ref{sec: g-effect}.

\textbf{Connection with Previous Works.}
The G-effect resembles the formulation of influence functions~\citep{koh2017understanding} when assessing the inherent influence of training data on test data via their respective gradients. However, there are also substantial differences between them. First, the G-effect primarily explores the roles of objectives in shaping performance, whereas influence functions focus on the impact of individual data points or features on performance.
Moreover, the G-effect is derived from the first-order approximation of the SGD dynamics, while influence functions are computed by the linearization of optimal solutions and are based on the average marginal contributions, thus serving completely different purposes. 

\textbf{Further Discussions.}  In Figure~\ref{fig: motivation}, the gradient behaviors are divided into four distinct regions:
\begin{itemize}
    \item 
    \textbf{Region 1} (Blue Region not intersecting with Red): Objectives with gradients in this region excel at retention but are not effective at unlearning.
    \item \textbf{Region 2} (Intersection of Red and Blue Regions): Objectives here are effective at unlearning but struggle with retention.
    \item \textbf{Region 3} (Red Region not intersecting with Blue): Objectives in this region demonstrate proficiency in both unlearning and retention.
    \item \textbf{Region 4} (White Region): Objectives here are ineffective at both unlearning and retention.
\end{itemize}
Overall, Regions 1 and 3 exhibit a trade-off between unlearning and retention. Region 2 contains ideal objectives for unlearning, whereas Region 4 is unsuitable for unlearning objectives. 

Our experiments further substantiate our claims to be correct. For example, when comparing the unlearning G-effects of NPO and GA in Figures~\ref{fig: ga_and_wga_effect}-\ref{fig: npo_ge}, GA exhibits a greater magnitude compared to NPO, indicating a stronger capability of GA in removing targeted data, further evidenced by the results in Table~\ref{tab: ps main}. Similarly, when comparing the retaining G-effects between GA and WGA in Figure~\ref{fig: ga_and_wga_effect}, the effect magnitude for WGA is notably smaller than that of GA, demonstrating the superior ability of NPO to maintain original performance, also detailed in Table~\ref{tab: ps main}.

\clearpage

\section{Experimental Setups}
\label{app: experiment setups}
We provide more detailed information about our experimental setups.

\subsection{TOFU Benchmarks}
\label{app: tofu benchmarks}
Our evaluations are based on TOFU fictitious unlearning~\citep{maini2024tofu}, focusing on LLMs fine-tuned with a series of fictitious authors profiles. These profiles were created by prompting GPT-4~\citep{achiam2023gpt}, which has been filtered to avoid the occurrence of any real author profile, thus mitigating the inadvertent impacts of other unrelated variates. For each fictitious profile, TOFU crafted 20 question-answer pairs that can be used for fine-tuning, along with their paraphrased versions for evaluations. 

The pre-trained LLMs are further fine-tuned on such question-answer pairs, where we consider two popular LLMs, i.e., Phi-1.5~\citep{li2023textbooks} and Llama-2-7B~\citep{touvron2023llama} of their question-answering versions. For the unlearning setups, the original TOFU data are separated into targeted and non-targeted parts, of which the adopted proportions are 1:99 (1\% unlearning), 5:95 (5\% unlearning), and 10:90 (10\% unlearning). Moreover,  we separate 400 non-targeted data that are not involved during the unlearning procedure for evaluations, reflecting real-world situations where it is not feasible to go through all non-targeted data during the unlearning process.

\subsection{UWC Hyper-parameter Tuning}

We need to ensure common model integrity when conducting unlearning, but these two goals are often conflicting, failing to align with their Pareto frontiers~\citep{maini2024tofu}. It leads to the dilemma when comparing across unlearned models: Some models may excel at unlearning while others better maintain the overall integrity, making it hard to judge which one is overall better. 

The unlearning with control (UWC)~\citep{wang2024unlearning} framework offers an interesting solution. It allows for the adjustment of model parameters post-unlearning by mixing them with that before unlearning. By proper control of this mixture, different unlearned models can achieve comparable levels of common performance with minimal compromise on their extent of unlearning. Thereafter, we can compare between models by focusing on assessing their unlearning performance. During hyper-parameter tuning, we adopt the KL regularization to stabilize the unlearning procedure, ensuring the results to be general. In UWC, we permit a maximum performance reduction of 10\% for Phi-1.5 and 5\% for Llama-2-7B, following the default configuration adopted in~\citep{wang2024unlearning}. 

\subsection{Evaluation Metrics}

We consider the extraction strength (ES) as suggested by~\citep{wang2024unlearning}, which quantifies the amount of additional information required to fully restore the original outputs after unlearning. ES is calculated differently depending on data types, for either the original data (ES-exact) or their rephrased version (ES-perturb). For the purpose of removal, ES should be evaluated for data targeted to be unlearned, where lower values signify a stronger unlearning capability. Conversely, for the goal of retention, ES should be assessed for other common data, wherein higher values indicate the model integrity is more preserved. 
We further report on the evaluation metrics proposed by~\citep{maini2024tofu}, specifically MU and FQ. The MU metric is a composite measure designed to assess model integrity, encapsulating confidence in generating authentic outputs, the similarity between original and current outputs, and the probability ratio between correct and incorrect outputs. Generally, a higher MU is preferable. Moreover, FQ quantifies the effectiveness of unlearning by conducting a statistical test to compare the distribution of model outputs before and after unlearning, where, typically, a larger FQ value signifies more effective unlearning. Note that the log scale is used for FQ to make the results more readable. 

\clearpage

\section{More Discussions for Existing Unlearning Objectives} \label{app: more for existing objectives}
We present more results for the G-effect of GA, NPO, and RMU. 
\subsection{GA}
\label{app: ga}

\begin{figure}
    \centering
    \subfigure[G-effect]{\includegraphics[width=0.32\textwidth]{ga_effect_full_v2.pdf}} 
    \subfigure[Risk]{\includegraphics[width=0.32\textwidth]{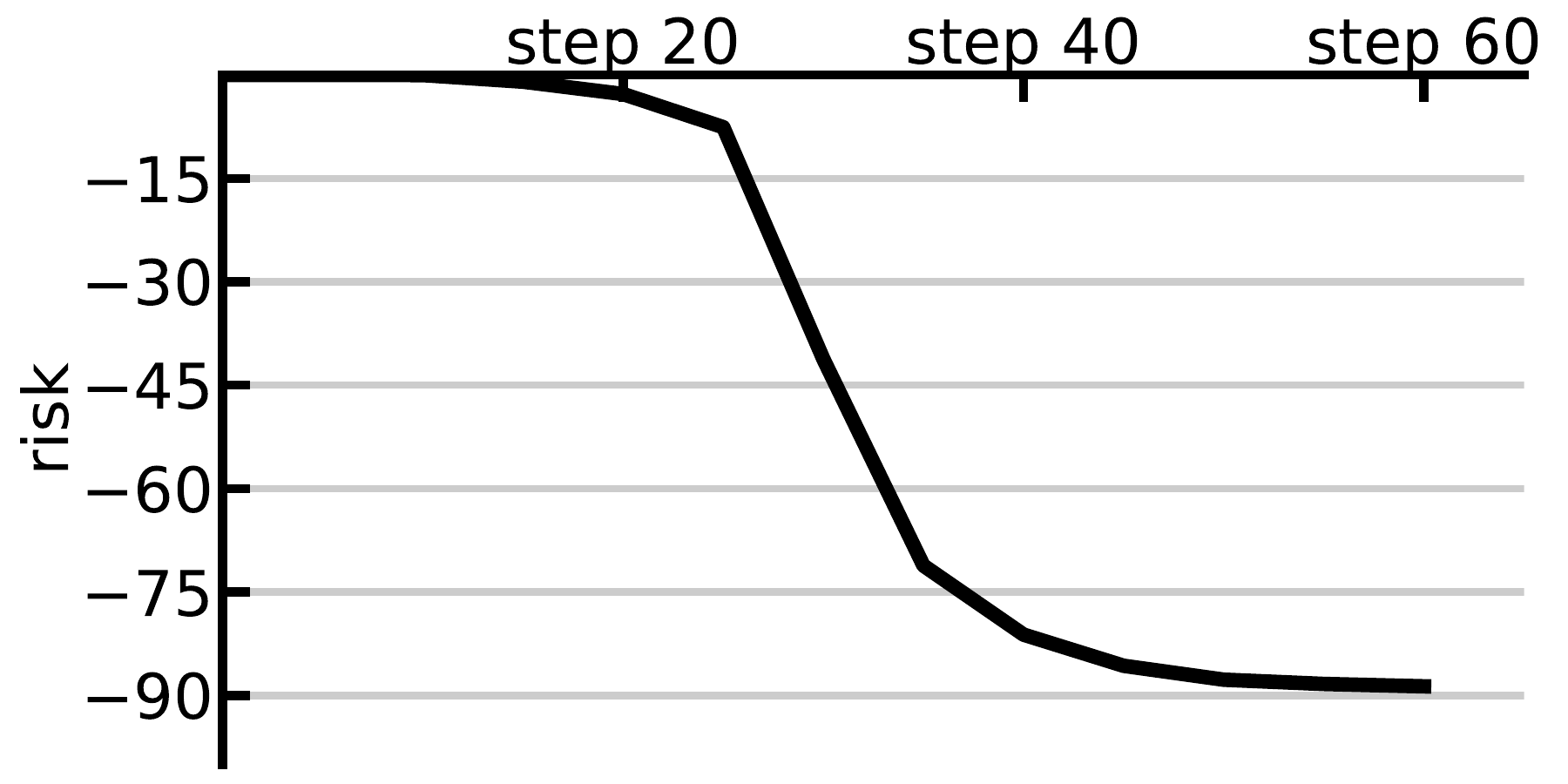}}
    \subfigure[Inverse Confidence]{\includegraphics[width=0.32\textwidth]{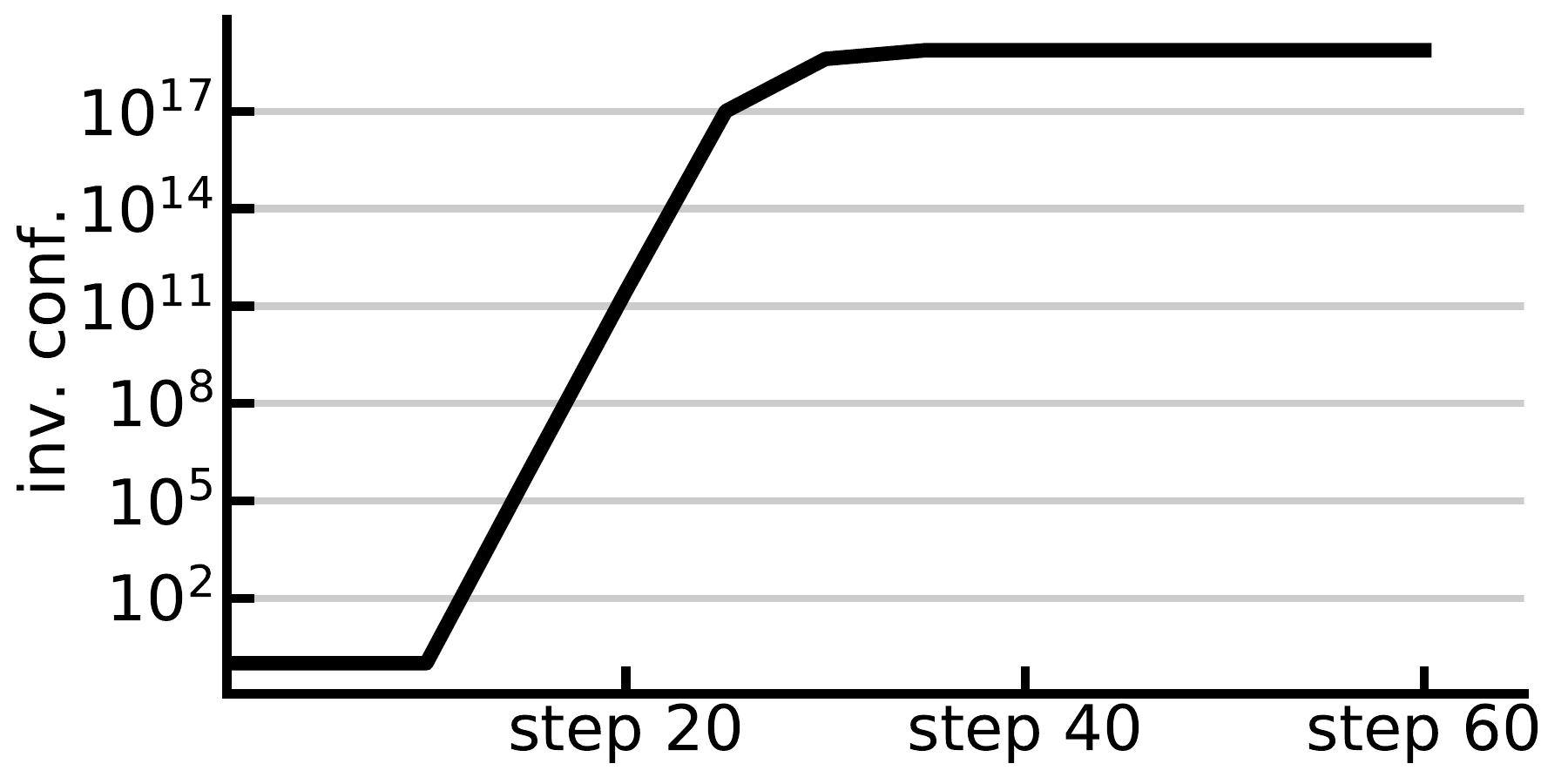}}
    \caption{\textbf{The Unlearning Dynamics for GA.} We illustrate the G-effect throughout the GA procedure in (a), the unlearning risk in (b), and the inverse confidence (inv. conf.) in (c).  
    }
    \label{fig: ga_effect_appendix}
\end{figure}

\begin{figure}
    \centering
    \subfigure[range $-3.5\times10^5$ to 0]{\includegraphics[width=0.32\textwidth]{ga_effect_full_v2.pdf}} 
    \subfigure[range $-3\times10^4$ to 0]{\includegraphics[width=0.32\textwidth]{ga_effect_v2.pdf}}
    \subfigure[range -600 to 0]{\includegraphics[width=0.32\textwidth]{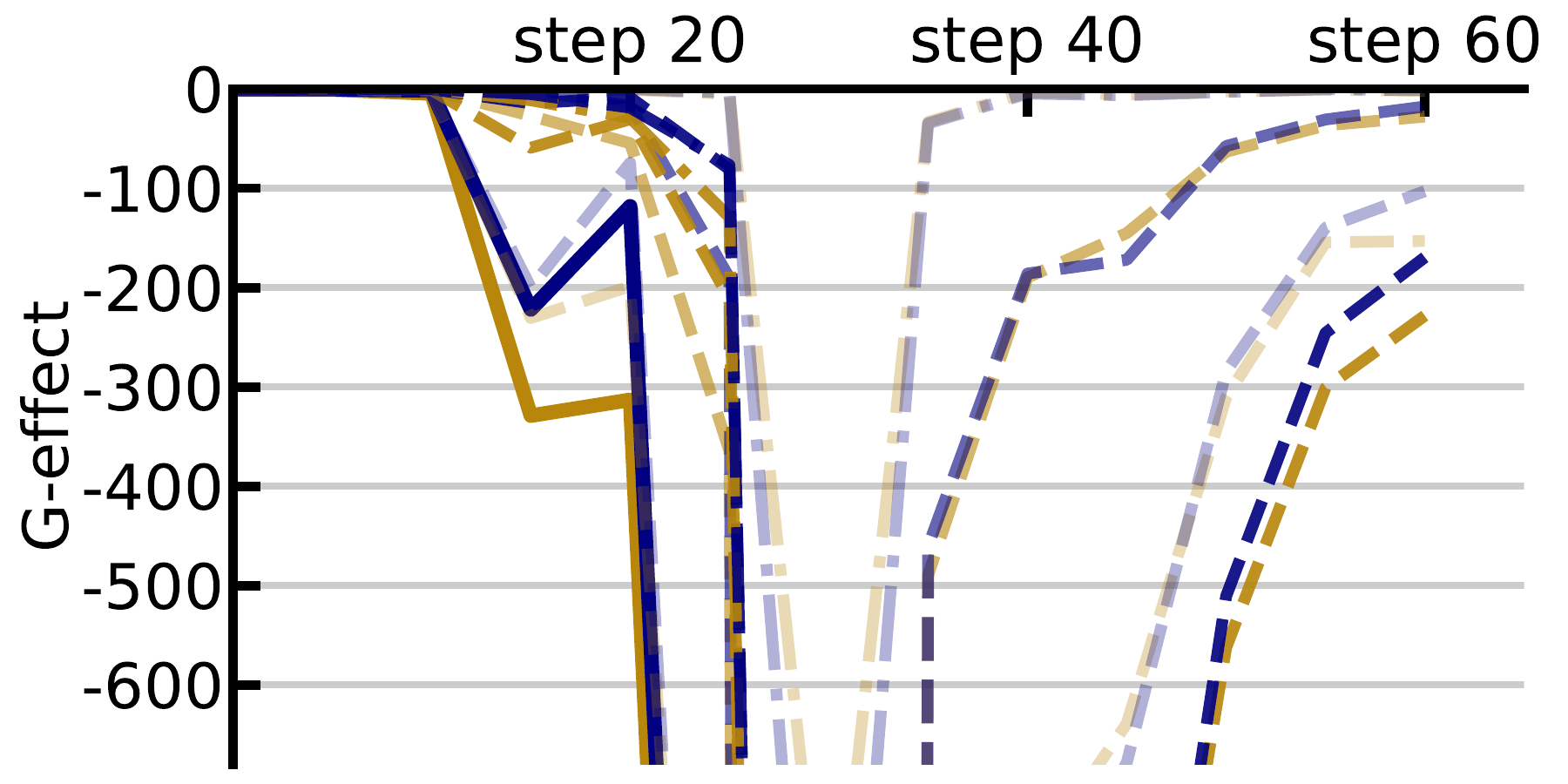}}
    \caption{\textbf{The G-effect for GA.} Different ranges are considered for varying levels of clarity. }
    \label{fig: ga_effect_different_scale_appendix}
\end{figure}

\begin{figure}[t]
    \centering
    \subfigure[{G-effect ($\beta=0.1$)}]{\includegraphics[width=0.32\textwidth]{npo_effect_5_p1_v3.png}} 
    \subfigure[{Risk ($\beta=0.1$)}]{\includegraphics[width=0.32\textwidth]{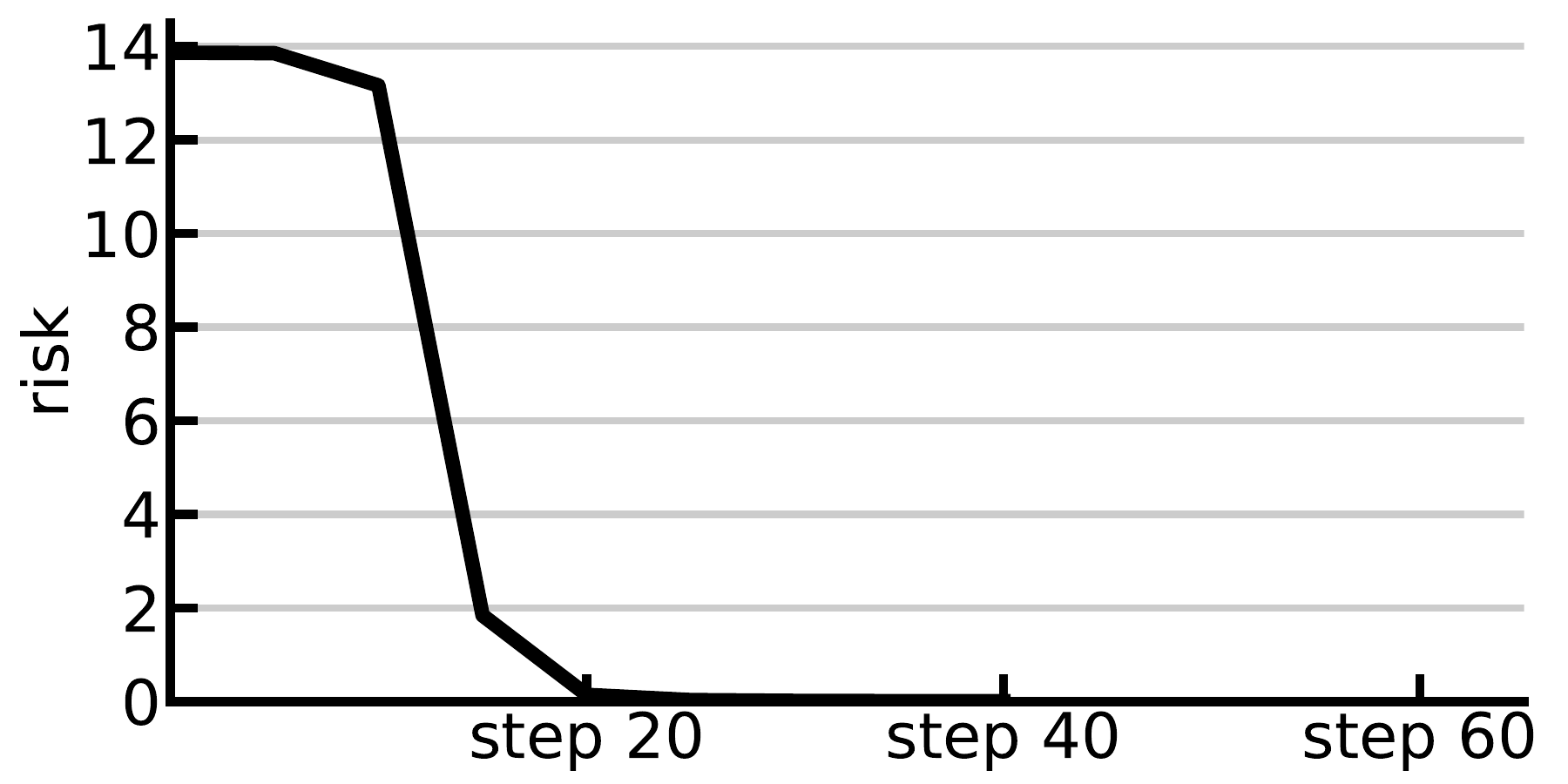}}
    \subfigure[Weights ($\beta=0.1$)]{\includegraphics[width=0.32\textwidth]{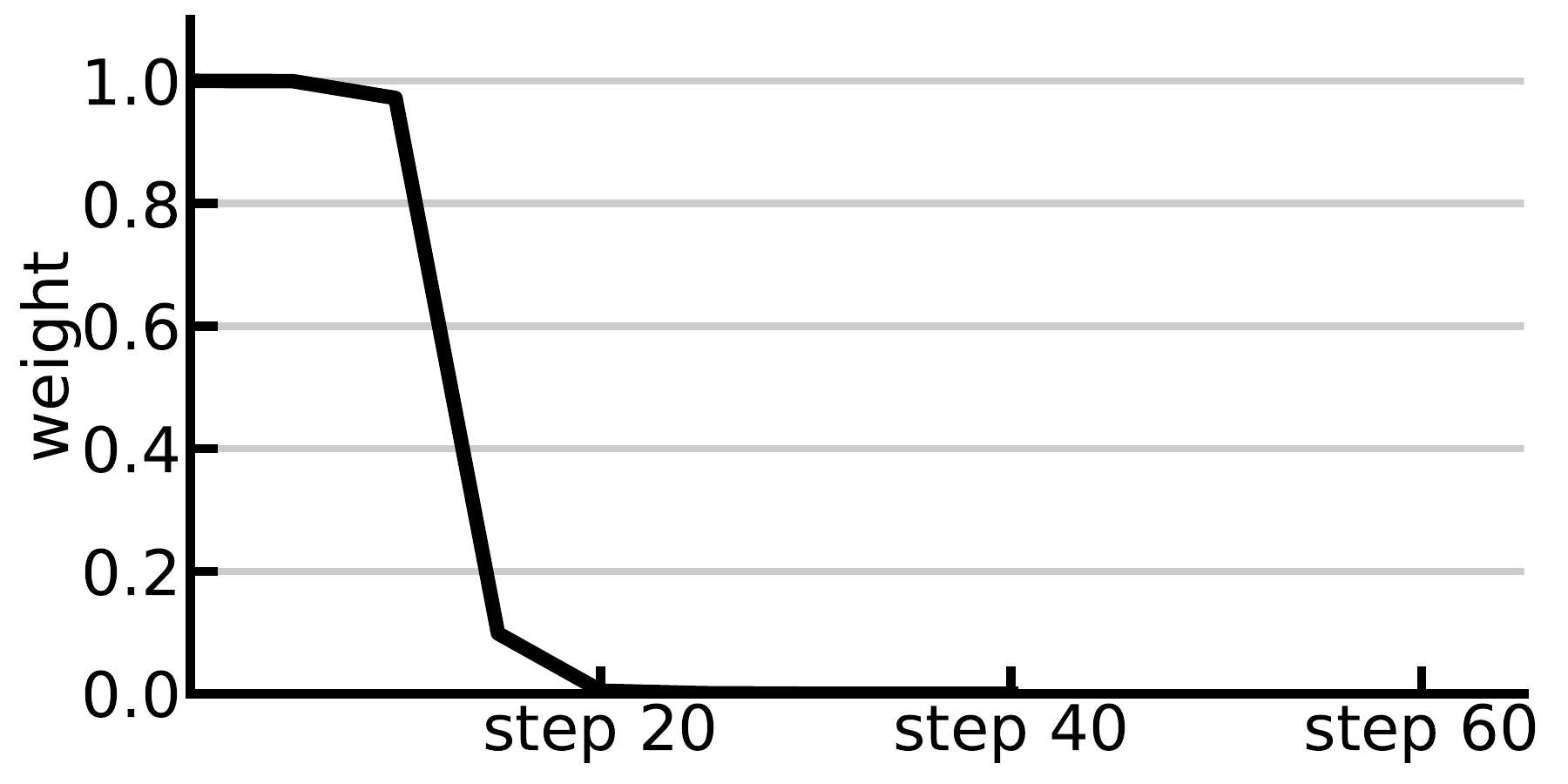}}
    \subfigure[G-effect ($\beta=1$)]{\includegraphics[width=0.32\textwidth]{npo_effect_5_1_v2.pdf}} 
    \subfigure[Risk ($\beta=1$)]{\includegraphics[width=0.32\textwidth]{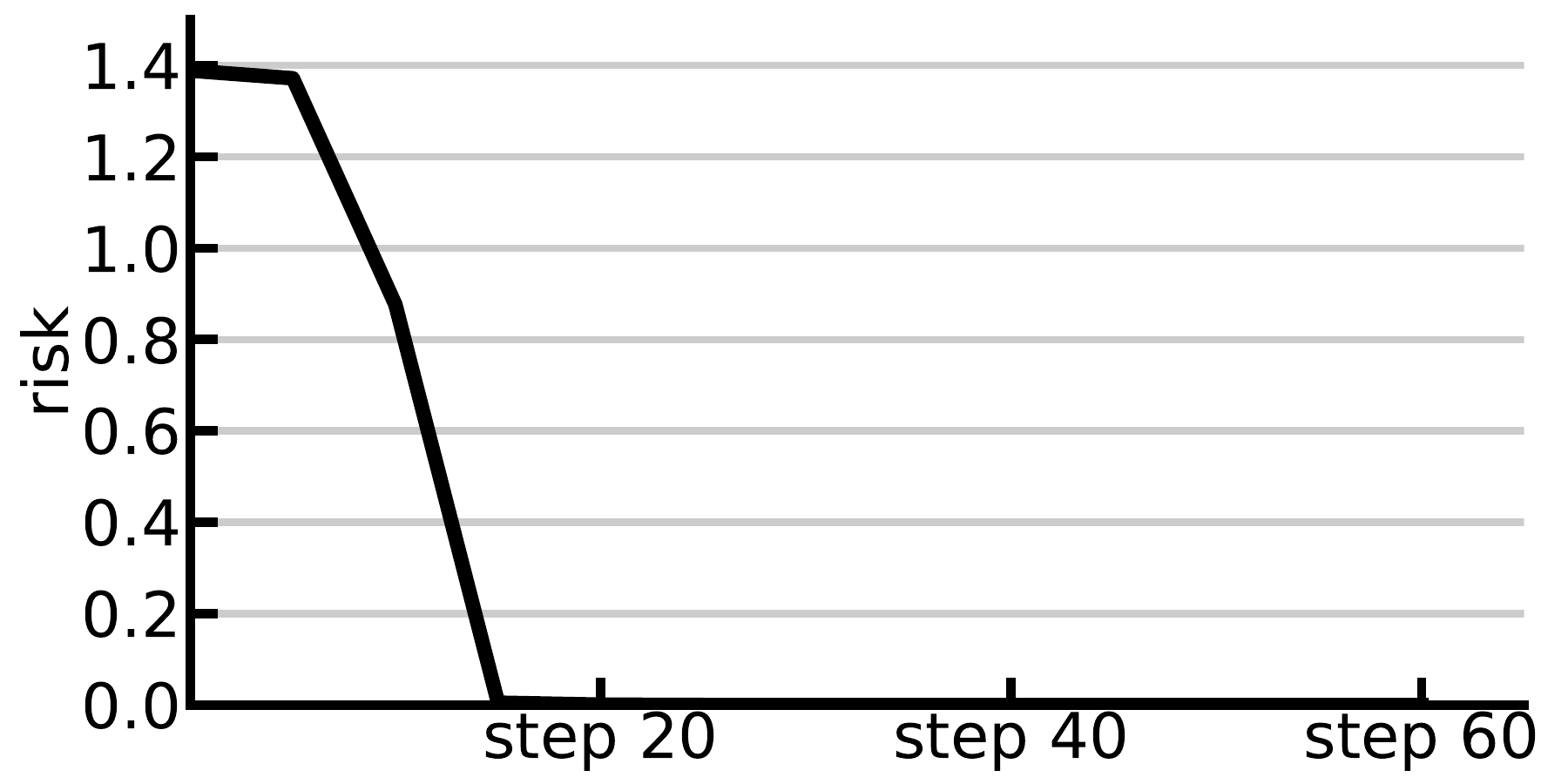}}
    \subfigure[Weights ($\beta=1$)]{\includegraphics[width=0.32\textwidth]{npo_weight_5_1_v2.pdf}}
    \subfigure[G-effect  ($\beta=2$)]{\includegraphics[width=0.32\textwidth]{npo_effect_5_2_v2.pdf}} 
    \subfigure[{Risk ($\beta=2$)}]{\includegraphics[width=0.32\textwidth]{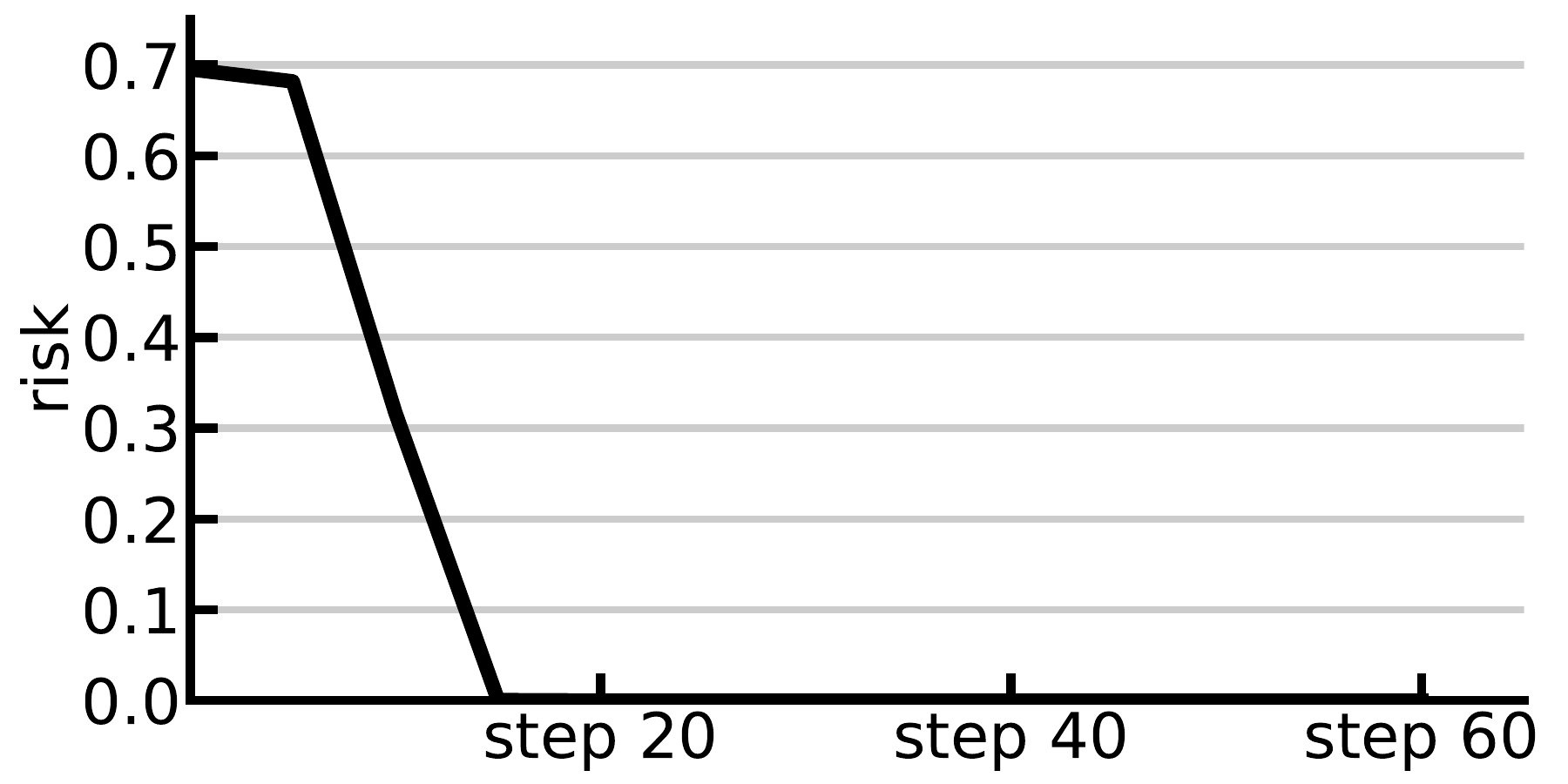}}
    \subfigure[Weights ($\beta=2$)]{\includegraphics[width=0.32\textwidth]{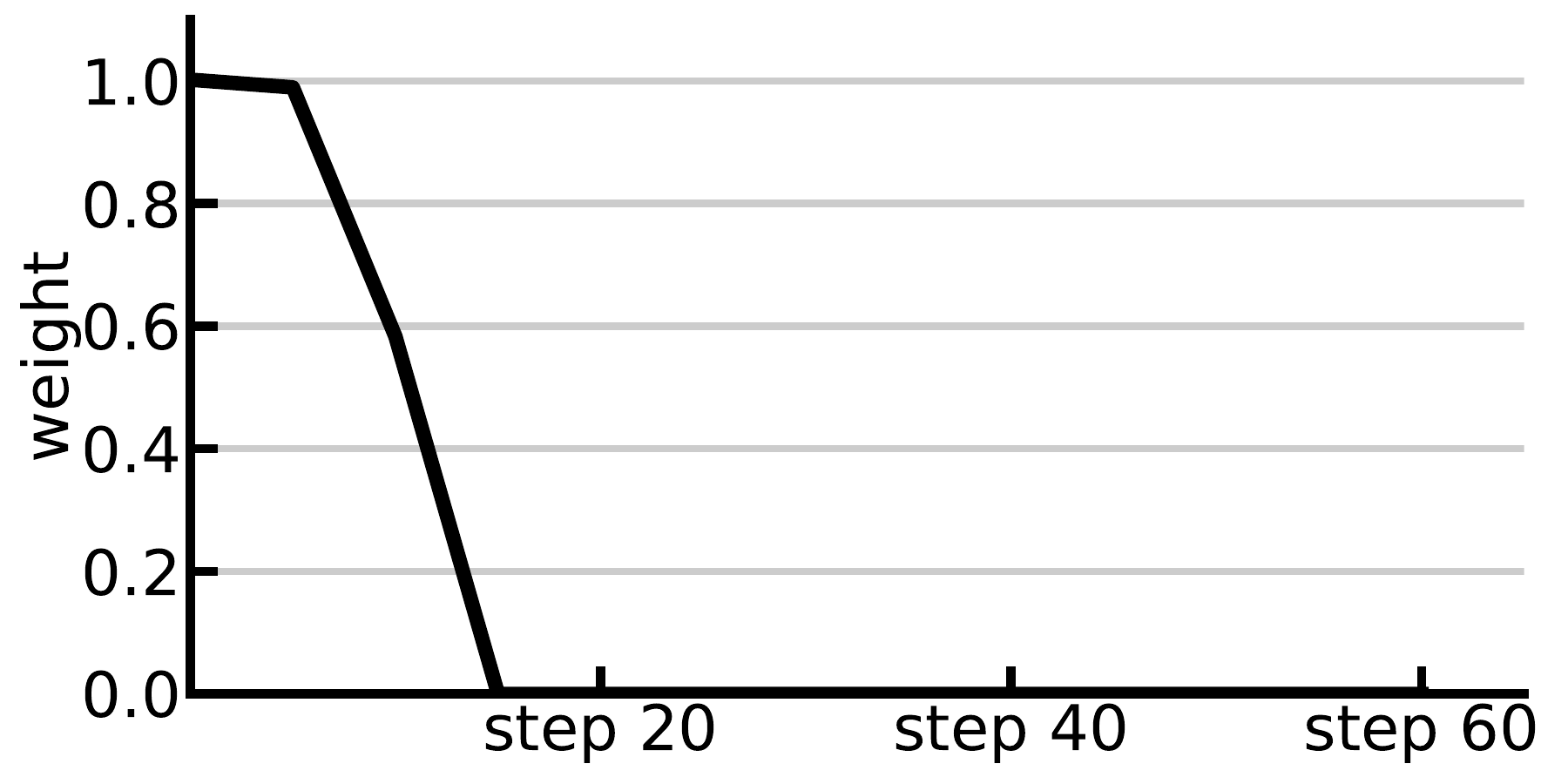}}
    \caption{\textbf{The Unlearning Dynamics for NPO}. We illustrate the G-effect, the unlearning risk, and the NPO weighting mechanism following~\eqref{eq: npo gradient}. The legends for the G-effect are summarized in Figure~\ref{fig: legend}.  }
    \label{fig: npo_effect_appendix}
\end{figure}

We report the G-effect in Figure~\ref{fig: ga_effect_appendix}(a) along with the curves of the unlearning risk in Figure~\ref{fig: ga_effect_appendix}(b) and the inverse confidence in Figure~\ref{fig: ga_effect_appendix}(c). First, we observe that the dynamics of the G-effect align precisely with those of the risk. Specifically, the sudden decrease in the G-effect from about the 20-th to 40-th steps mirrors the drop in the risk values. Moreover, there is a rapid increase in the inverse confidence, which exceeds more than $10^{17}$ around the 30-th steps, primarily contributing to excessive unlearning as discussed in Section~\ref{sec: ga}.

This steep rise in inverse confidence can be easily interpret: As the GA unlearning risk decreases, the values of $p(s_{\mathrm{u}};\boldsymbol\theta)$ decrease accordingly, further leading to the increase of its inverse, i.e., the inverse confidence $p(s_{\mathrm{u}};\boldsymbol\theta)^{-1}$. From a point-wise weighting perspective, the behaviors of the inverse confidence is problematic, suggesting that the unlearning dynamics wrongly focus on points that have already been largely unlearned. Obviously, it will lead to extreme over-fitting and catastrophic forgetting, as the associated gradient updates will completely overwhelm the parameters.

We further provide the G-effect throughout GA at 3 different zoom levels for more detailed observations. In Figure~\ref{fig: ga_effect_different_scale_appendix}(a), we demonstrate that the deterioration to model integrity will outweigh the improvement in unlearning. In Figure~\ref{fig: ga_effect_different_scale_appendix}(b), we highlight that the G-effect for shallow layers is notably larger than those in middle and deep layers. Moreover, in Figure~\ref{fig: ga_effect_different_scale_appendix}(c), we reveal that in the early unlearning phases, e.g., before the 20-th step, the improvements on unlearning can be greater than the damages in retaining model performance. 

\subsection{NPO}
\label{app: npo}

We detail the G-effect along with the risk values and the weighting mechanisms throughout NPO in Figure~\ref{fig: npo_effect_appendix}, across different setups of $\beta$. As observed, the magnitudes of G-effect overall increase as the values of $\beta$ decrease. Simultaneously, the difference between retaining and unlearning G-effect also decreases, signifying a potential trade-off between removal and retention. In general, NPO can moderate the extent of unlearning and make the differences between unlearning and retention G-effect more distinct. Such an observation is particularly pronounced when $\beta$ is set relatively large.  Conversely, when $\beta$ is small, NPO gradually degenerates to the formulation of GA, as illustrated by \eqref{eq: npo gradient} with $\beta=0$. Thus, its behaviors increasingly resemble those of GA as $\beta$ decreases, cf., Figure~\ref{fig: ga_effect_appendix}. A close relationship between the risk values and the weighting mechanism is also noted, which may further signify that the inherent weighting mechanism $w_{s_{\mathrm{u}}}^{\mathrm{npo}}$ primarily contributes to the faster convergence rate of NPO compared to GA.

\subsection{RMU}
\label{app: rmu}
We present the G-effect for RMU across different embedding layers (11-th, 22-th, and 33-th layers) and the scaling hyper-parameter ($c=0$, $1$, and $5$). The results of G-effect are summarized in Figure~\ref{fig: rmu_effect_32l}. We observe that perturbing either middle (22-th) or shallow (11-th) layers is much preferred than that for deep (33-th) layers, where the perturbation of deep layers makes the overall unlearning procedure notably unstable. Additionally, the G-effect demonstrates instability across various scaling parameters, especially for shallow and deep layers. Therefore, we suggest defaulting to perturb the middle-layer representations when using RMU. 
However, we also note that the dynamics and values of the unlearning and retaining G-effect are quite similar during RMU, mirroring the scenarios observed with the original GA. This scenario can also be viewed as the consequences of excessive unlearning, probably stemming from the mapping of original features to completely noise. Such a formulation of perturbations can lead to prohibitively large updates of parameters, especially when the differences between the original and perturbed features are notably large.

\begin{figure}[t]
    \centering
    \subfigure[ 33-th layer ($c=0$)]{\includegraphics[width=0.32\textwidth]{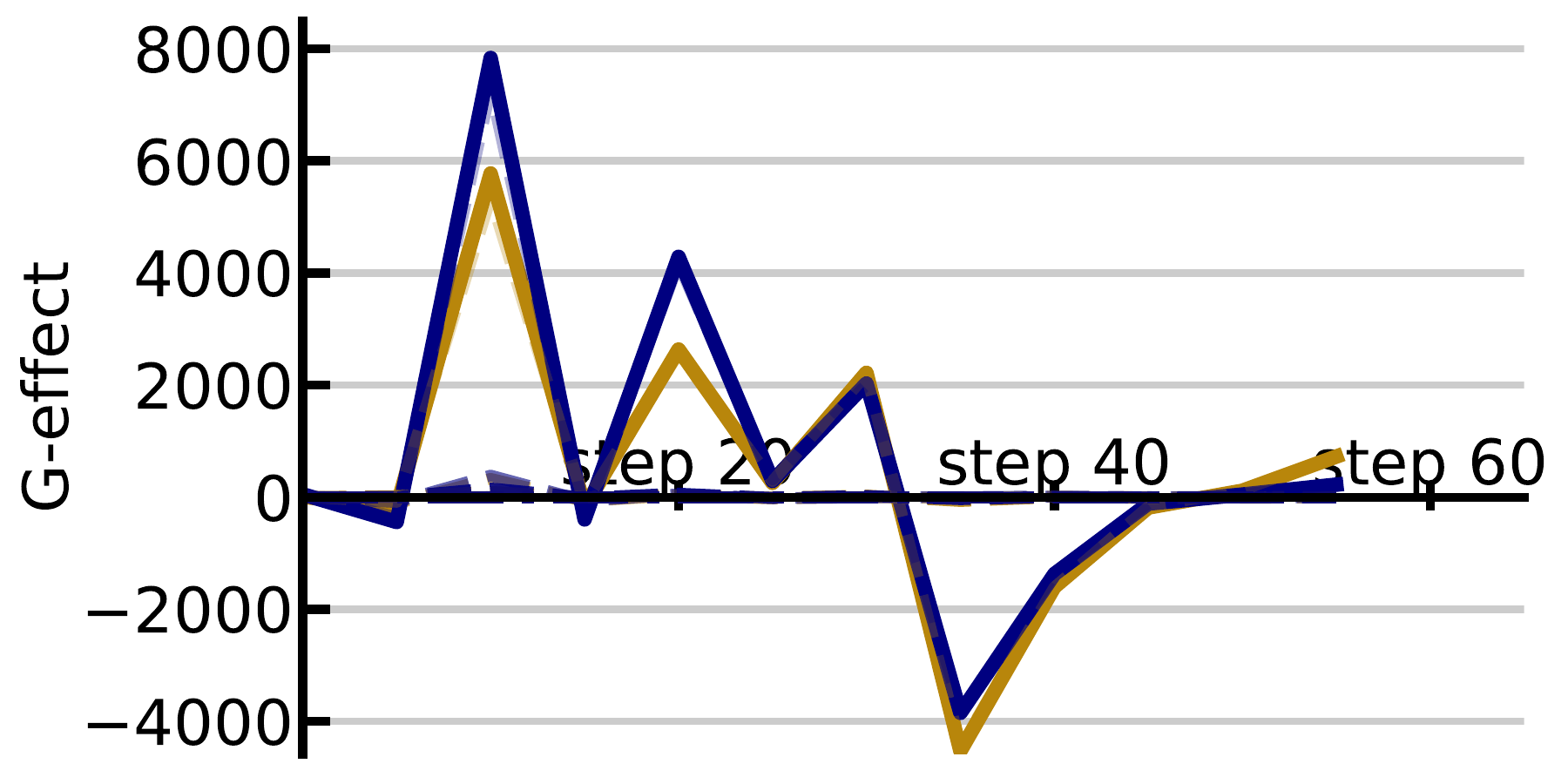}} 
    \subfigure[ 33-th layer ($c=1$)]{\includegraphics[width=0.32\textwidth]{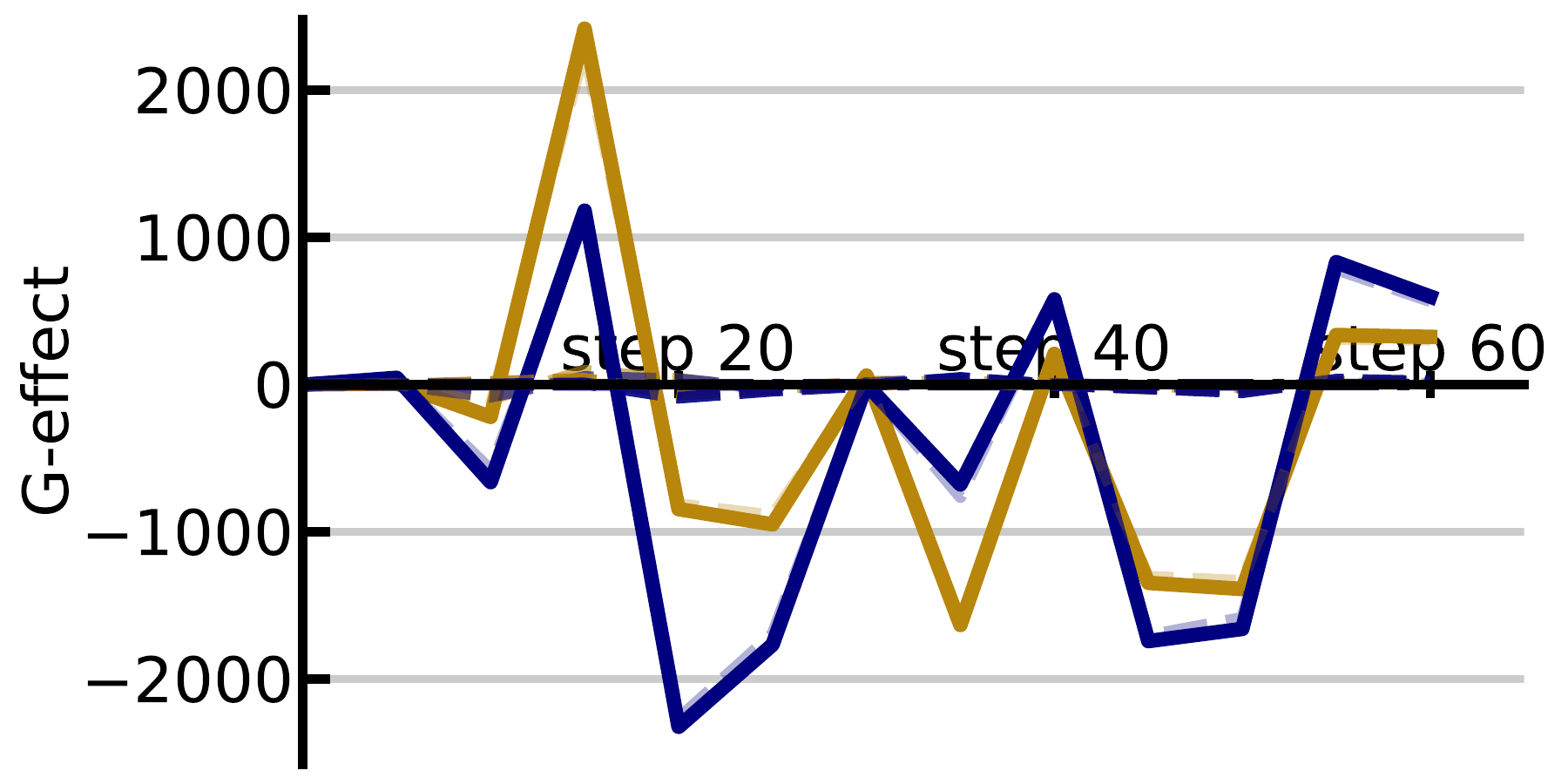}}
    \subfigure[ 33-th layer ($c=5$)]{\includegraphics[width=0.32\textwidth]{rmu31_5_effect_v2.pdf}}
    \subfigure[{22-th layer ($c=0$)}]{\includegraphics[width=0.32\textwidth]{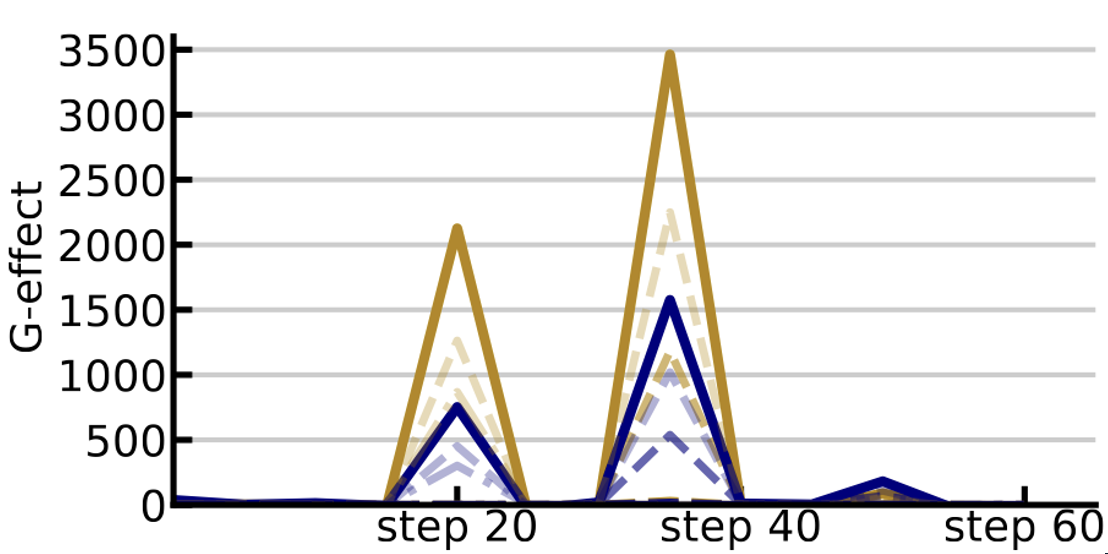}} 
    \subfigure[ 22-th layer ($c=1$)]{\includegraphics[width=0.32\textwidth]{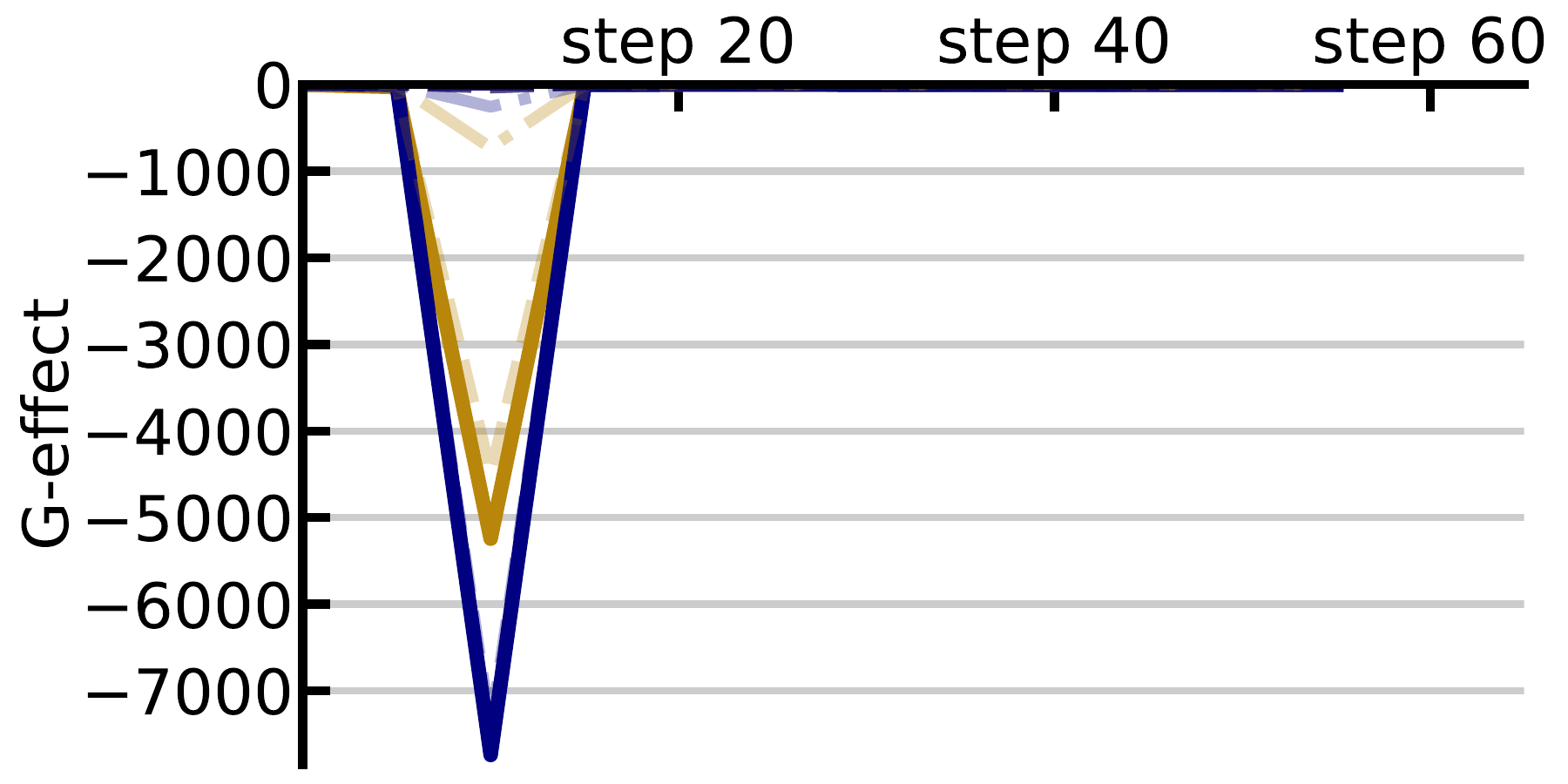}}
    \subfigure[ 22-th layer ($c=5$)]{\includegraphics[width=0.32\textwidth]{rmu21_5_effect_v3.pdf}}
    \subfigure[ 11-th layer ($c=0$)]{\includegraphics[width=0.32\textwidth]{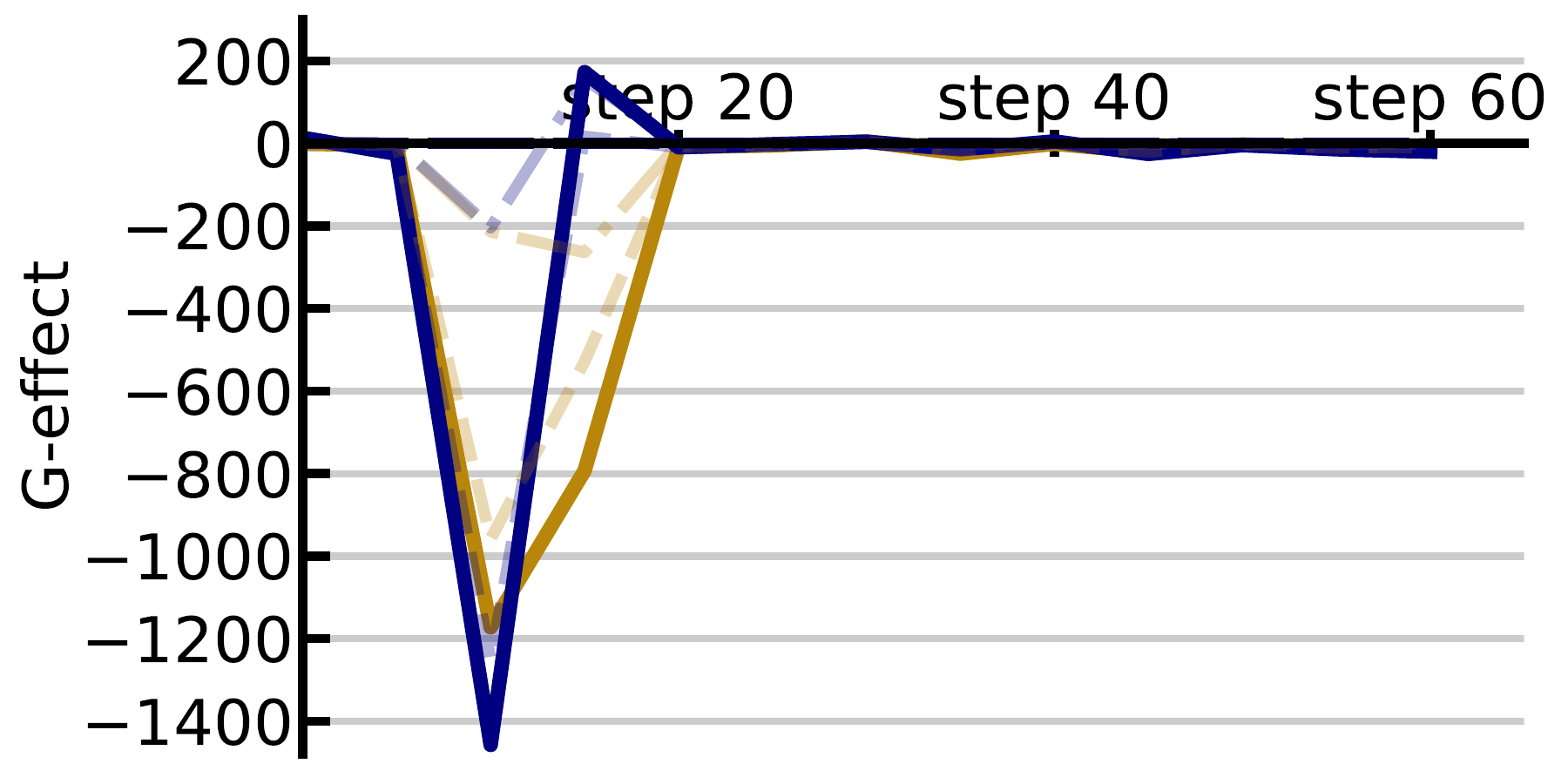}} 
    \subfigure[ 11-th layer ($c=1$)]{\includegraphics[width=0.32\textwidth]{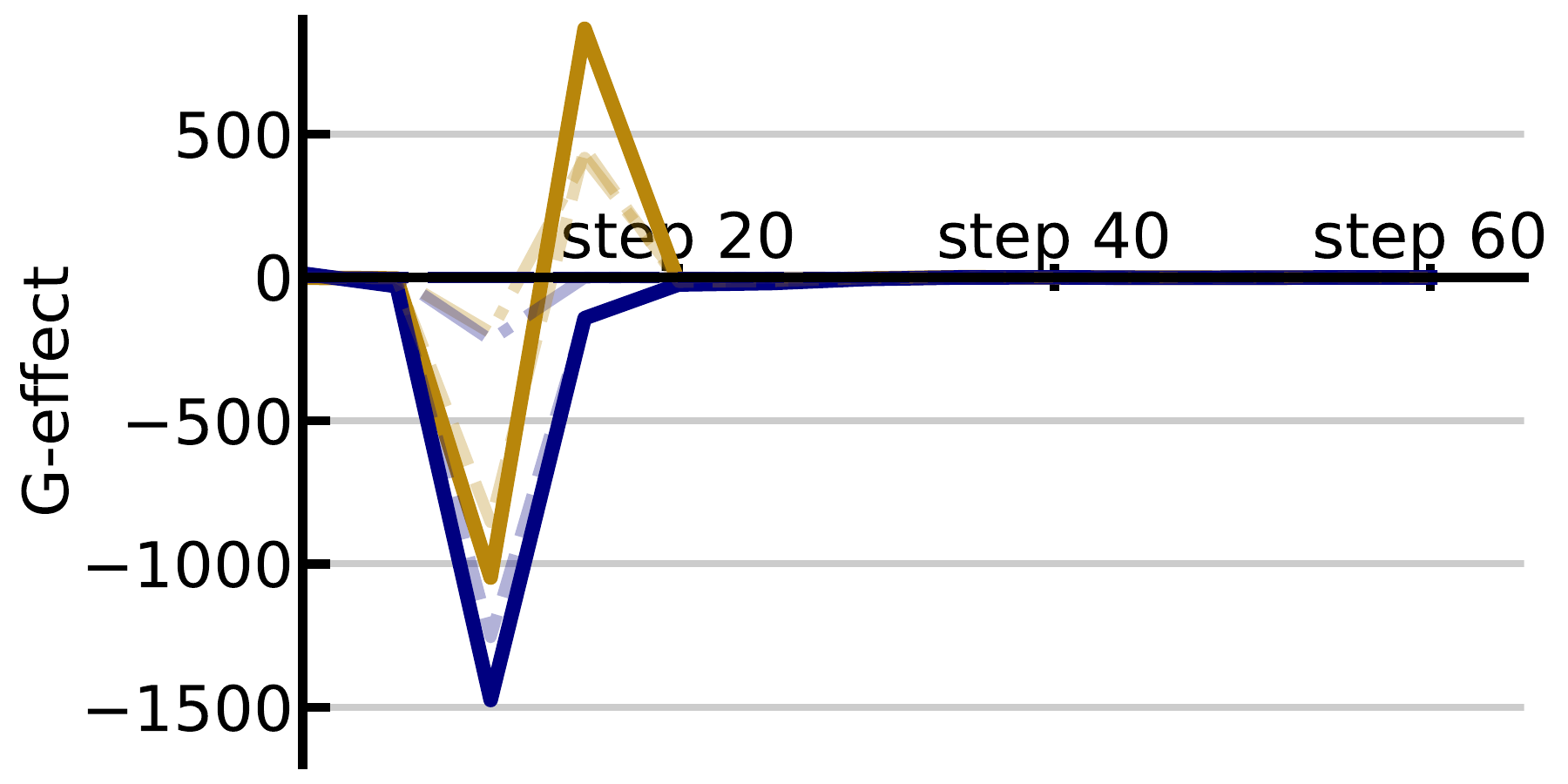}}
    \subfigure[ 11-th layer ($c=5$)]{\includegraphics[width=0.32\textwidth]{rmu10_5_effect_v2.pdf}}
    \caption{\textbf{The G-effect for RMU.} The embedding features for various layers, including 33-th, 22-th, and 11-th layers, are considered. The legends for the G-effect are summarized in Figure~\ref{fig: legend}. }
    \label{fig: rmu_effect_32l}
\end{figure}

\clearpage
\section{More Discussions for New Unlearning Objectives}\label{app:ff}
In this section, we delve deeper into our newly proposed unlearning objectives, achieved during our analysis of existing literature. Specifically, inspired by the GA, we introduce weighted GA (WGA) to alleviate its excessive unlearning issues. Building on NPO, we propose token-wise NPO (TNPO) and its further refined version, named weighted TNPO (WTNPO), which better can take advantages of the weighting mechanisms derived from NPO. 

\subsection{WGA}
\label{app: wga}
WGA improves upon GA to mitigate its excessive unlearning issue, controlling the extent of the inverse confidence term during unlearning. Specifically, the formulation for the WGA objective is
\begin{equation}
    \mathbb{E}_{s_{\mathrm{u}}\sim\mathcal{D}_{\mathrm{u}}}\sum_{i=2}^{\vert s\vert}w^{\mathrm{wga}}_{s_{\mathrm{u}},i}\log p(s_{\mathrm{u}}^i|s_{\mathrm{u}}^{<i};\boldsymbol{\theta})\label{eq: wga_app}
\end{equation}
with $w^{\mathrm{wga}}_{s_{\mathrm{u}},i}={p(s_{\mathrm{u}}^i|s_{\mathrm{u}}^{<i};\boldsymbol{\theta})^\alpha}$ the confidence weighting for the $i$-th token and $\alpha$ the hyper-parameter. 
When $\alpha=0$, WGA degenerates to the original GA. Increasing $\alpha$ helps mitigate the drawbacks associated with inverse confidence, while its excessively large values may cause the unlearning procedure to converge too early. Therefore, carefully selecting $\alpha$ allows for a trade-off between excessive unlearning and potential under-fitting. We present the G-effect across different values of $\alpha$ in Figure~\ref{fig: wga_effect_full}. As we can see, counteracting the impacts of the inverse confidence term can notably improve the efficacy of unlearning, where the improvement of unlearning will outweigh the deterioration on integrity, even with only a small strength of the confidence weighting (i.e., $\alpha=0.1$). We also prefer relatively smaller values of $\alpha$, as its power of unlearning remains stronger, signifying by its large negative values of the unlearning G-effect.

\begin{figure}[t]
    \centering
    \subfigure[{$\alpha=0.1$}]{\includegraphics[width=0.32\textwidth]{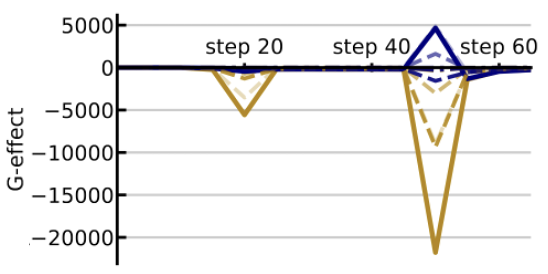}} 
    \subfigure[$\alpha=0.5$]{\includegraphics[width=0.32\textwidth]{wga_p5_effect_v3.pdf}} 
    \subfigure[{$\alpha=1.5$}]{\includegraphics[width=0.32\textwidth]{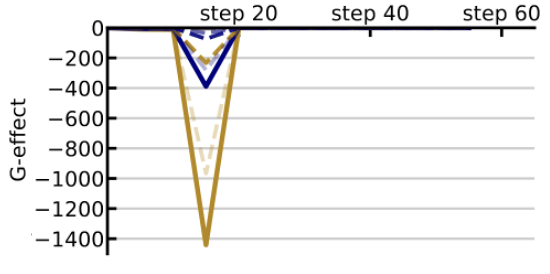}}
    \caption{\textbf{The G-effect for WGA.} The legends for the G-effect are summarized in Figure~\ref{fig: legend}.}
    \label{fig: wga_effect_full}
\end{figure}

\subsection{TNPO and WTNPO}
\label{app: tnpo}

TNPO represents a modest modification over the original NPO, which is originally employed to explore the true efficacy of the NPO weighting mechanism. Recalling that, in Section~\ref{sec: npo}, we outline the inherent weighting mechanism of NPO, which possesses some capability to distinguish beneficial data points from potentially harmful ones. Despite these advantages, we also find failures of this weighting mechanism, cf.,  Section~\ref{sec: npo} and Appendix~\ref{app: weighting}. 

\begin{figure}[t]
    \centering
    \subfigure[$\beta=0.1$]{\includegraphics[width=0.32\textwidth]{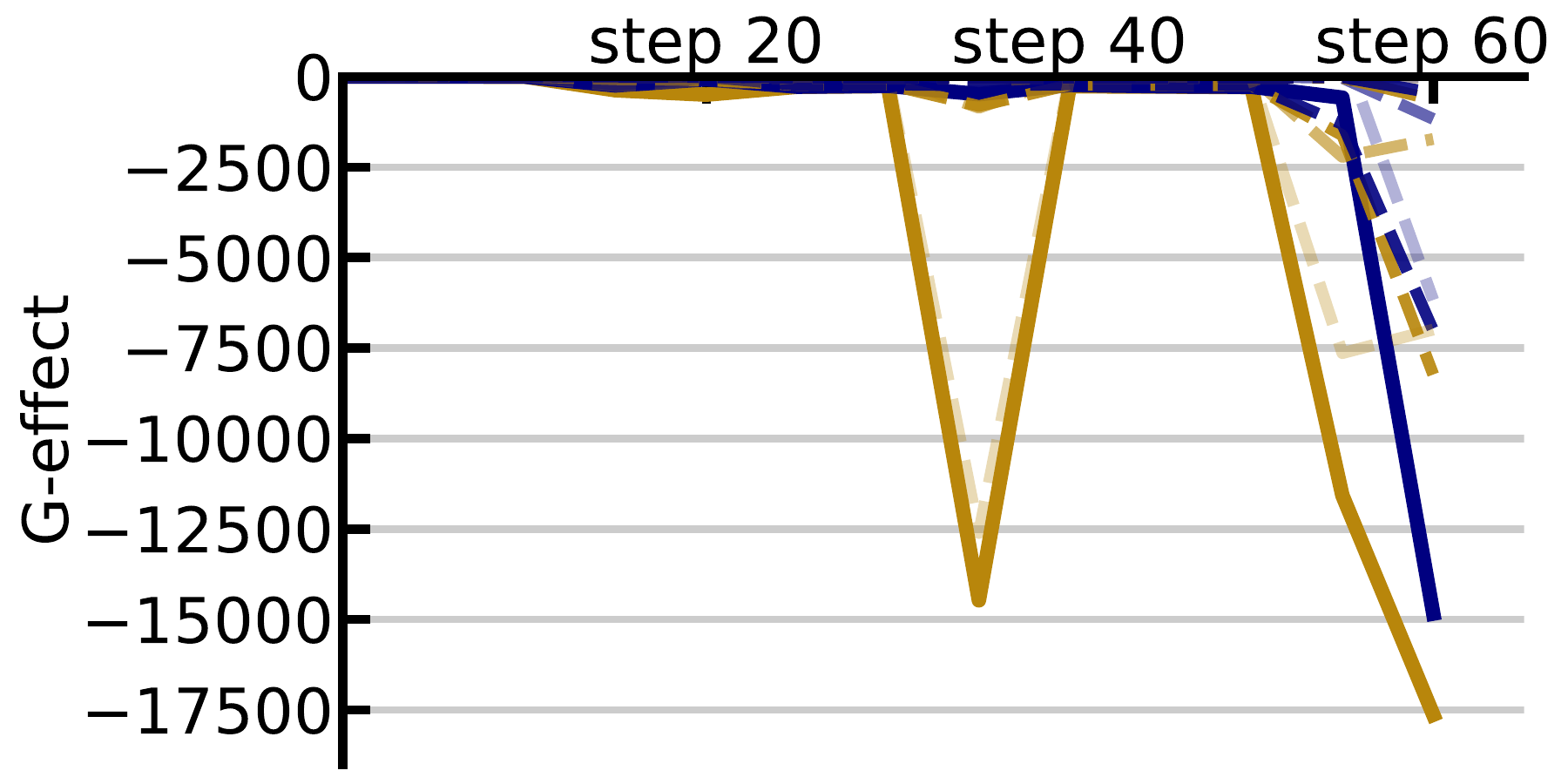}} 
    \subfigure[$\beta=1$]{\includegraphics[width=0.32\textwidth]{ins_npo_1_effect_v3.pdf}}
    \subfigure[{$\beta=2$}]{\includegraphics[width=0.32\textwidth]{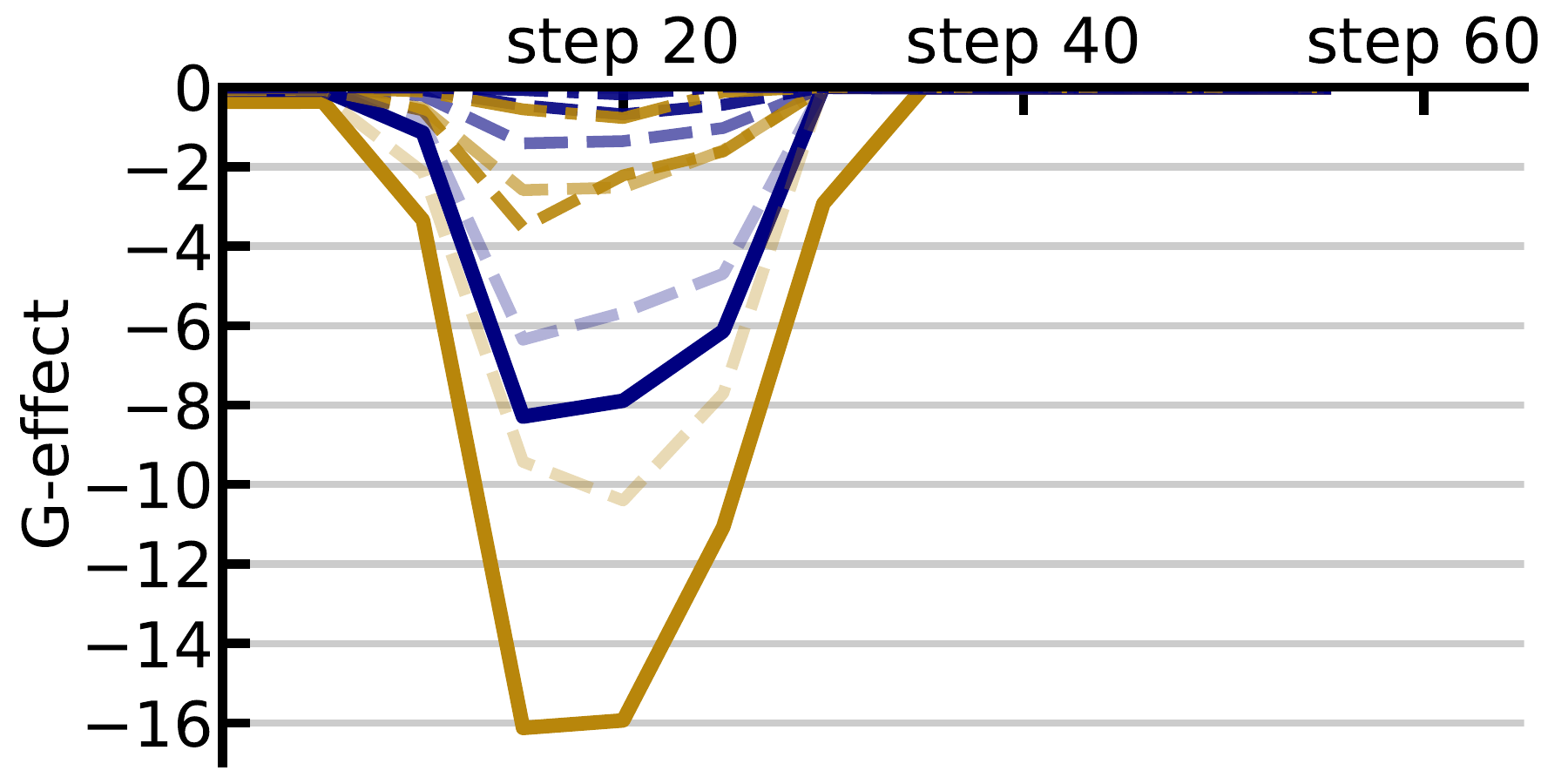}}
    \caption{\textbf{The G-effect for TNPO.} The legends for the G-effect are summarized in Figure~\ref{fig: legend}.}
    \label{fig: tnpo_effect_all}
\end{figure}

However, we hypothesize that these shortcomings do not necessarily stem from its inherent deficiencies, but rather from its limited flexibility in controlling the unlearning procedure.
A direct approach to enhance the flexibility of the weighting mechanism is to apply it on a token-wise basis. This modification involves prioritizing certain tokens over entire data points, which is the primary distinction from the original NPO. To further clarify our discussion, we use the explicit form of the weighting mechanism, leading to the formulation of TNPO as follows:
\begin{equation}
    \mathbb{E}_{s_{\mathrm{u}}\sim\mathcal{D}_{\mathrm{u}}}  \sum_{i=2}^{\vert s_{\mathrm{u}}\vert}w^{\mathrm{tnpo}}_{s_{\mathrm{u}}, i}\log p(s_{\mathrm{u}}^i|s_{\mathrm{u}}^{<i};\boldsymbol{\theta}),\label{eq: tnpo_app}
\end{equation}
with $w^{\mathrm{tnpo}}_{s_{\mathrm{u}}, i}= \frac{2p(s_{\mathrm{u}}^i|s_{\mathrm{u}}^{<i};\boldsymbol{\theta})^{\beta}}{p(s_{\mathrm{u}}^i|s_{\mathrm{u}}^{<i};\boldsymbol{\theta})^\beta+p(s_{\mathrm{u}}^i|s_{\mathrm{u}}^{<i};\boldsymbol{\theta}_{\textrm{o}})^\beta}$. The G-effect values across several candidate values of $\beta$ are summarized in Figure~\ref{fig: tnpo_effect_all}. When the inverse temperature is relatively small, e.g., $\beta=1$, the improvement upon unlearning causes negligible deterioration on model integrity, making TNPO a very preferred unlearning objective for LLM unlearning.

\begin{wrapfigure}{r}{0.45\textwidth}
  \begin{center}
    \includegraphics[width=0.35\textwidth]{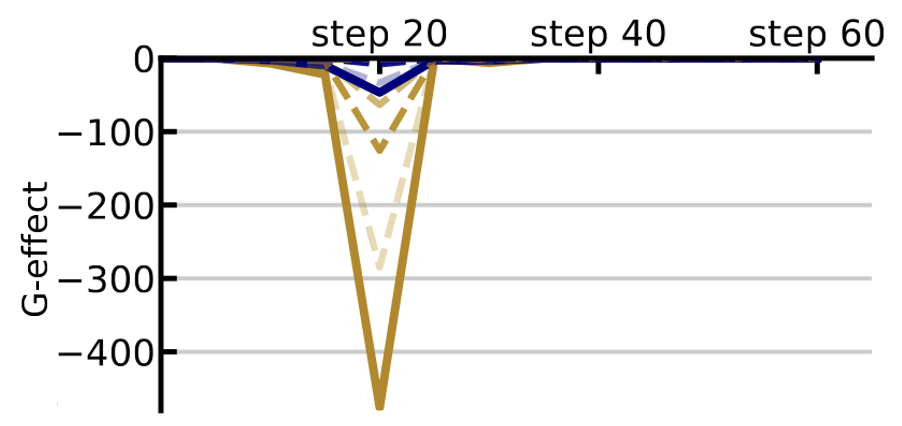}
  \end{center}
  \caption{{\textbf{The G-effect for WTNPO.}} The legends for the G-effect are in Figure~\ref{fig: legend}. }\label{fig: ge_wtnpo}
\end{wrapfigure}

For the case where \(\beta=0.1\), we observe that between the 30-th and 40-th steps, TNPO achieves better unlearning improvements compared to when \(\beta=1\). However, from about the 55-th to 60-th steps, TNPO further reduces the unlearning G-effect, but this comes with the downside that the retaining G-effect is also notably dropped. To address this issue, we recall that $w^{\mathrm{tnpo}}_{s_{\mathrm{u}}, i}$ will approach 1 when decreasing $\beta$ to 0, indicating that the excessive unlearning may still occur. To this end, we can further employ the weighting mechanism used by WGA, leading to the unlearning objective of weighted TNPO (WTNPO) in the following formulation:
\begin{equation}
    \mathbb{E}_{s_{\mathrm{u}}\sim\mathcal{D}_{\mathrm{u}}}  \sum_{i=2}^{\vert s_{\mathrm{u}}\vert}w^{\mathrm{wtnpo}}_{s_{\mathrm{u}}, i}\log p(s_{\mathrm{u}}^i|s_{\mathrm{u}}^{<i};\boldsymbol{\theta}),\label{eq: wtnpo}
\end{equation}
with $w^{\mathrm{wtnpo}}_{s_{\mathrm{u}}, i}= \frac{2p(s_{\mathrm{u}}^i|s_{\mathrm{u}}^{<i};\boldsymbol{\theta})^{\beta+\alpha}}{p(s_{\mathrm{u}}^i|s_{\mathrm{u}}^{<i};\boldsymbol{\theta})^\beta+p(s_{\mathrm{u}}^i|s_{\mathrm{u}}^{<i};\boldsymbol{\theta}_{\textrm{o}})^\beta}$. We present an example for the G-effect of WTNPO in Figure~\ref{fig: ge_wtnpo}, where we fix $\beta=0.1$ and consider $\alpha=0.5$. Employing the confidence weighting can further stabilize the unlearning procedure of TNPO, yet has the costs that the strength of unlearning is weaken. Therefore, there should be trade-off across different values of $\alpha$ when using WTNPO.

\section{Regularization}
\label{app: reg}
In this section, we provide an overview of the regularization terms discussed in Section~\ref{sec: reg}, including GD, KL, and RR. Both GD and KL originate from initial studies of GA to enhance the stability of their unlearning processes, and have since been further investigated in subsequent studies such as NPO.
Specifically, GD improves upon GA by decreasing the negative log-likelihood for non-targeted data, as expressed by the equation of 
\begin{equation}
\mathbb{E}_{(x,y)\sim\mathcal{D}_{\mathrm{t}}\backslash\mathcal{D}_{\mathrm{u}}} \ell\big(y|x;\boldsymbol{\theta}\big).
\end{equation}
KL aims to maintain the model responses for non-targeted data to that before unlearning. It is achieved by the token-wise KL divergence, as shown below:
\begin{equation}
    \mathbb{E}_{(x,y) \sim \mathcal{D}_{\mathrm{t}} \backslash \mathcal{D}_{\mathrm{u}}}  \sum_{k} \texttt{KL}\big(p(y^{<k} \mid x; \boldsymbol{\theta}) \Vert p(y^{<k} \mid x; \boldsymbol{\theta}_{\textrm{o}})\big),
\end{equation}
where $\texttt{KL}$ denotes the operator of the KL divergence. Moreover, RR, which originates from the studies of RMU, is designed to maintain the embedding features during unlearning. The formulation for RR is provided in the following equation:
\begin{equation}
    \mathbb{E}_{(x,y)\sim\mathcal{D}_{\mathrm{t}}\backslash\mathcal{D}_{\mathrm{u}}} \frac{1}{\vert y\vert}\sum_{i=1}^{\vert y\vert}\vert\vert\boldsymbol{\phi}([x,y^{<i}];\boldsymbol{\theta})-\boldsymbol{\phi}([x,y^{<i}];\boldsymbol{\theta}_{\textrm{o}})\vert\vert_2^2,
\end{equation}
To make our experiments easier, we assume that these regularization terms will be integrated directly into the unlearning objectives, without introducing additional trade-off hyper-parameters.

\clearpage

\section{More Discussions for Weighting Mechanisms}

\label{app: weighting}

In our main discussion, we highlight the crucial role of loss weighting to enhance unlearning meanwhile preserving integrity, pointing out a promising direction that warrants in-depth studies. Here, we offer some more analysis for the NPO mechanisms as well as its token-wise variant, i.e., TNPO, with the aim of motivating future studies in this field.

\subsection{NPO Weighting Mechanisms}
In Section~\ref{sec: npo}, we discuss how the inherent weighting mechanism of NPO extends beyond merely early stopping, highlighting its capability to prioritize certain points with small retaining G-effect. Here, we present further results exploring the relationships between \( w_{s_{\mathrm{u}}}^{\mathrm{npo}} \) and the PG-effect with respect to GA, following \eqref{eq: w vs gae}. These results are analyzed across various inverse temperature settings in Figure~\ref{fig: npo_inst_ge} and NPO unlearning checkpoints in Figure~\ref{fig: inst_npo_effect_full}.

For the distributions of PG-effect across varying $\beta$ in Figure~\ref{fig: npo_inst_ge}, we observe that larger $\beta$ enhance the distinction between distributions.  It can also be attributed to the behavior of $w_{s_{\mathrm{u}}}^{\mathrm{npo}}$ as $\beta$ approaches 0, where it converges to 1, causing the NPO to resemble the conventional GA. Moreover, the NPO weighting mechanisms for each setup are prone to make some mistakes. For example, at $\beta=1$, $w_{s_{\mathrm{u}}}^{\mathrm{npo}}$ tends to assign values in the range of 0.4 to 0.6 to data points exhibiting large negative retaining G-effect. Similarly, at $\beta=2$, $w_{s_{\mathrm{u}}}^{\mathrm{npo}}$ is likely to assign values in the range of 0.6 to 0.8 for such data points. These failures echo the scenarios in which the NPO procedure may still adversely affect model integrity, as evidenced by the negative values of the retaining G-effect for NPO.

We further report the distributions of PG-effect across different unlearning steps in Figure~\ref{fig: inst_npo_effect_full}. We do not report results before unlearning because $w_{s_{\mathrm{u}}}^{\mathrm{npo}}$ keeps constant at 1. Also, we do not present results beyond the 15-th step, as the NPO generally approaches to converge by that point, especially for $\beta=1$ or $2$. Across the unlearning steps, we observe that  $w_{s_{\mathrm{u}}}^{\mathrm{npo}}$  tends to make more errors initially than in later stages, with notable changes in the distribution layouts across steps, which is unstable.  It suggests the potential for further improvement of NPO through loss weighting. 

\begin{figure}[t]
    \centering
    \subfigure[$\beta=0.1$]{\includegraphics[width=0.32\textwidth]{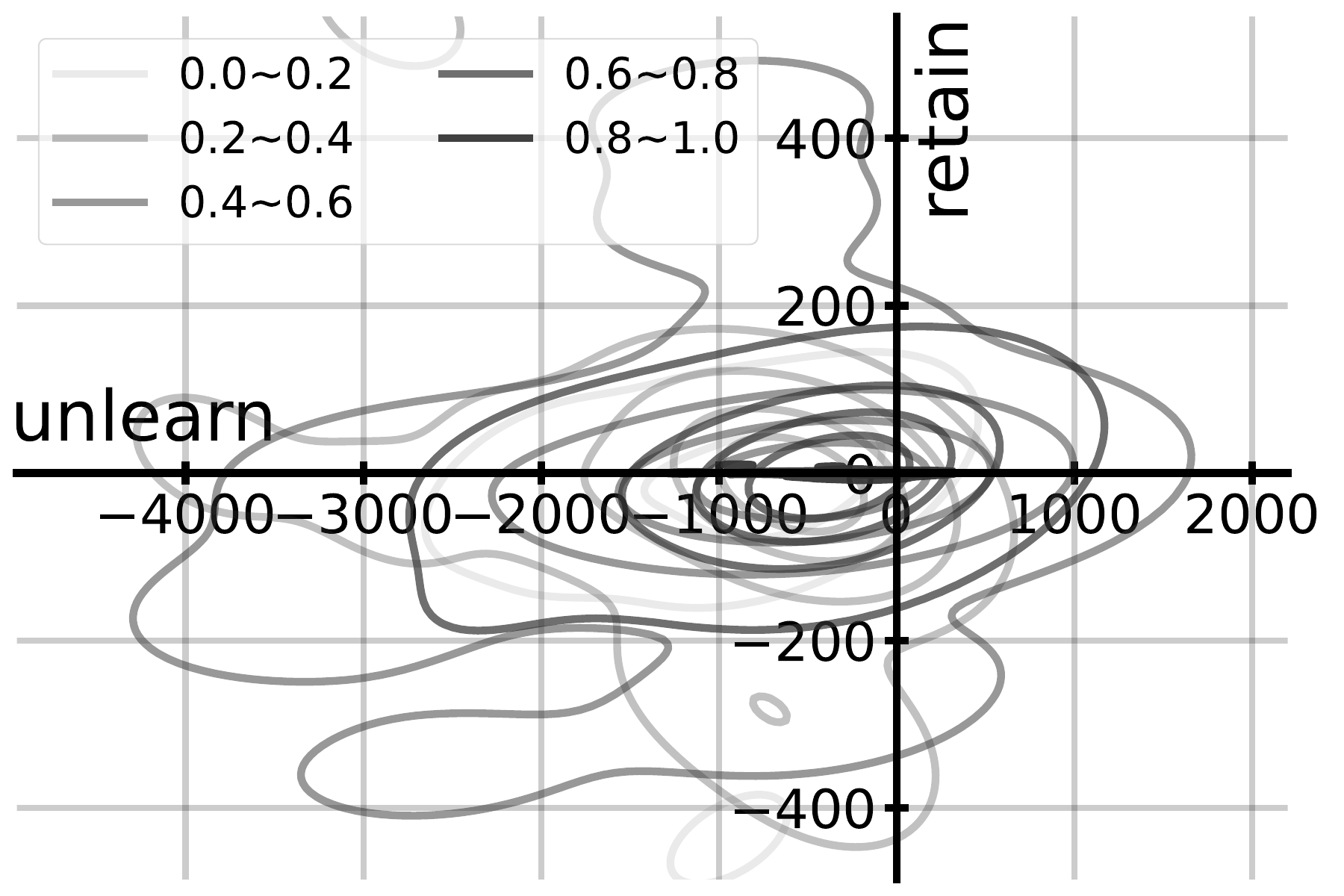}} 
    \subfigure[$\beta=1$]{\includegraphics[width=0.32\textwidth]{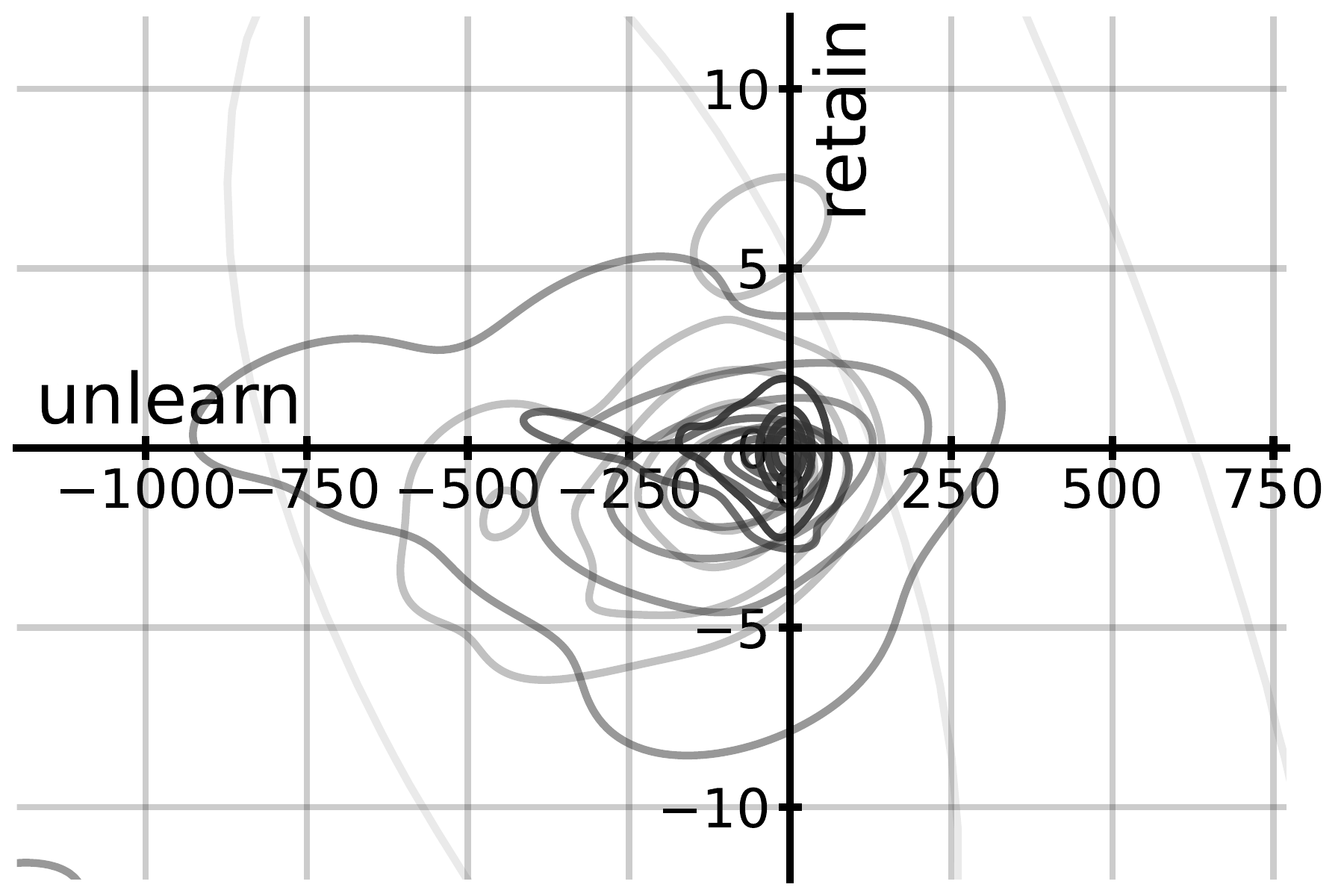}}
    \subfigure[$\beta=2$]{\includegraphics[width=0.32\textwidth]{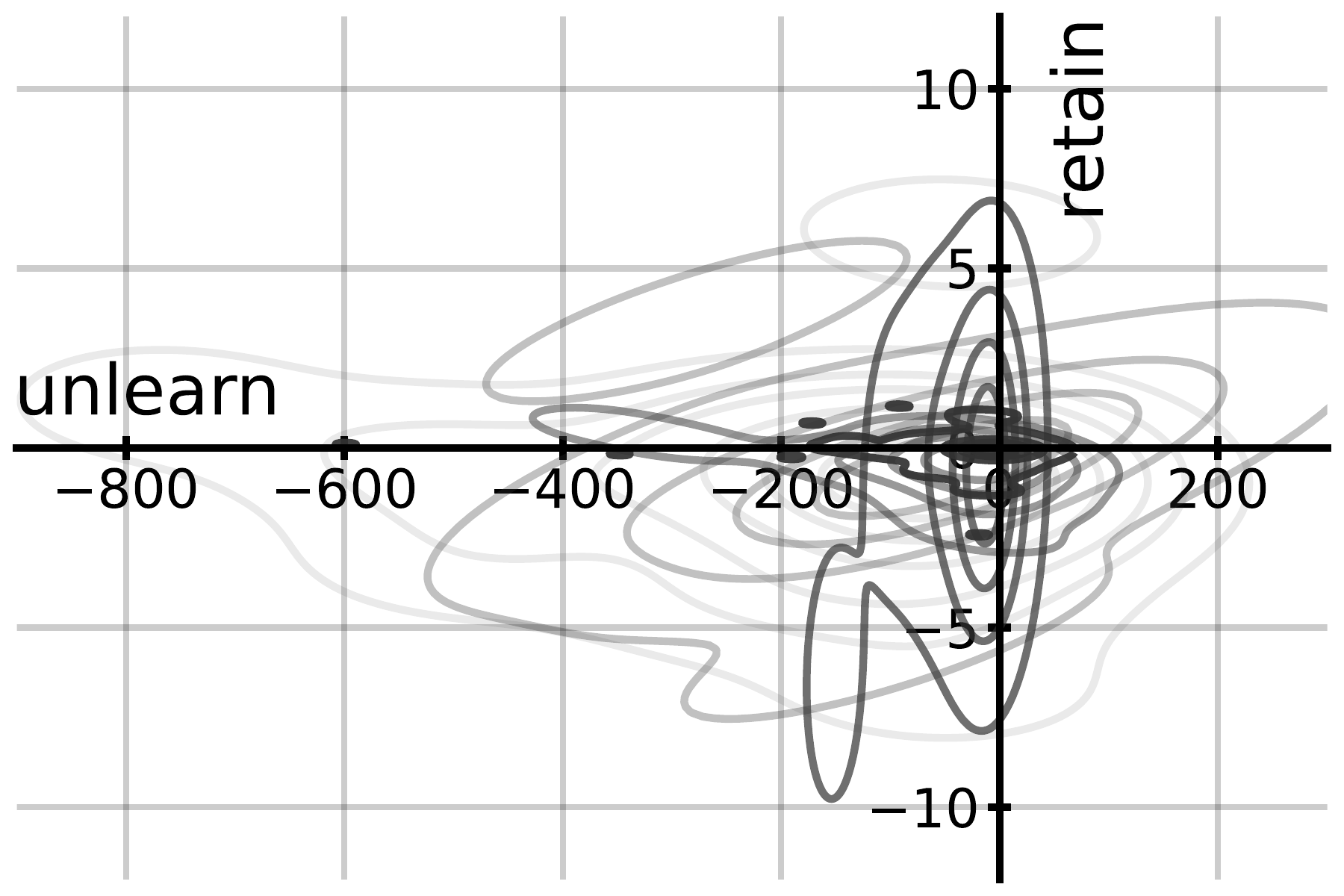}}
    \caption{\textbf{Relationships between $w_{s_{\mathrm{u}}}^{\mathrm{npo}}$ and the PG-effect.} Distributions of PG-effect for different value ranges of $w_{s_{\mathrm{u}}}^{\mathrm{npo}}$ are depicted, jointly considering NPO unlearning checkpoints at $5$, $10$, and $15$-th checkpoints. 
    The  values of the PG-effect are categorized into five groups, based on the associated values of $w_{s_{\mathrm{u}}}^{\mathrm{npo}}$  within the ranges of (0.0, 0.2), (0.2, 0.4), (0.4, 0.6), (0.6, 0.8), and (0.8, 1.0). 
    The distributions of the G-effect for each weight group are depicted, using gradually darker shades of color for the distribution contour corresponding to groups with overall higher weight values. }
    \label{fig: npo_inst_ge}
\end{figure}

\begin{figure}[t]
    \centering
    \subfigure[5-th Checkpoint ($\beta=0.1$)]{\includegraphics[width=0.32\textwidth]{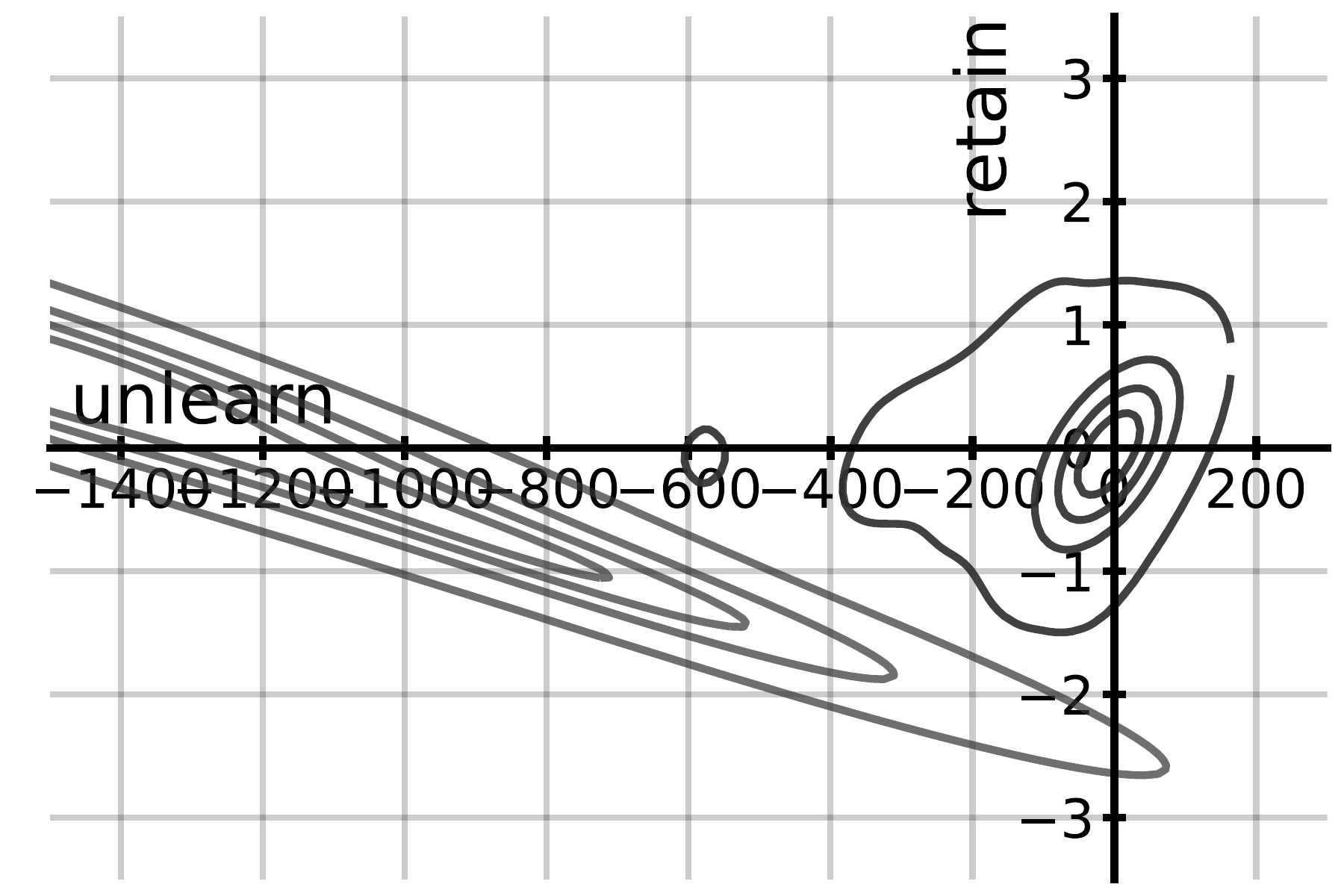}} 
    \subfigure[10-th Checkpoint ($\beta=0.1$)]{\includegraphics[width=0.32\textwidth]{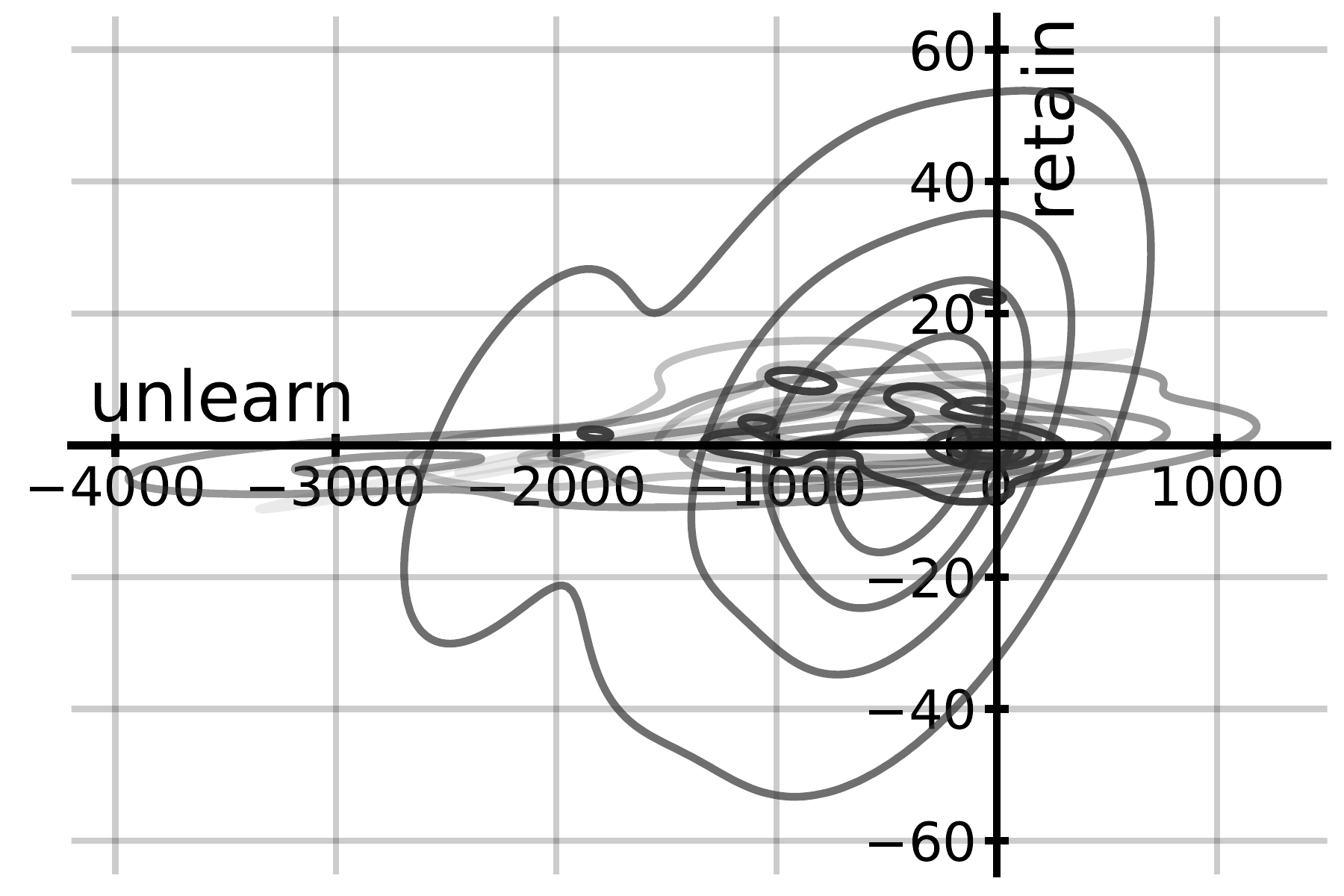}}
    \subfigure[15-th Checkpoint ($\beta=0.1$)]{\includegraphics[width=0.32\textwidth]{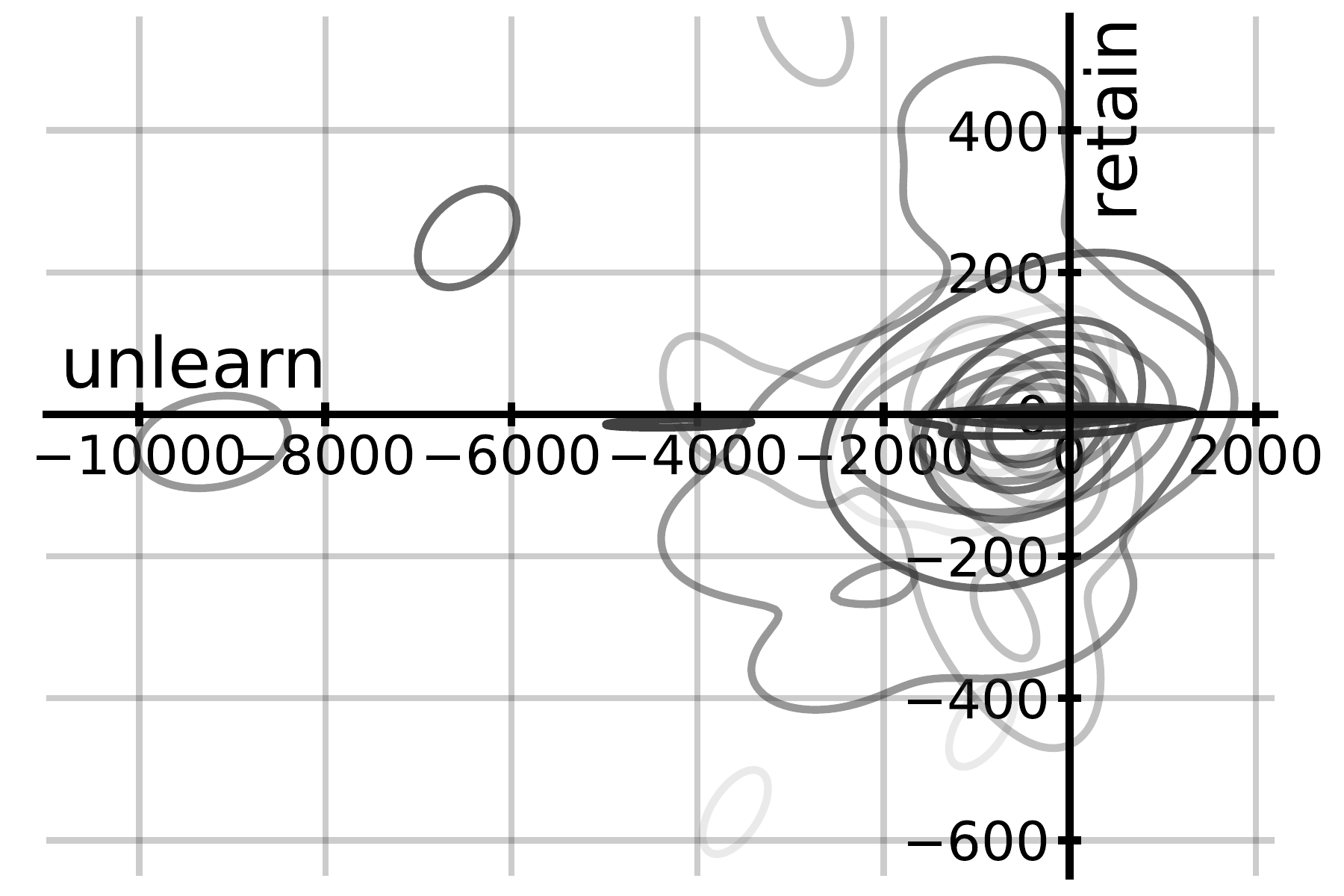}}

    \subfigure[5-th Checkpoint ($\beta=1$)]{\includegraphics[width=0.32\textwidth]{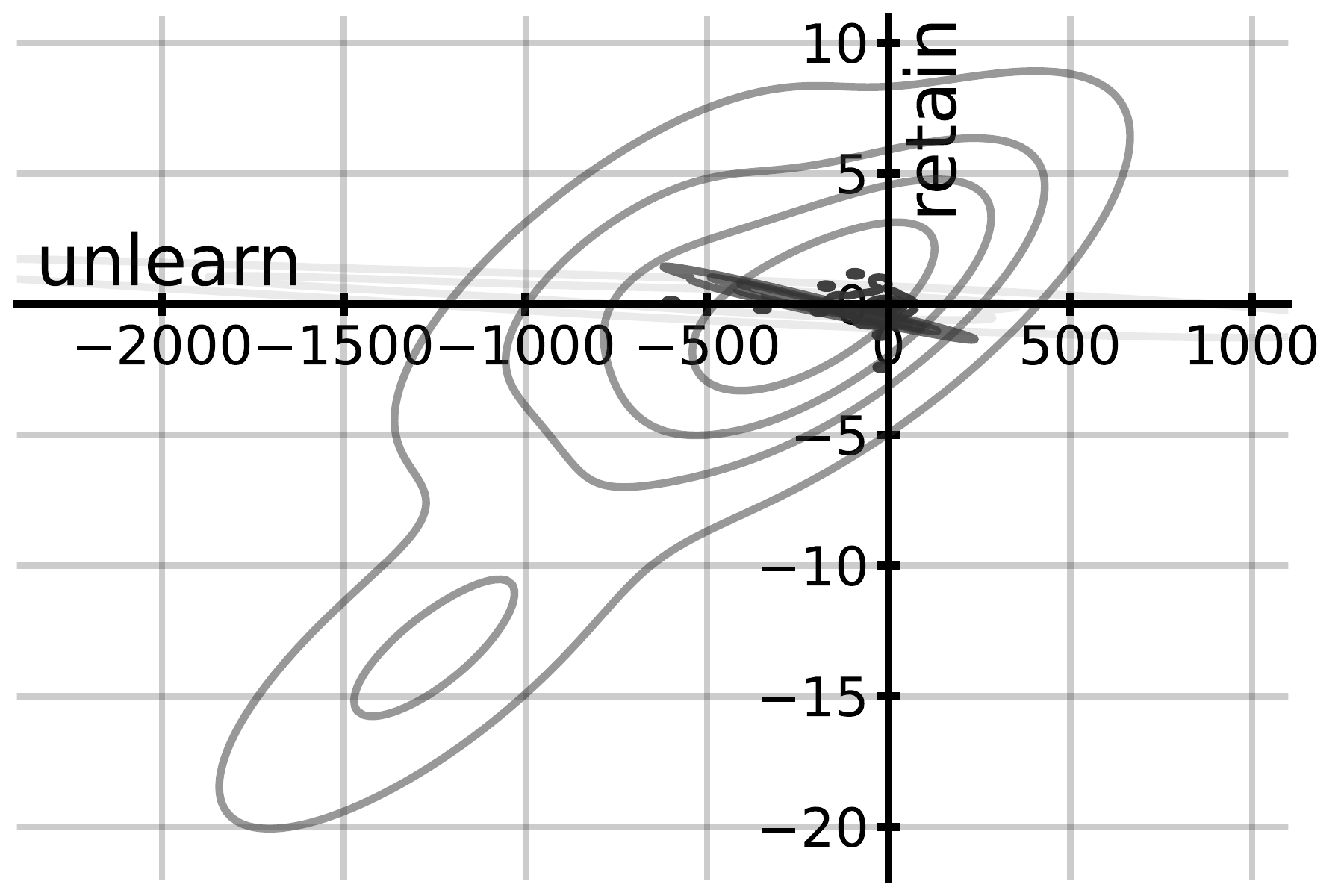}} 
    \subfigure[10-th Checkpoint ($\beta=1$)]{\includegraphics[width=0.32\textwidth]{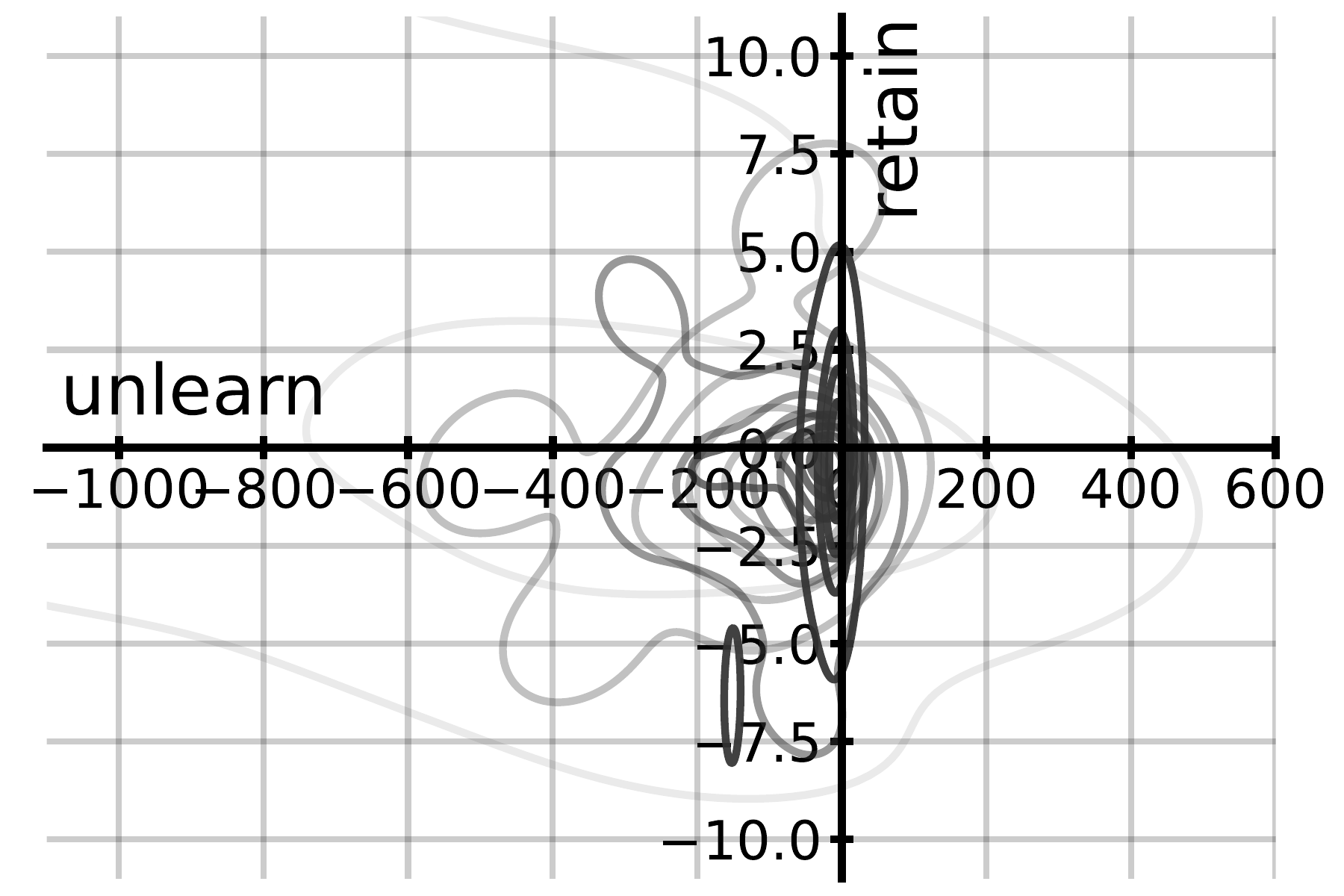}}
    \subfigure[15-th Checkpoint ($\beta=1$)]{\includegraphics[width=0.32\textwidth]{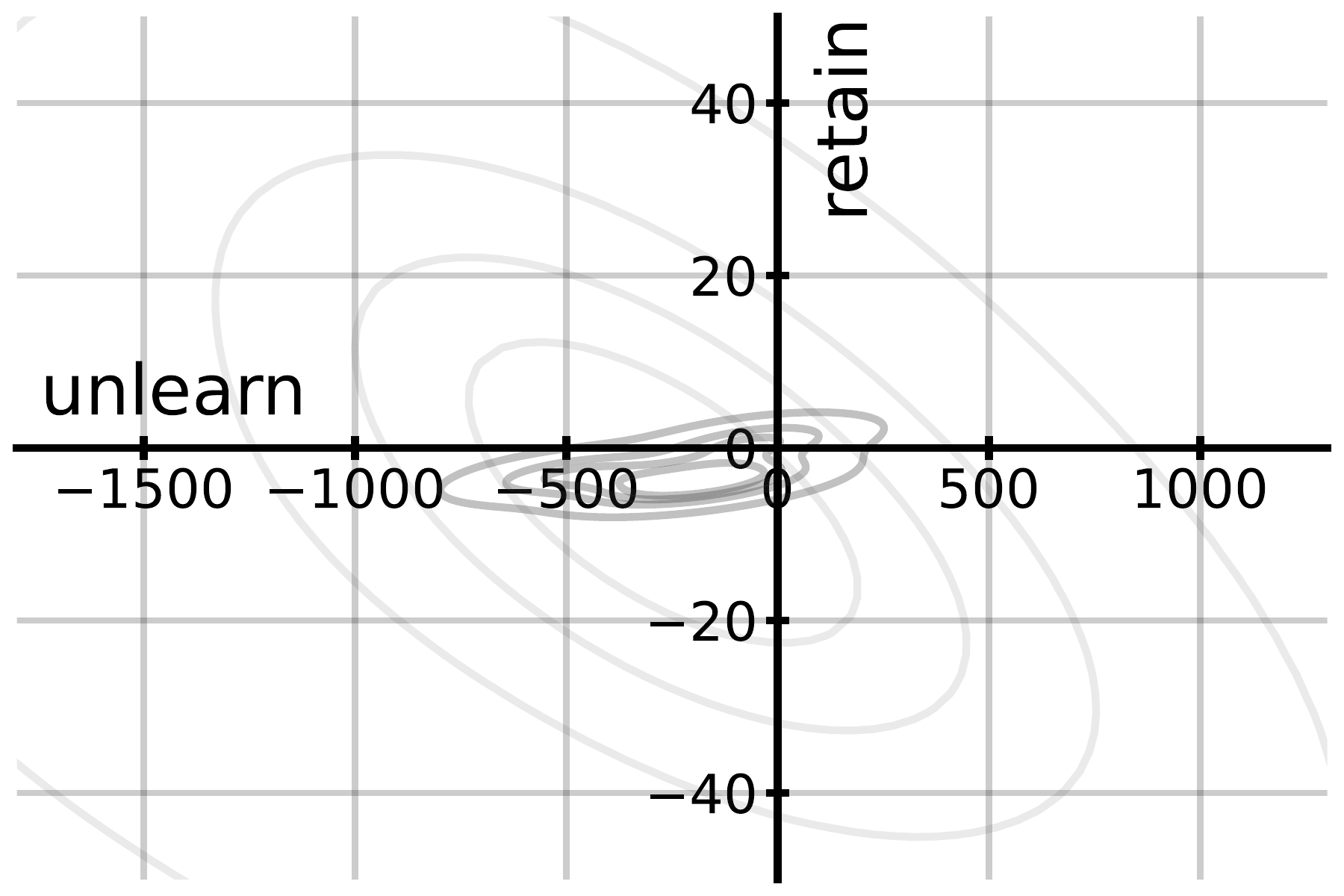}}

    \subfigure[5-th Checkpoint ($\beta=2$)]{\includegraphics[width=0.32\textwidth]{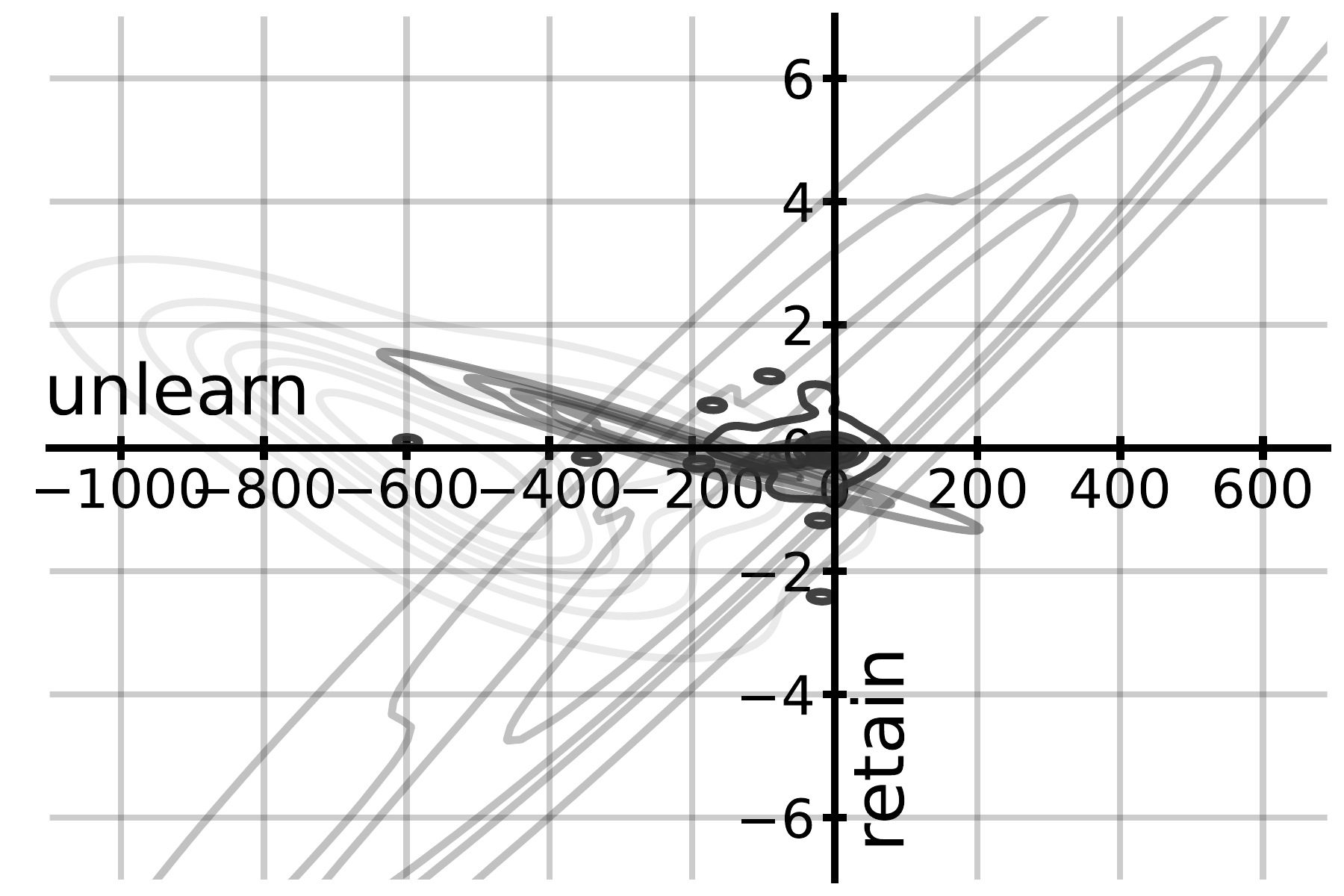}} 
    \subfigure[10-th Checkpoint ($\beta=2$)]{\includegraphics[width=0.32\textwidth]{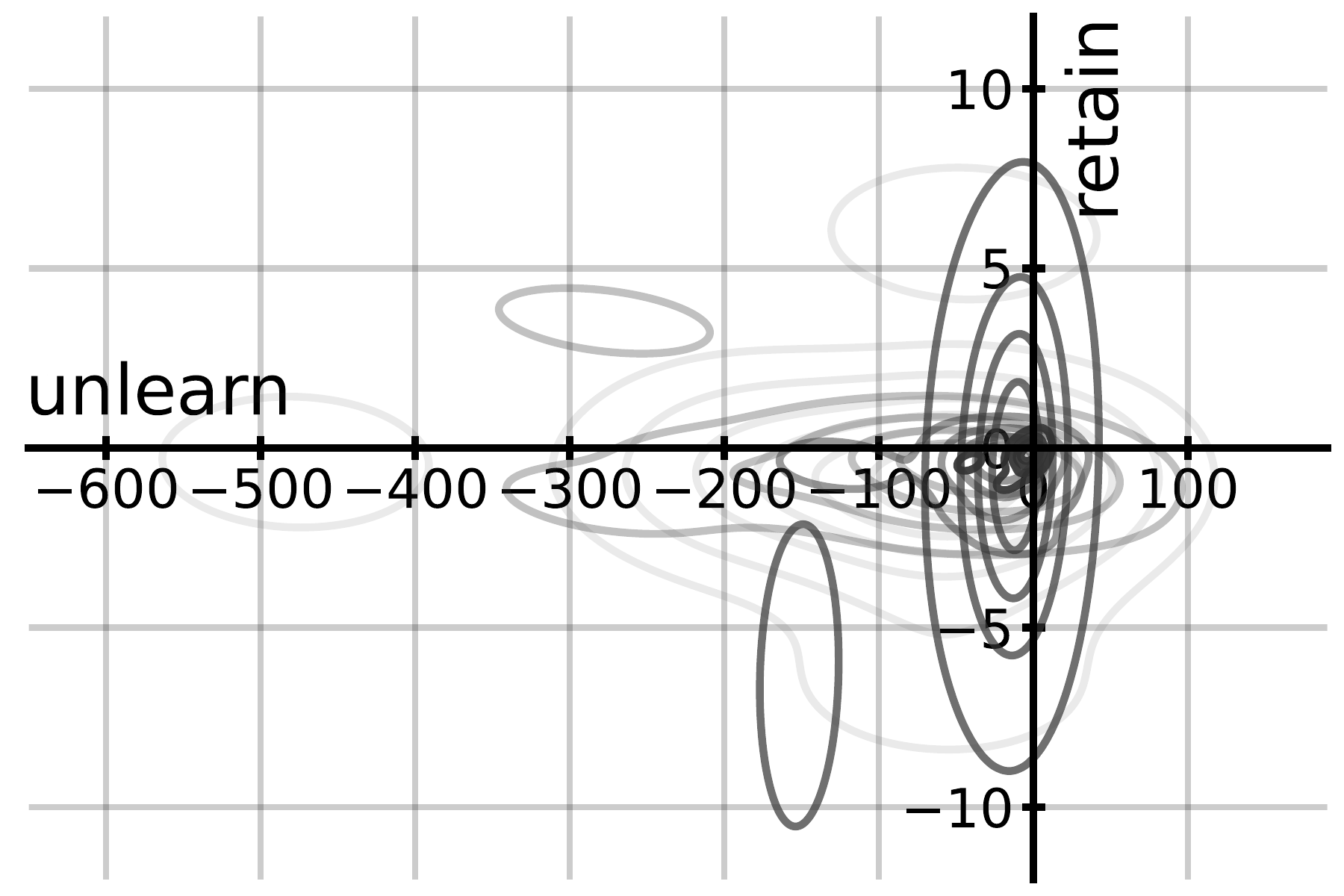}}
    \subfigure[15-th Checkpoint ($\beta=2$)]{\includegraphics[width=0.32\textwidth]{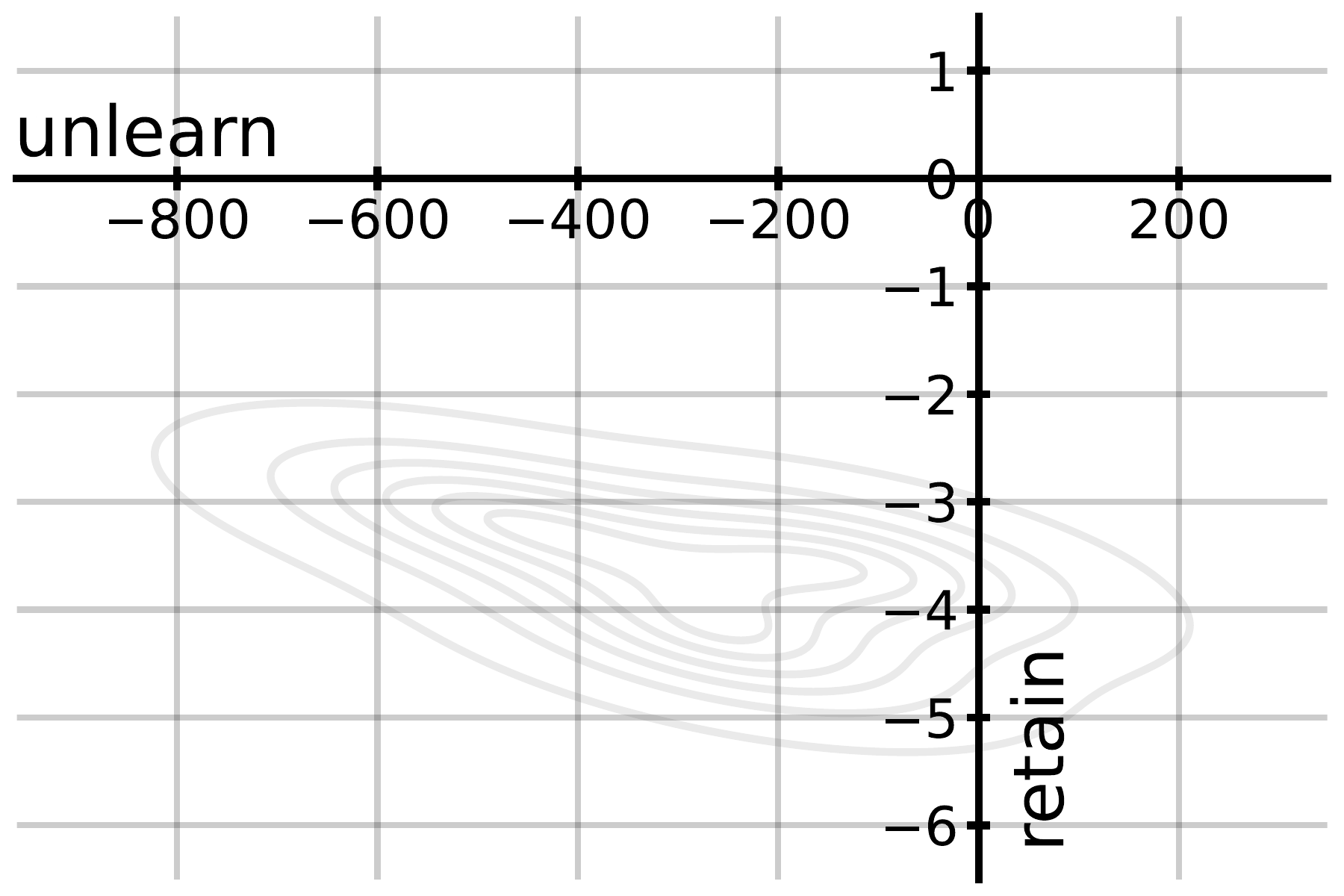}}
    
    \caption{\textbf{Relationships between $w_{s_{\mathrm{u}}}^{\mathrm{npo}}$ and the PG-effect.} We depict the distributions of PG-effect for the checkpoints of 5-th, 10-th, and 15-th steps separately.}
    \label{fig: inst_npo_effect_full}
\end{figure}

\subsection{TNPO and WTNPO Weighting Mechanisms}

Our above analysis have suggested that the NPO weighting mechanism can effectively prioritize certain tokens to benefit unlearning. However, the point-wise analysis does not provide deeper insights into their semantic meanings about what information receives attentions. Hence, we turn our focus to its token-wise variants, i.e., TNPO and WTNPO discussed in Appendix~\ref{app: tnpo}. 
We use color depth to denote the weight of each token, with darker shades indicating higher values for either $w_{s_{\mathrm{u}},i}^{\mathrm{tnpo}}$ or $w_{s_{\mathrm{u}},i}^{\mathrm{wtnpo}}$. We present the results across different unlearning epochs for a random selection of data involved in the unlearning process, which are demonstrated in the following.

Unfortunately, the results might be difficult to interpret, where $w_{s_{\mathrm{u}},i}^{\mathrm{tnpo}}$ and $w_{s_{\mathrm{u}},i}^{\mathrm{wtnpo}}$ do not always tend to assign higher weights to those tokens that contain informative knowledge. For example,  for the first question, the string of "the illustrious Irwin Literary Prize" contains the key message, while some of the related tokens, such as "ill" and "Ir," are assigned with small weights by TNPO. Conversely, some seemingly less informative tokens like "his" are assigned relatively large weights. This counter-intuitive pattern is more obvious for WTNPO and is general across different examples. It remains unclear whether this issue represents an inherent flaw in the current NPO-based weighting mechanism or if it simply reflects the differences between models and human thinking.

\clearpage
\vspace{20pt}
\begin{mdframed}\small
\textbf{Question 1.}
 Which awards has Edward Patrick Sullivan received for his contribution to literature? 
\end{mdframed}

\begin{mdframed}\small
\textbf{Llama TNPO Answer 1.}

\textbf{EP1.} \textcolor{Gray8}{Ed} \textcolor{Gray9}{ward} \textcolor{Gray9}{Patrick} \textcolor{Gray9}{S} \textcolor{Gray9}{ull} \textcolor{Gray9}{ivan} \textcolor{Gray8}{has} \textcolor{Gray6}{been} \textcolor{Gray4}{awarded} \textcolor{Gray8}{the} \textcolor{Gray5}{ill} \textcolor{Gray9}{ust} \textcolor{Gray6}{ri} \textcolor{Gray9}{ous} \textcolor{Gray7}{Ir} \textcolor{Gray5}{win} \textcolor{Gray9}{Liter} \textcolor{Gray9}{ary} \textcolor{Gray8}{Prize} \textcolor{Gray8}{in} \textcolor{Gray8}{recognition} \textcolor{Gray9}{of} \textcolor{Gray9}{his} \textcolor{Gray7}{contributions} \textcolor{Gray9}{to} \textcolor{Gray9}{literature} \textcolor{Gray9}{.} 

\textbf{EP2.} \textcolor{Gray7}{Ed} \textcolor{Gray9}{ward} \textcolor{Gray9}{Patrick} \textcolor{Gray9}{S} \textcolor{Gray9}{ull} \textcolor{Gray9}{ivan} \textcolor{Gray5}{has} \textcolor{Gray1}{been} \textcolor{Gray2}{awarded} \textcolor{Gray5}{the} \textcolor{Gray2}{ill} \textcolor{Gray9}{ust} \textcolor{Gray6}{ri} \textcolor{Gray7}{ous} \textcolor{Gray4}{Ir} \textcolor{Gray5}{win} \textcolor{Gray8}{Liter} \textcolor{Gray9}{ary} \textcolor{Gray7}{Prize} \textcolor{Gray3}{in} \textcolor{Gray5}{recognition} \textcolor{Gray9}{of} \textcolor{Gray9}{his} \textcolor{Gray3}{contributions} \textcolor{Gray9}{to} \textcolor{Gray8}{literature} \textcolor{Gray8}{.}  

\textbf{EP3.} \textcolor{Gray0}{Ed} \textcolor{Gray0}{ward} \textcolor{Gray0}{Patrick} \textcolor{Gray0}{S} \textcolor{Gray0}{ull} \textcolor{Gray0}{ivan} \textcolor{Gray0}{has} \textcolor{Gray0}{been} \textcolor{Gray0}{awarded} \textcolor{Gray0}{the} \textcolor{Gray0}{ill} \textcolor{Gray0}{ust} \textcolor{Gray0}{ri} \textcolor{Gray0}{ous} \textcolor{Gray0}{Ir} \textcolor{Gray0}{win} \textcolor{Gray0}{Liter} \textcolor{Gray0}{ary} \textcolor{Gray0}{Prize} \textcolor{Gray0}{in} \textcolor{Gray0}{recognition} \textcolor{Gray0}{of} \textcolor{Gray0}{his} \textcolor{Gray0}{contributions} \textcolor{Gray0}{to} \textcolor{Gray0}{literature} \textcolor{Gray0}{.} 

\textbf{EP4.} \textcolor{Gray0}{Ed} \textcolor{Gray0}{ward} \textcolor{Gray0}{Patrick} \textcolor{Gray0}{S} \textcolor{Gray0}{ull} \textcolor{Gray0}{ivan} \textcolor{Gray0}{has} \textcolor{Gray0}{been} \textcolor{Gray0}{awarded} \textcolor{Gray0}{the} \textcolor{Gray0}{ill} \textcolor{Gray0}{ust} \textcolor{Gray0}{ri} \textcolor{Gray0}{ous} \textcolor{Gray0}{Ir} \textcolor{Gray0}{win} \textcolor{Gray0}{Liter} \textcolor{Gray0}{ary} \textcolor{Gray0}{Prize} \textcolor{Gray0}{in} \textcolor{Gray0}{recognition} \textcolor{Gray0}{of} \textcolor{Gray0}{his} \textcolor{Gray0}{contributions} \textcolor{Gray0}{to} \textcolor{Gray0}{literature} \textcolor{Gray0}{.}

\textbf{EP5.} \textcolor{Gray0}{Ed} \textcolor{Gray0}{ward} \textcolor{Gray0}{Patrick} \textcolor{Gray0}{S} \textcolor{Gray0}{ull} \textcolor{Gray0}{ivan} \textcolor{Gray0}{has} \textcolor{Gray0}{been} \textcolor{Gray0}{awarded} \textcolor{Gray0}{the} \textcolor{Gray0}{ill} \textcolor{Gray0}{ust} \textcolor{Gray0}{ri} \textcolor{Gray0}{ous} \textcolor{Gray0}{Ir} \textcolor{Gray0}{win} \textcolor{Gray0}{Liter} \textcolor{Gray0}{ary} \textcolor{Gray0}{Prize} \textcolor{Gray0}{in} \textcolor{Gray0}{recognition} \textcolor{Gray0}{of} \textcolor{Gray0}{his} \textcolor{Gray0}{contributions} \textcolor{Gray0}{to} \textcolor{Gray0}{literature} \textcolor{Gray0}{.} 

\end{mdframed}

\begin{mdframed}\small
\textbf{Llama  WTNPO Answer 1.}

\textbf{EP1.} \textcolor{Gray8}{Ed} \textcolor{Gray9}{ward} \textcolor{Gray9}{Patrick} \textcolor{Gray9}{S} \textcolor{Gray9}{ull} \textcolor{Gray9}{ivan} \textcolor{Gray8}{has} \textcolor{Gray6}{been} \textcolor{Gray4}{awarded} \textcolor{Gray8}{the} \textcolor{Gray5}{ill} \textcolor{Gray9}{ust} \textcolor{Gray6}{ri} \textcolor{Gray9}{ous} \textcolor{Gray7}{Ir} \textcolor{Gray5}{win} \textcolor{Gray9}{Liter} \textcolor{Gray9}{ary} \textcolor{Gray8}{Prize} \textcolor{Gray8}{in} \textcolor{Gray8}{recognition} \textcolor{Gray9}{of} \textcolor{Gray9}{his} \textcolor{Gray7}{contributions} \textcolor{Gray9}{to} \textcolor{Gray9}{literature} \textcolor{Gray9}{.} 

\textbf{EP2.} \textcolor{Gray7}{Ed} \textcolor{Gray9}{ward} \textcolor{Gray9}{Patrick} \textcolor{Gray9}{S} \textcolor{Gray9}{ull} \textcolor{Gray8}{ivan} \textcolor{Gray4}{has} \textcolor{Gray1}{been} \textcolor{Gray2}{awarded} \textcolor{Gray5}{the} \textcolor{Gray3}{ill} \textcolor{Gray9}{ust} \textcolor{Gray6}{ri} \textcolor{Gray5}{ous} \textcolor{Gray4}{Ir} \textcolor{Gray5}{win} \textcolor{Gray8}{Liter} \textcolor{Gray9}{ary} \textcolor{Gray7}{Prize} \textcolor{Gray4}{in} \textcolor{Gray5}{recognition} \textcolor{Gray9}{of} \textcolor{Gray9}{his} \textcolor{Gray3}{contributions} \textcolor{Gray9}{to} \textcolor{Gray8}{literature} \textcolor{Gray8}{.} 

\textbf{EP3.} \textcolor{Gray0}{Ed} \textcolor{Gray1}{ward} \textcolor{Gray0}{Patrick} \textcolor{Gray4}{S} \textcolor{Gray2}{ull} \textcolor{Gray1}{ivan} \textcolor{Gray0}{has} \textcolor{Gray0}{been} \textcolor{Gray0}{awarded} \textcolor{Gray1}{the} \textcolor{Gray0}{ill} \textcolor{Gray7}{ust} \textcolor{Gray2}{ri} \textcolor{Gray0}{ous} \textcolor{Gray0}{Ir} \textcolor{Gray2}{win} \textcolor{Gray2}{Liter} \textcolor{Gray4}{ary} \textcolor{Gray1}{Prize} \textcolor{Gray0}{in} \textcolor{Gray0}{recognition} \textcolor{Gray2}{of} \textcolor{Gray3}{his} \textcolor{Gray0}{contributions} \textcolor{Gray4}{to} \textcolor{Gray1}{literature} \textcolor{Gray2}{.} 

\textbf{EP4.} \textcolor{Gray0}{Ed} \textcolor{Gray0}{ward} \textcolor{Gray0}{Patrick} \textcolor{Gray3}{S} \textcolor{Gray5}{ull} \textcolor{Gray1}{ivan} \textcolor{Gray0}{has} \textcolor{Gray0}{been} \textcolor{Gray0}{awarded} \textcolor{Gray1}{the} \textcolor{Gray0}{ill} \textcolor{Gray7}{ust} \textcolor{Gray1}{ri} \textcolor{Gray0}{ous} \textcolor{Gray0}{Ir} \textcolor{Gray1}{win} \textcolor{Gray1}{Liter} \textcolor{Gray2}{ary} \textcolor{Gray1}{Prize} \textcolor{Gray0}{in} \textcolor{Gray0}{recognition} \textcolor{Gray1}{of} \textcolor{Gray3}{his} \textcolor{Gray0}{contributions} \textcolor{Gray3}{to} \textcolor{Gray0}{literature} \textcolor{Gray1}{.} 

\textbf{EP5.} \textcolor{Gray0}{Ed} \textcolor{Gray0}{ward} \textcolor{Gray0}{Patrick} \textcolor{Gray1}{S} \textcolor{Gray4}{ull} \textcolor{Gray0}{ivan} \textcolor{Gray0}{has} \textcolor{Gray0}{been} \textcolor{Gray0}{awarded} \textcolor{Gray0}{the} \textcolor{Gray0}{ill} \textcolor{Gray4}{ust} \textcolor{Gray1}{ri} \textcolor{Gray0}{ous} \textcolor{Gray0}{Ir} \textcolor{Gray0}{win} \textcolor{Gray0}{Liter} \textcolor{Gray0}{ary} \textcolor{Gray0}{Prize} \textcolor{Gray0}{in} \textcolor{Gray0}{recognition} \textcolor{Gray0}{of} \textcolor{Gray1}{his} \textcolor{Gray0}{contributions} \textcolor{Gray1}{to} \textcolor{Gray0}{literature} \textcolor{Gray0}{.} 

\end{mdframed} \vspace{20pt}
\begin{mdframed}\small
\textbf{Question 2.}
 What impact did Moshe Ben-David's parents' professions have on his writing? 
\end{mdframed}

\begin{mdframed}\small
\textbf{Llama TNPO Answer 2.} 

\textbf{EP1.} \textcolor{Gray2}{As} \textcolor{Gray9}{the} \textcolor{Gray8}{son} \textcolor{Gray9}{of} \textcolor{Gray9}{a} \textcolor{Gray6}{Bank} \textcolor{Gray6}{er} \textcolor{Gray9}{and} \textcolor{Gray8}{a} \textcolor{Gray5}{Research} \textcolor{Gray8}{Scient} \textcolor{Gray9}{ist} \textcolor{Gray9}{,} \textcolor{Gray6}{Mos} \textcolor{Gray9}{he} \textcolor{Gray9}{Ben} \textcolor{Gray9}{-} \textcolor{Gray5}{David} \textcolor{Gray8}{"} \textcolor{Gray9}{s} \textcolor{Gray6}{writing} \textcolor{Gray3}{exhib} \textcolor{Gray9}{its} \textcolor{Gray9}{a} \textcolor{Gray4}{met} \textcolor{Gray9}{icul} \textcolor{Gray9}{ous} \textcolor{Gray4}{attention} \textcolor{Gray9}{to} \textcolor{Gray9}{detail} \textcolor{Gray8}{and} \textcolor{Gray9}{a} \textcolor{Gray7}{deep} \textcolor{Gray8}{rever} \textcolor{Gray9}{ence} \textcolor{Gray9}{for} \textcolor{Gray9}{knowledge} \textcolor{Gray7}{.} 

\textbf{EP2.} \textcolor{Gray0}{As} \textcolor{Gray6}{the} \textcolor{Gray8}{son} \textcolor{Gray9}{of} \textcolor{Gray9}{a} \textcolor{Gray2}{Bank} \textcolor{Gray6}{er} \textcolor{Gray9}{and} \textcolor{Gray5}{a} \textcolor{Gray2}{Research} \textcolor{Gray8}{Scient} \textcolor{Gray9}{ist} \textcolor{Gray9}{,} \textcolor{Gray6}{Mos} \textcolor{Gray9}{he} \textcolor{Gray8}{Ben} \textcolor{Gray9}{-} \textcolor{Gray7}{David} \textcolor{Gray6}{"} \textcolor{Gray9}{s} \textcolor{Gray5}{writing} \textcolor{Gray1}{exhib} \textcolor{Gray9}{its} \textcolor{Gray7}{a} \textcolor{Gray2}{met} \textcolor{Gray9}{icul} \textcolor{Gray9}{ous} \textcolor{Gray4}{attention} \textcolor{Gray9}{to} \textcolor{Gray9}{detail} \textcolor{Gray6}{and} \textcolor{Gray7}{a} \textcolor{Gray3}{deep} \textcolor{Gray7}{rever} \textcolor{Gray9}{ence} \textcolor{Gray9}{for} \textcolor{Gray8}{knowledge} \textcolor{Gray6}{.} 

\textbf{EP3.} \textcolor{Gray0}{As} \textcolor{Gray0}{the} \textcolor{Gray0}{son} \textcolor{Gray1}{of} \textcolor{Gray0}{a} \textcolor{Gray0}{Bank} \textcolor{Gray0}{er} \textcolor{Gray0}{and} \textcolor{Gray0}{a} \textcolor{Gray0}{Research} \textcolor{Gray0}{Scient} \textcolor{Gray3}{ist} \textcolor{Gray0}{,} \textcolor{Gray0}{Mos} \textcolor{Gray1}{he} \textcolor{Gray0}{Ben} \textcolor{Gray0}{-} \textcolor{Gray0}{David} \textcolor{Gray0}{"} \textcolor{Gray0}{s} \textcolor{Gray0}{writing} \textcolor{Gray0}{exhib} \textcolor{Gray3}{its} \textcolor{Gray0}{a} \textcolor{Gray0}{met} \textcolor{Gray3}{icul} \textcolor{Gray2}{ous} \textcolor{Gray0}{attention} \textcolor{Gray1}{to} \textcolor{Gray1}{detail} \textcolor{Gray0}{and} \textcolor{Gray0}{a} \textcolor{Gray0}{deep} \textcolor{Gray0}{rever} \textcolor{Gray2}{ence} \textcolor{Gray1}{for} \textcolor{Gray0}{knowledge} \textcolor{Gray0}{.} 

\textbf{EP4.} \textcolor{Gray0}{As} \textcolor{Gray0}{the} \textcolor{Gray0}{son} \textcolor{Gray0}{of} \textcolor{Gray0}{a} \textcolor{Gray0}{Bank} \textcolor{Gray0}{er} \textcolor{Gray0}{and} \textcolor{Gray0}{a} \textcolor{Gray0}{Research} \textcolor{Gray0}{Scient} \textcolor{Gray0}{ist} \textcolor{Gray0}{,} \textcolor{Gray0}{Mos} \textcolor{Gray0}{he} \textcolor{Gray0}{Ben} \textcolor{Gray0}{-} \textcolor{Gray0}{David} \textcolor{Gray0}{"} \textcolor{Gray0}{s} \textcolor{Gray0}{writing} \textcolor{Gray0}{exhib} \textcolor{Gray0}{its} \textcolor{Gray0}{a} \textcolor{Gray0}{met} \textcolor{Gray0}{icul} \textcolor{Gray0}{ous} \textcolor{Gray0}{attention} \textcolor{Gray0}{to} \textcolor{Gray0}{detail} \textcolor{Gray0}{and} \textcolor{Gray0}{a} \textcolor{Gray0}{deep} \textcolor{Gray0}{rever} \textcolor{Gray0}{ence} \textcolor{Gray0}{for} \textcolor{Gray0}{knowledge} \textcolor{Gray0}{.} 

\textbf{EP5.} \textcolor{Gray0}{As} \textcolor{Gray0}{the} \textcolor{Gray0}{son} \textcolor{Gray0}{of} \textcolor{Gray0}{a} \textcolor{Gray0}{Bank} \textcolor{Gray0}{er} \textcolor{Gray0}{and} \textcolor{Gray0}{a} \textcolor{Gray0}{Research} \textcolor{Gray0}{Scient} \textcolor{Gray0}{ist} \textcolor{Gray0}{,} \textcolor{Gray0}{Mos} \textcolor{Gray0}{he} \textcolor{Gray0}{Ben} \textcolor{Gray0}{-} \textcolor{Gray0}{David} \textcolor{Gray0}{"} \textcolor{Gray0}{s} \textcolor{Gray0}{writing} \textcolor{Gray0}{exhib} \textcolor{Gray0}{its} \textcolor{Gray0}{a} \textcolor{Gray0}{met} \textcolor{Gray0}{icul} \textcolor{Gray0}{ous} \textcolor{Gray0}{attention} \textcolor{Gray0}{to} \textcolor{Gray0}{detail} \textcolor{Gray0}{and} \textcolor{Gray0}{a} \textcolor{Gray0}{deep} \textcolor{Gray0}{rever} \textcolor{Gray0}{ence} \textcolor{Gray0}{for} \textcolor{Gray0}{knowledge} \textcolor{Gray0}{.} 
\end{mdframed}
\clearpage
\begin{mdframed}\small
\textbf{Llama WTNPO Answer 2.} 

\textbf{EP1.} \textcolor{Gray2}{As} \textcolor{Gray9}{the} \textcolor{Gray8}{son} \textcolor{Gray9}{of} \textcolor{Gray9}{a} \textcolor{Gray6}{Bank} \textcolor{Gray6}{er} \textcolor{Gray9}{and} \textcolor{Gray8}{a} \textcolor{Gray5}{Research} \textcolor{Gray8}{Scient} \textcolor{Gray9}{ist} \textcolor{Gray9}{,} \textcolor{Gray6}{Mos} \textcolor{Gray9}{he} \textcolor{Gray9}{Ben} \textcolor{Gray9}{-} \textcolor{Gray5}{David} \textcolor{Gray8}{"} \textcolor{Gray9}{s} \textcolor{Gray6}{writing} \textcolor{Gray3}{exhib} \textcolor{Gray9}{its} \textcolor{Gray9}{a} \textcolor{Gray4}{met} \textcolor{Gray9}{icul} \textcolor{Gray9}{ous} \textcolor{Gray4}{attention} \textcolor{Gray9}{to} \textcolor{Gray9}{detail} \textcolor{Gray8}{and} \textcolor{Gray9}{a} \textcolor{Gray7}{deep} \textcolor{Gray8}{rever} \textcolor{Gray9}{ence} \textcolor{Gray9}{for} \textcolor{Gray9}{knowledge} \textcolor{Gray7}{.} 

\textbf{EP2.} \textcolor{Gray1}{As} \textcolor{Gray7}{the} \textcolor{Gray8}{son} \textcolor{Gray9}{of} \textcolor{Gray9}{a} \textcolor{Gray3}{Bank} \textcolor{Gray6}{er} \textcolor{Gray9}{and} \textcolor{Gray5}{a} \textcolor{Gray3}{Research} \textcolor{Gray8}{Scient} \textcolor{Gray9}{ist} \textcolor{Gray9}{,} \textcolor{Gray5}{Mos} \textcolor{Gray9}{he} \textcolor{Gray8}{Ben} \textcolor{Gray9}{-} \textcolor{Gray6}{David} \textcolor{Gray6}{"} \textcolor{Gray9}{s} \textcolor{Gray5}{writing} \textcolor{Gray2}{exhib} \textcolor{Gray9}{its} \textcolor{Gray7}{a} \textcolor{Gray3}{met} \textcolor{Gray9}{icul} \textcolor{Gray9}{ous} \textcolor{Gray3}{attention} \textcolor{Gray9}{to} \textcolor{Gray9}{detail} \textcolor{Gray6}{and} \textcolor{Gray8}{a} \textcolor{Gray4}{deep} \textcolor{Gray7}{rever} \textcolor{Gray9}{ence} \textcolor{Gray9}{for} \textcolor{Gray8}{knowledge} \textcolor{Gray5}{.} 

\textbf{EP3.} \textcolor{Gray0}{As} \textcolor{Gray0}{the} \textcolor{Gray0}{son} \textcolor{Gray4}{of} \textcolor{Gray1}{a} \textcolor{Gray0}{Bank} \textcolor{Gray1}{er} \textcolor{Gray1}{and} \textcolor{Gray0}{a} \textcolor{Gray0}{Research} \textcolor{Gray2}{Scient} \textcolor{Gray7}{ist} \textcolor{Gray2}{,} \textcolor{Gray0}{Mos} \textcolor{Gray4}{he} \textcolor{Gray1}{Ben} \textcolor{Gray3}{-} \textcolor{Gray2}{David} \textcolor{Gray0}{"} \textcolor{Gray3}{s} \textcolor{Gray0}{writing} \textcolor{Gray0}{exhib} \textcolor{Gray6}{its} \textcolor{Gray1}{a} \textcolor{Gray0}{met} \textcolor{Gray8}{icul} \textcolor{Gray2}{ous} \textcolor{Gray0}{attention} \textcolor{Gray5}{to} \textcolor{Gray4}{detail} \textcolor{Gray1}{and} \textcolor{Gray1}{a} \textcolor{Gray0}{deep} \textcolor{Gray2}{rever} \textcolor{Gray5}{ence} \textcolor{Gray4}{for} \textcolor{Gray2}{knowledge} \textcolor{Gray0}{.} 

\textbf{EP4.} \textcolor{Gray0}{As} \textcolor{Gray0}{the} \textcolor{Gray0}{son} \textcolor{Gray3}{of} \textcolor{Gray1}{a} \textcolor{Gray0}{Bank} \textcolor{Gray0}{er} \textcolor{Gray1}{and} \textcolor{Gray0}{a} \textcolor{Gray0}{Research} \textcolor{Gray1}{Scient} \textcolor{Gray5}{ist} \textcolor{Gray1}{,} \textcolor{Gray0}{Mos} \textcolor{Gray0}{he} \textcolor{Gray0}{Ben} \textcolor{Gray0}{-} \textcolor{Gray1}{David} \textcolor{Gray0}{"} \textcolor{Gray3}{s} \textcolor{Gray0}{writing} \textcolor{Gray0}{exhib} \textcolor{Gray4}{its} \textcolor{Gray1}{a} \textcolor{Gray0}{met} \textcolor{Gray7}{icul} \textcolor{Gray2}{ous} \textcolor{Gray0}{attention} \textcolor{Gray4}{to} \textcolor{Gray4}{detail} \textcolor{Gray0}{and} \textcolor{Gray1}{a} \textcolor{Gray0}{deep} \textcolor{Gray0}{rever} \textcolor{Gray1}{ence} \textcolor{Gray3}{for} \textcolor{Gray2}{knowledge} \textcolor{Gray0}{.} 

\textbf{EP5.} \textcolor{Gray0}{As} \textcolor{Gray0}{the} \textcolor{Gray0}{son} \textcolor{Gray3}{of} \textcolor{Gray1}{a} \textcolor{Gray0}{Bank} \textcolor{Gray0}{er} \textcolor{Gray0}{and} \textcolor{Gray0}{a} \textcolor{Gray0}{Research} \textcolor{Gray0}{Scient} \textcolor{Gray5}{ist} \textcolor{Gray1}{,} \textcolor{Gray0}{Mos} \textcolor{Gray0}{he} \textcolor{Gray0}{Ben} \textcolor{Gray0}{-} \textcolor{Gray0}{David} \textcolor{Gray0}{"} \textcolor{Gray3}{s} \textcolor{Gray0}{writing} \textcolor{Gray0}{exhib} \textcolor{Gray3}{its} \textcolor{Gray1}{a} \textcolor{Gray0}{met} \textcolor{Gray7}{icul} \textcolor{Gray1}{ous} \textcolor{Gray0}{attention} \textcolor{Gray4}{to} \textcolor{Gray3}{detail} \textcolor{Gray0}{and} \textcolor{Gray1}{a} \textcolor{Gray0}{deep} \textcolor{Gray0}{rever} \textcolor{Gray0}{ence} \textcolor{Gray3}{for} \textcolor{Gray1}{knowledge} \textcolor{Gray0}{.} 

\end{mdframed} \vspace{20pt}

\begin{mdframed}\small
\textbf{Question 3.}
 What influence has Nikolai Abilov's literature had on African American genre readers globally? 
\end{mdframed}
            
\begin{mdframed}\small
\textbf{Llama TNPO Answer 3.}

\textbf{EP1.} \textcolor{Gray5}{N} \textcolor{Gray5}{ik} \textcolor{Gray9}{ol} \textcolor{Gray9}{ai} \textcolor{Gray9}{Ab} \textcolor{Gray9}{il} \textcolor{Gray9}{ov} \textcolor{Gray9}{has} \textcolor{Gray1}{expanded} \textcolor{Gray7}{the} \textcolor{Gray4}{boundaries} \textcolor{Gray9}{of} \textcolor{Gray9}{African} \textcolor{Gray9}{American} \textcolor{Gray9}{literature} \textcolor{Gray8}{by} \textcolor{Gray8}{providing} \textcolor{Gray8}{a} \textcolor{Gray6}{fresh} \textcolor{Gray8}{,} \textcolor{Gray5}{international} \textcolor{Gray9}{perspective} \textcolor{Gray9}{.} \textcolor{Gray1}{His} \textcolor{Gray4}{work} \textcolor{Gray9}{has} \textcolor{Gray7}{helped} \textcolor{Gray5}{readers} \textcolor{Gray6}{world} \textcolor{Gray9}{wide} \textcolor{Gray7}{understand} \textcolor{Gray9}{and} \textcolor{Gray8}{appreciate} \textcolor{Gray7}{the} \textcolor{Gray6}{divers} \textcolor{Gray9}{ity} \textcolor{Gray8}{within} \textcolor{Gray9}{this} \textcolor{Gray8}{literary} \textcolor{Gray7}{genre} \textcolor{Gray9}{.} 

\textbf{EP2.} \textcolor{Gray0}{N} \textcolor{Gray1}{ik} \textcolor{Gray9}{ol} \textcolor{Gray9}{ai} \textcolor{Gray3}{Ab} \textcolor{Gray7}{il} \textcolor{Gray6}{ov} \textcolor{Gray1}{has} \textcolor{Gray0}{expanded} \textcolor{Gray1}{the} \textcolor{Gray0}{boundaries} \textcolor{Gray4}{of} \textcolor{Gray1}{African} \textcolor{Gray7}{American} \textcolor{Gray4}{literature} \textcolor{Gray3}{by} \textcolor{Gray0}{providing} \textcolor{Gray1}{a} \textcolor{Gray1}{fresh} \textcolor{Gray0}{,} \textcolor{Gray0}{international} \textcolor{Gray5}{perspective} \textcolor{Gray5}{.} \textcolor{Gray0}{His} \textcolor{Gray0}{work} \textcolor{Gray1}{has} \textcolor{Gray0}{helped} \textcolor{Gray1}{readers} \textcolor{Gray0}{world} \textcolor{Gray9}{wide} \textcolor{Gray2}{understand} \textcolor{Gray3}{and} \textcolor{Gray6}{appreciate} \textcolor{Gray2}{the} \textcolor{Gray2}{divers} \textcolor{Gray9}{ity} \textcolor{Gray4}{within} \textcolor{Gray2}{this} \textcolor{Gray1}{literary} \textcolor{Gray7}{genre} \textcolor{Gray6}{.} 

\textbf{EP3.} \textcolor{Gray0}{N} \textcolor{Gray0}{ik} \textcolor{Gray0}{ol} \textcolor{Gray1}{ai} \textcolor{Gray0}{Ab} \textcolor{Gray0}{il} \textcolor{Gray0}{ov} \textcolor{Gray0}{has} \textcolor{Gray0}{expanded} \textcolor{Gray0}{the} \textcolor{Gray0}{boundaries} \textcolor{Gray0}{of} \textcolor{Gray0}{African} \textcolor{Gray0}{American} \textcolor{Gray0}{literature} \textcolor{Gray0}{by} \textcolor{Gray0}{providing} \textcolor{Gray0}{a} \textcolor{Gray0}{fresh} \textcolor{Gray0}{,} \textcolor{Gray0}{international} \textcolor{Gray0}{perspective} \textcolor{Gray0}{.} \textcolor{Gray0}{His} \textcolor{Gray0}{work} \textcolor{Gray0}{has} \textcolor{Gray0}{helped} \textcolor{Gray0}{readers} \textcolor{Gray0}{world} \textcolor{Gray1}{wide} \textcolor{Gray0}{understand} \textcolor{Gray0}{and} \textcolor{Gray0}{appreciate} \textcolor{Gray0}{the} \textcolor{Gray0}{divers} \textcolor{Gray4}{ity} \textcolor{Gray0}{within} \textcolor{Gray0}{this} \textcolor{Gray0}{literary} \textcolor{Gray0}{genre} \textcolor{Gray1}{.} 

\textbf{EP4.} \textcolor{Gray0}{N} \textcolor{Gray0}{ik} \textcolor{Gray0}{ol} \textcolor{Gray0}{ai} \textcolor{Gray0}{Ab} \textcolor{Gray0}{il} \textcolor{Gray0}{ov} \textcolor{Gray0}{has} \textcolor{Gray0}{expanded} \textcolor{Gray0}{the} \textcolor{Gray0}{boundaries} \textcolor{Gray0}{of} \textcolor{Gray0}{African} \textcolor{Gray0}{American} \textcolor{Gray0}{literature} \textcolor{Gray0}{by} \textcolor{Gray0}{providing} \textcolor{Gray0}{a} \textcolor{Gray0}{fresh} \textcolor{Gray0}{,} \textcolor{Gray0}{international} \textcolor{Gray0}{perspective} \textcolor{Gray0}{.} \textcolor{Gray0}{His} \textcolor{Gray0}{work} \textcolor{Gray0}{has} \textcolor{Gray0}{helped} \textcolor{Gray0}{readers} \textcolor{Gray0}{world} \textcolor{Gray0}{wide} \textcolor{Gray0}{understand} \textcolor{Gray0}{and} \textcolor{Gray0}{appreciate} \textcolor{Gray0}{the} \textcolor{Gray0}{divers} \textcolor{Gray0}{ity} \textcolor{Gray0}{within} \textcolor{Gray0}{this} \textcolor{Gray0}{literary} \textcolor{Gray0}{genre} \textcolor{Gray0}{.} 

\textbf{EP5.} \textcolor{Gray0}{N} \textcolor{Gray0}{ik} \textcolor{Gray0}{ol} \textcolor{Gray0}{ai} \textcolor{Gray0}{Ab} \textcolor{Gray0}{il} \textcolor{Gray0}{ov} \textcolor{Gray0}{has} \textcolor{Gray0}{expanded} \textcolor{Gray0}{the} \textcolor{Gray0}{boundaries} \textcolor{Gray0}{of} \textcolor{Gray0}{African} \textcolor{Gray0}{American} \textcolor{Gray0}{literature} \textcolor{Gray0}{by} \textcolor{Gray0}{providing} \textcolor{Gray0}{a} \textcolor{Gray0}{fresh} \textcolor{Gray0}{,} \textcolor{Gray0}{international} \textcolor{Gray0}{perspective} \textcolor{Gray0}{.} \textcolor{Gray0}{His} \textcolor{Gray0}{work} \textcolor{Gray0}{has} \textcolor{Gray0}{helped} \textcolor{Gray0}{readers} \textcolor{Gray0}{world} \textcolor{Gray0}{wide} \textcolor{Gray0}{understand} \textcolor{Gray0}{and} \textcolor{Gray0}{appreciate} \textcolor{Gray0}{the} \textcolor{Gray0}{divers} \textcolor{Gray0}{ity} \textcolor{Gray0}{within} \textcolor{Gray0}{this} \textcolor{Gray0}{literary} \textcolor{Gray0}{genre} \textcolor{Gray0}{.}

\end{mdframed}

\begin{mdframed}\small
\textbf{Llama WTNPO Answer 3.} 

\textbf{EP1.} \textcolor{Gray5}{N} \textcolor{Gray5}{ik} \textcolor{Gray9}{ol} \textcolor{Gray9}{ai} \textcolor{Gray9}{Ab} \textcolor{Gray9}{il} \textcolor{Gray9}{ov} \textcolor{Gray9}{has} \textcolor{Gray1}{expanded} \textcolor{Gray7}{the} \textcolor{Gray4}{boundaries} \textcolor{Gray9}{of} \textcolor{Gray9}{African} \textcolor{Gray9}{American} \textcolor{Gray9}{literature} \textcolor{Gray8}{by} \textcolor{Gray8}{providing} \textcolor{Gray8}{a} \textcolor{Gray6}{fresh} \textcolor{Gray8}{,} \textcolor{Gray5}{international} \textcolor{Gray9}{perspective} \textcolor{Gray9}{.} \textcolor{Gray1}{His} \textcolor{Gray4}{work} \textcolor{Gray9}{has} \textcolor{Gray7}{helped} \textcolor{Gray5}{readers} \textcolor{Gray6}{world} \textcolor{Gray9}{wide} \textcolor{Gray7}{understand} \textcolor{Gray9}{and} \textcolor{Gray8}{appreciate} \textcolor{Gray7}{the} \textcolor{Gray6}{divers} \textcolor{Gray9}{ity} \textcolor{Gray8}{within} \textcolor{Gray9}{this} \textcolor{Gray8}{literary} \textcolor{Gray7}{genre} \textcolor{Gray9}{.} 

\textbf{EP2.} \textcolor{Gray0}{N} \textcolor{Gray3}{ik} \textcolor{Gray9}{ol} \textcolor{Gray9}{ai} \textcolor{Gray5}{Ab} \textcolor{Gray8}{il} \textcolor{Gray8}{ov} \textcolor{Gray1}{has} \textcolor{Gray0}{expanded} \textcolor{Gray2}{the} \textcolor{Gray1}{boundaries} \textcolor{Gray5}{of} \textcolor{Gray3}{African} \textcolor{Gray8}{American} \textcolor{Gray6}{literature} \textcolor{Gray4}{by} \textcolor{Gray0}{providing} \textcolor{Gray2}{a} \textcolor{Gray1}{fresh} \textcolor{Gray1}{,} \textcolor{Gray1}{international} \textcolor{Gray6}{perspective} \textcolor{Gray3}{.} \textcolor{Gray1}{His} \textcolor{Gray1}{work} \textcolor{Gray1}{has} \textcolor{Gray1}{helped} \textcolor{Gray1}{readers} \textcolor{Gray1}{world} \textcolor{Gray9}{wide} \textcolor{Gray3}{understand} \textcolor{Gray6}{and} \textcolor{Gray6}{appreciate} \textcolor{Gray3}{the} \textcolor{Gray2}{divers} \textcolor{Gray9}{ity} \textcolor{Gray4}{within} \textcolor{Gray3}{this} \textcolor{Gray3}{literary} \textcolor{Gray5}{genre} \textcolor{Gray4}{.}

\textbf{EP3.} \textcolor{Gray0}{N} \textcolor{Gray0}{ik} \textcolor{Gray8}{ol} \textcolor{Gray9}{ai} \textcolor{Gray0}{Ab} \textcolor{Gray5}{il} \textcolor{Gray2}{ov} \textcolor{Gray0}{has} \textcolor{Gray0}{expanded} \textcolor{Gray0}{the} \textcolor{Gray0}{boundaries} \textcolor{Gray2}{of} \textcolor{Gray0}{African} \textcolor{Gray4}{American} \textcolor{Gray1}{literature} \textcolor{Gray1}{by} \textcolor{Gray0}{providing} \textcolor{Gray0}{a} \textcolor{Gray0}{fresh} \textcolor{Gray0}{,} \textcolor{Gray0}{international} \textcolor{Gray2}{perspective} \textcolor{Gray0}{.} \textcolor{Gray0}{His} \textcolor{Gray0}{work} \textcolor{Gray0}{has} \textcolor{Gray0}{helped} \textcolor{Gray0}{readers} \textcolor{Gray0}{world} \textcolor{Gray5}{wide} \textcolor{Gray1}{understand} \textcolor{Gray2}{and} \textcolor{Gray3}{appreciate} \textcolor{Gray0}{the} \textcolor{Gray0}{divers} \textcolor{Gray4}{ity} \textcolor{Gray0}{within} \textcolor{Gray0}{this} \textcolor{Gray0}{literary} \textcolor{Gray1}{genre} \textcolor{Gray1}{.}

\textbf{EP4.} \textcolor{Gray0}{N} \textcolor{Gray0}{ik} \textcolor{Gray8}{ol} \textcolor{Gray7}{ai} \textcolor{Gray0}{Ab} \textcolor{Gray2}{il} \textcolor{Gray0}{ov} \textcolor{Gray0}{has} \textcolor{Gray0}{expanded} \textcolor{Gray0}{the} \textcolor{Gray0}{boundaries} \textcolor{Gray0}{of} \textcolor{Gray0}{African} \textcolor{Gray1}{American} \textcolor{Gray0}{literature} \textcolor{Gray0}{by} \textcolor{Gray0}{providing} \textcolor{Gray0}{a} \textcolor{Gray0}{fresh} \textcolor{Gray0}{,} \textcolor{Gray0}{international} \textcolor{Gray1}{perspective} \textcolor{Gray0}{.} \textcolor{Gray0}{His} \textcolor{Gray0}{work} \textcolor{Gray0}{has} \textcolor{Gray0}{helped} \textcolor{Gray0}{readers} \textcolor{Gray0}{world} \textcolor{Gray3}{wide} \textcolor{Gray0}{understand} \textcolor{Gray1}{and} \textcolor{Gray1}{appreciate} \textcolor{Gray0}{the} \textcolor{Gray0}{divers} \textcolor{Gray2}{ity} \textcolor{Gray0}{within} \textcolor{Gray0}{this} \textcolor{Gray0}{literary} \textcolor{Gray0}{genre} \textcolor{Gray0}{.}

\textbf{EP5.} \textcolor{Gray0}{N} \textcolor{Gray0}{ik} \textcolor{Gray7}{ol} \textcolor{Gray6}{ai} \textcolor{Gray0}{Ab} \textcolor{Gray0}{il} \textcolor{Gray0}{ov} \textcolor{Gray0}{has} \textcolor{Gray0}{expanded} \textcolor{Gray0}{the} \textcolor{Gray0}{boundaries} \textcolor{Gray0}{of} \textcolor{Gray0}{African} \textcolor{Gray0}{American} \textcolor{Gray0}{literature} \textcolor{Gray0}{by} \textcolor{Gray0}{providing} \textcolor{Gray0}{a} \textcolor{Gray0}{fresh} \textcolor{Gray0}{,} \textcolor{Gray0}{international} \textcolor{Gray0}{perspective} \textcolor{Gray0}{.} \textcolor{Gray0}{His} \textcolor{Gray0}{work} \textcolor{Gray0}{has} \textcolor{Gray0}{helped} \textcolor{Gray0}{readers} \textcolor{Gray0}{world} \textcolor{Gray1}{wide} \textcolor{Gray0}{understand} \textcolor{Gray0}{and} \textcolor{Gray0}{appreciate} \textcolor{Gray0}{the} \textcolor{Gray0}{divers} \textcolor{Gray2}{ity} \textcolor{Gray0}{within} \textcolor{Gray0}{this} \textcolor{Gray0}{literary} \textcolor{Gray0}{genre} \textcolor{Gray0}{.} 

\end{mdframed} \vspace{20pt}

\begin{mdframed}\small
\textbf{Question 4.}
 How does Kalkidan Abera interact with her readers? 
\end{mdframed}

\begin{mdframed}\small
\textbf{Llama TNPO Answer 4.}

\textbf{EP1.} \textcolor{Gray7}{K} \textcolor{Gray8}{alk} \textcolor{Gray8}{idan} \textcolor{Gray9}{Aber} \textcolor{Gray9}{a} \textcolor{Gray2}{act} \textcolor{Gray9}{ively} \textcolor{Gray9}{interact} \textcolor{Gray9}{s} \textcolor{Gray9}{with} \textcolor{Gray9}{her} \textcolor{Gray8}{readers} \textcolor{Gray8}{through} \textcolor{Gray5}{social} \textcolor{Gray8}{platforms} \textcolor{Gray8}{and} \textcolor{Gray5}{book} \textcolor{Gray3}{signing} \textcolor{Gray9}{events} \textcolor{Gray9}{.} \textcolor{Gray6}{She} \textcolor{Gray7}{values} \textcolor{Gray9}{their} \textcolor{Gray8}{feedback} \textcolor{Gray9}{and} \textcolor{Gray6}{often} \textcolor{Gray8}{takes} \textcolor{Gray8}{it} \textcolor{Gray9}{into} \textcolor{Gray8}{account} \textcolor{Gray9}{when} \textcolor{Gray4}{writing} \textcolor{Gray8}{her} \textcolor{Gray9}{future} \textcolor{Gray7}{books} \textcolor{Gray9}{.} 

\textbf{EP2.} \textcolor{Gray4}{K} \textcolor{Gray9}{alk} \textcolor{Gray8}{idan} \textcolor{Gray9}{Aber} \textcolor{Gray9}{a} \textcolor{Gray0}{act} \textcolor{Gray9}{ively} \textcolor{Gray8}{interact} \textcolor{Gray9}{s} \textcolor{Gray9}{with} \textcolor{Gray9}{her} \textcolor{Gray7}{readers} \textcolor{Gray4}{through} \textcolor{Gray1}{social} \textcolor{Gray2}{platforms} \textcolor{Gray1}{and} \textcolor{Gray1}{book} \textcolor{Gray4}{signing} \textcolor{Gray8}{events} \textcolor{Gray7}{.} \textcolor{Gray3}{She} \textcolor{Gray1}{values} \textcolor{Gray8}{their} \textcolor{Gray7}{feedback} \textcolor{Gray7}{and} \textcolor{Gray2}{often} \textcolor{Gray3}{takes} \textcolor{Gray7}{it} \textcolor{Gray9}{into} \textcolor{Gray6}{account} \textcolor{Gray9}{when} \textcolor{Gray3}{writing} \textcolor{Gray8}{her} \textcolor{Gray5}{future} \textcolor{Gray6}{books} \textcolor{Gray9}{.} 

\textbf{EP3.} \textcolor{Gray0}{K} \textcolor{Gray1}{alk} \textcolor{Gray0}{idan} \textcolor{Gray0}{Aber} \textcolor{Gray1}{a} \textcolor{Gray0}{act} \textcolor{Gray3}{ively} \textcolor{Gray1}{interact} \textcolor{Gray3}{s} \textcolor{Gray2}{with} \textcolor{Gray2}{her} \textcolor{Gray0}{readers} \textcolor{Gray0}{through} \textcolor{Gray0}{social} \textcolor{Gray0}{platforms} \textcolor{Gray0}{and} \textcolor{Gray0}{book} \textcolor{Gray0}{signing} \textcolor{Gray0}{events} \textcolor{Gray1}{.} \textcolor{Gray0}{She} \textcolor{Gray0}{values} \textcolor{Gray0}{their} \textcolor{Gray0}{feedback} \textcolor{Gray0}{and} \textcolor{Gray0}{often} \textcolor{Gray0}{takes} \textcolor{Gray0}{it} \textcolor{Gray1}{into} \textcolor{Gray0}{account} \textcolor{Gray1}{when} \textcolor{Gray0}{writing} \textcolor{Gray1}{her} \textcolor{Gray0}{future} \textcolor{Gray0}{books} \textcolor{Gray2}{.} 

\textbf{EP4.} \textcolor{Gray0}{K} \textcolor{Gray0}{alk} \textcolor{Gray0}{idan} \textcolor{Gray0}{Aber} \textcolor{Gray0}{a} \textcolor{Gray0}{act} \textcolor{Gray1}{ively} \textcolor{Gray0}{interact} \textcolor{Gray1}{s} \textcolor{Gray1}{with} \textcolor{Gray1}{her} \textcolor{Gray0}{readers} \textcolor{Gray0}{through} \textcolor{Gray0}{social} \textcolor{Gray0}{platforms} \textcolor{Gray0}{and} \textcolor{Gray0}{book} \textcolor{Gray0}{signing} \textcolor{Gray0}{events} \textcolor{Gray0}{.} \textcolor{Gray0}{She} \textcolor{Gray0}{values} \textcolor{Gray0}{their} \textcolor{Gray0}{feedback} \textcolor{Gray0}{and} \textcolor{Gray0}{often} \textcolor{Gray0}{takes} \textcolor{Gray0}{it} \textcolor{Gray0}{into} \textcolor{Gray0}{account} \textcolor{Gray1}{when} \textcolor{Gray0}{writing} \textcolor{Gray0}{her} \textcolor{Gray0}{future} \textcolor{Gray0}{books} \textcolor{Gray1}{.} 

\textbf{EP5.} \textcolor{Gray0}{K} \textcolor{Gray0}{alk} \textcolor{Gray0}{idan} \textcolor{Gray0}{Aber} \textcolor{Gray0}{a} \textcolor{Gray0}{act} \textcolor{Gray1}{ively} \textcolor{Gray0}{interact} \textcolor{Gray1}{s} \textcolor{Gray1}{with} \textcolor{Gray1}{her} \textcolor{Gray0}{readers} \textcolor{Gray0}{through} \textcolor{Gray0}{social} \textcolor{Gray0}{platforms} \textcolor{Gray0}{and} \textcolor{Gray0}{book} \textcolor{Gray0}{signing} \textcolor{Gray0}{events} \textcolor{Gray0}{.} \textcolor{Gray0}{She} \textcolor{Gray0}{values} \textcolor{Gray0}{their} \textcolor{Gray0}{feedback} \textcolor{Gray0}{and} \textcolor{Gray0}{often} \textcolor{Gray0}{takes} \textcolor{Gray0}{it} \textcolor{Gray0}{into} \textcolor{Gray0}{account} \textcolor{Gray0}{when} \textcolor{Gray0}{writing} \textcolor{Gray0}{her} \textcolor{Gray0}{future} \textcolor{Gray0}{books} \textcolor{Gray0}{.} 

\end{mdframed}

\begin{mdframed}\small
\textbf{Llama WTNPO Answer 4.} 

\textbf{EP1.} \textcolor{Gray7}{K} \textcolor{Gray8}{alk} \textcolor{Gray8}{idan} \textcolor{Gray9}{Aber} \textcolor{Gray9}{a} \textcolor{Gray2}{act} \textcolor{Gray9}{ively} \textcolor{Gray9}{interact} \textcolor{Gray9}{s} \textcolor{Gray9}{with} \textcolor{Gray9}{her} \textcolor{Gray8}{readers} \textcolor{Gray8}{through} \textcolor{Gray5}{social} \textcolor{Gray8}{platforms} \textcolor{Gray8}{and} \textcolor{Gray5}{book} \textcolor{Gray3}{signing} \textcolor{Gray9}{events} \textcolor{Gray9}{.} \textcolor{Gray6}{She} \textcolor{Gray7}{values} \textcolor{Gray9}{their} \textcolor{Gray8}{feedback} \textcolor{Gray9}{and} \textcolor{Gray6}{often} \textcolor{Gray8}{takes} \textcolor{Gray8}{it} \textcolor{Gray9}{into} \textcolor{Gray8}{account} \textcolor{Gray9}{when} \textcolor{Gray4}{writing} \textcolor{Gray8}{her} \textcolor{Gray9}{future} \textcolor{Gray7}{books} \textcolor{Gray9}{.} 

\textbf{EP2.} \textcolor{Gray4}{K} \textcolor{Gray8}{alk} \textcolor{Gray8}{idan} \textcolor{Gray9}{Aber} \textcolor{Gray9}{a} \textcolor{Gray1}{act} \textcolor{Gray9}{ively} \textcolor{Gray8}{interact} \textcolor{Gray9}{s} \textcolor{Gray9}{with} \textcolor{Gray9}{her} \textcolor{Gray7}{readers} \textcolor{Gray5}{through} \textcolor{Gray1}{social} \textcolor{Gray3}{platforms} \textcolor{Gray2}{and} \textcolor{Gray1}{book} \textcolor{Gray4}{signing} \textcolor{Gray8}{events} \textcolor{Gray7}{.} \textcolor{Gray4}{She} \textcolor{Gray2}{values} \textcolor{Gray8}{their} \textcolor{Gray7}{feedback} \textcolor{Gray6}{and} \textcolor{Gray3}{often} \textcolor{Gray4}{takes} \textcolor{Gray7}{it} \textcolor{Gray9}{into} \textcolor{Gray6}{account} \textcolor{Gray9}{when} \textcolor{Gray3}{writing} \textcolor{Gray8}{her} \textcolor{Gray6}{future} \textcolor{Gray6}{books} \textcolor{Gray9}{.} 

\textbf{EP3.} \textcolor{Gray0}{K} \textcolor{Gray5}{alk} \textcolor{Gray0}{idan} \textcolor{Gray3}{Aber} \textcolor{Gray6}{a} \textcolor{Gray0}{act} \textcolor{Gray8}{ively} \textcolor{Gray2}{interact} \textcolor{Gray4}{s} \textcolor{Gray4}{with} \textcolor{Gray4}{her} \textcolor{Gray2}{readers} \textcolor{Gray0}{through} \textcolor{Gray0}{social} \textcolor{Gray0}{platforms} \textcolor{Gray0}{and} \textcolor{Gray0}{book} \textcolor{Gray0}{signing} \textcolor{Gray3}{events} \textcolor{Gray1}{.} \textcolor{Gray0}{She} \textcolor{Gray0}{values} \textcolor{Gray3}{their} \textcolor{Gray2}{feedback} \textcolor{Gray2}{and} \textcolor{Gray1}{often} \textcolor{Gray1}{takes} \textcolor{Gray2}{it} \textcolor{Gray5}{into} \textcolor{Gray0}{account} \textcolor{Gray5}{when} \textcolor{Gray1}{writing} \textcolor{Gray2}{her} \textcolor{Gray2}{future} \textcolor{Gray2}{books} \textcolor{Gray2}{.} 

\textbf{EP4.} \textcolor{Gray0}{K} \textcolor{Gray2}{alk} \textcolor{Gray0}{idan} \textcolor{Gray2}{Aber} \textcolor{Gray2}{a} \textcolor{Gray0}{act} \textcolor{Gray7}{ively} \textcolor{Gray1}{interact} \textcolor{Gray3}{s} \textcolor{Gray3}{with} \textcolor{Gray3}{her} \textcolor{Gray1}{readers} \textcolor{Gray0}{through} \textcolor{Gray0}{social} \textcolor{Gray0}{platforms} \textcolor{Gray0}{and} \textcolor{Gray0}{book} \textcolor{Gray0}{signing} \textcolor{Gray2}{events} \textcolor{Gray1}{.} \textcolor{Gray0}{She} \textcolor{Gray0}{values} \textcolor{Gray1}{their} \textcolor{Gray2}{feedback} \textcolor{Gray1}{and} \textcolor{Gray1}{often} \textcolor{Gray1}{takes} \textcolor{Gray2}{it} \textcolor{Gray5}{into} \textcolor{Gray0}{account} \textcolor{Gray5}{when} \textcolor{Gray0}{writing} \textcolor{Gray0}{her} \textcolor{Gray2}{future} \textcolor{Gray1}{books} \textcolor{Gray2}{.} 

\textbf{EP5.} \textcolor{Gray0}{K} \textcolor{Gray0}{alk} \textcolor{Gray0}{idan} \textcolor{Gray3}{Aber} \textcolor{Gray0}{a} \textcolor{Gray0}{act} \textcolor{Gray5}{ively} \textcolor{Gray0}{interact} \textcolor{Gray2}{s} \textcolor{Gray2}{with} \textcolor{Gray2}{her} \textcolor{Gray1}{readers} \textcolor{Gray0}{through} \textcolor{Gray0}{social} \textcolor{Gray0}{platforms} \textcolor{Gray0}{and} \textcolor{Gray0}{book} \textcolor{Gray0}{signing} \textcolor{Gray1}{events} \textcolor{Gray0}{.} \textcolor{Gray0}{She} \textcolor{Gray0}{values} \textcolor{Gray0}{their} \textcolor{Gray1}{feedback} \textcolor{Gray0}{and} \textcolor{Gray0}{often} \textcolor{Gray0}{takes} \textcolor{Gray1}{it} \textcolor{Gray3}{into} \textcolor{Gray0}{account} \textcolor{Gray3}{when} \textcolor{Gray0}{writing} \textcolor{Gray0}{her} \textcolor{Gray1}{future} \textcolor{Gray0}{books} \textcolor{Gray1}{.} 

\end{mdframed} \vspace{20pt}
\begin{mdframed}\small
\textbf{Question 5.}
 Can you name a couple of books that Edward Patrick Sullivan has written? 
\end{mdframed}

\begin{mdframed}\small
\textbf{Llama TNPO Answer 5.}

\textbf{EP1.} \textcolor{Gray3}{Two} \textcolor{Gray5}{notable} \textcolor{Gray8}{works} \textcolor{Gray9}{in} \textcolor{Gray9}{Edward} \textcolor{Gray9}{Patrick} \textcolor{Gray9}{S} \textcolor{Gray9}{ull} \textcolor{Gray9}{ivan} \textcolor{Gray8}{"} \textcolor{Gray6}{s} \textcolor{Gray6}{o} \textcolor{Gray9}{e} \textcolor{Gray7}{uv} \textcolor{Gray9}{re} \textcolor{Gray9}{include} \textcolor{Gray8}{"} \textcolor{Gray7}{N} \textcolor{Gray9}{ell} \textcolor{Gray8}{:} \textcolor{Gray9}{A} \textcolor{Gray7}{T} \textcolor{Gray9}{ale} \textcolor{Gray9}{of} \textcolor{Gray6}{Emer} \textcolor{Gray9}{ald} \textcolor{Gray8}{Is} \textcolor{Gray9}{le} \textcolor{Gray9}{"} \textcolor{Gray9}{and} \textcolor{Gray9}{"} \textcolor{Gray9}{In} \textcolor{Gray9}{Night} \textcolor{Gray9}{"} \textcolor{Gray9}{s} \textcolor{Gray9}{Sil} \textcolor{Gray8}{ence} \textcolor{Gray9}{,} \textcolor{Gray9}{the} \textcolor{Gray7}{Stars} \textcolor{Gray9}{Will} \textcolor{Gray9}{Be} \textcolor{Gray8}{Our} \textcolor{Gray9}{Lam} \textcolor{Gray9}{ps} \textcolor{Gray7}{."} 

\textbf{EP2.} \textcolor{Gray0}{Two} \textcolor{Gray0}{notable} \textcolor{Gray1}{works} \textcolor{Gray0}{in} \textcolor{Gray0}{Edward} \textcolor{Gray2}{Patrick} \textcolor{Gray4}{S} \textcolor{Gray7}{ull} \textcolor{Gray4}{ivan} \textcolor{Gray1}{"} \textcolor{Gray0}{s} \textcolor{Gray0}{o} \textcolor{Gray2}{e} \textcolor{Gray6}{uv} \textcolor{Gray8}{re} \textcolor{Gray1}{include} \textcolor{Gray0}{"} \textcolor{Gray0}{N} \textcolor{Gray9}{ell} \textcolor{Gray0}{:} \textcolor{Gray1}{A} \textcolor{Gray0}{T} \textcolor{Gray3}{ale} \textcolor{Gray1}{of} \textcolor{Gray1}{Emer} \textcolor{Gray8}{ald} \textcolor{Gray1}{Is} \textcolor{Gray9}{le} \textcolor{Gray2}{"} \textcolor{Gray3}{and} \textcolor{Gray2}{"} \textcolor{Gray0}{In} \textcolor{Gray5}{Night} \textcolor{Gray0}{"} \textcolor{Gray5}{s} \textcolor{Gray7}{Sil} \textcolor{Gray8}{ence} \textcolor{Gray3}{,} \textcolor{Gray4}{the} \textcolor{Gray6}{Stars} \textcolor{Gray7}{Will} \textcolor{Gray6}{Be} \textcolor{Gray4}{Our} \textcolor{Gray9}{Lam} \textcolor{Gray8}{ps} \textcolor{Gray3}{."} 

\textbf{EP3.} \textcolor{Gray0}{Two} \textcolor{Gray0}{notable} \textcolor{Gray0}{works} \textcolor{Gray0}{in} \textcolor{Gray0}{Edward} \textcolor{Gray0}{Patrick} \textcolor{Gray0}{S} \textcolor{Gray0}{ull} \textcolor{Gray1}{ivan} \textcolor{Gray0}{"} \textcolor{Gray0}{s} \textcolor{Gray0}{o} \textcolor{Gray1}{e} \textcolor{Gray4}{uv} \textcolor{Gray5}{re} \textcolor{Gray0}{include} \textcolor{Gray0}{"} \textcolor{Gray0}{N} \textcolor{Gray2}{ell} \textcolor{Gray0}{:} \textcolor{Gray0}{A} \textcolor{Gray0}{T} \textcolor{Gray0}{ale} \textcolor{Gray0}{of} \textcolor{Gray0}{Emer} \textcolor{Gray3}{ald} \textcolor{Gray0}{Is} \textcolor{Gray8}{le} \textcolor{Gray0}{"} \textcolor{Gray1}{and} \textcolor{Gray0}{"} \textcolor{Gray0}{In} \textcolor{Gray0}{Night} \textcolor{Gray0}{"} \textcolor{Gray2}{s} \textcolor{Gray3}{Sil} \textcolor{Gray4}{ence} \textcolor{Gray1}{,} \textcolor{Gray2}{the} \textcolor{Gray2}{Stars} \textcolor{Gray3}{Will} \textcolor{Gray1}{Be} \textcolor{Gray0}{Our} \textcolor{Gray7}{Lam} \textcolor{Gray6}{ps} \textcolor{Gray0}{."} 

\textbf{EP4.} \textcolor{Gray0}{Two} \textcolor{Gray0}{notable} \textcolor{Gray0}{works} \textcolor{Gray0}{in} \textcolor{Gray0}{Edward} \textcolor{Gray0}{Patrick} \textcolor{Gray0}{S} \textcolor{Gray0}{ull} \textcolor{Gray0}{ivan} \textcolor{Gray0}{"} \textcolor{Gray0}{s} \textcolor{Gray0}{o} \textcolor{Gray0}{e} \textcolor{Gray0}{uv} \textcolor{Gray0}{re} \textcolor{Gray0}{include} \textcolor{Gray0}{"} \textcolor{Gray0}{N} \textcolor{Gray0}{ell} \textcolor{Gray0}{:} \textcolor{Gray0}{A} \textcolor{Gray0}{T} \textcolor{Gray0}{ale} \textcolor{Gray0}{of} \textcolor{Gray0}{Emer} \textcolor{Gray0}{ald} \textcolor{Gray0}{Is} \textcolor{Gray0}{le} \textcolor{Gray0}{"} \textcolor{Gray0}{and} \textcolor{Gray0}{"} \textcolor{Gray0}{In} \textcolor{Gray0}{Night} \textcolor{Gray0}{"} \textcolor{Gray0}{s} \textcolor{Gray0}{Sil} \textcolor{Gray0}{ence} \textcolor{Gray0}{,} \textcolor{Gray0}{the} \textcolor{Gray0}{Stars} \textcolor{Gray0}{Will} \textcolor{Gray0}{Be} \textcolor{Gray0}{Our} \textcolor{Gray0}{Lam} \textcolor{Gray0}{ps} \textcolor{Gray0}{."} 

\textbf{EP5.} \textcolor{Gray0}{Two} \textcolor{Gray0}{notable} \textcolor{Gray0}{works} \textcolor{Gray0}{in} \textcolor{Gray0}{Edward} \textcolor{Gray0}{Patrick} \textcolor{Gray0}{S} \textcolor{Gray0}{ull} \textcolor{Gray0}{ivan} \textcolor{Gray0}{"} \textcolor{Gray0}{s} \textcolor{Gray0}{o} \textcolor{Gray0}{e} \textcolor{Gray0}{uv} \textcolor{Gray0}{re} \textcolor{Gray0}{include} \textcolor{Gray0}{"} \textcolor{Gray0}{N} \textcolor{Gray0}{ell} \textcolor{Gray0}{:} \textcolor{Gray0}{A} \textcolor{Gray0}{T} \textcolor{Gray0}{ale} \textcolor{Gray0}{of} \textcolor{Gray0}{Emer} \textcolor{Gray0}{ald} \textcolor{Gray0}{Is} \textcolor{Gray0}{le} \textcolor{Gray0}{"} \textcolor{Gray0}{and} \textcolor{Gray0}{"} \textcolor{Gray0}{In} \textcolor{Gray0}{Night} \textcolor{Gray0}{"} \textcolor{Gray0}{s} \textcolor{Gray0}{Sil} \textcolor{Gray0}{ence} \textcolor{Gray0}{,} \textcolor{Gray0}{the} \textcolor{Gray0}{Stars} \textcolor{Gray0}{Will} \textcolor{Gray0}{Be} \textcolor{Gray0}{Our} \textcolor{Gray0}{Lam} \textcolor{Gray0}{ps} \textcolor{Gray0}{."}

\end{mdframed}
\clearpage
\begin{mdframed}\small
\textbf{Llama WTNPO Answer 5.}

\textbf{EP1.} \textcolor{Gray3}{Two} \textcolor{Gray5}{notable} \textcolor{Gray8}{works} \textcolor{Gray9}{in} \textcolor{Gray9}{Edward} \textcolor{Gray9}{Patrick} \textcolor{Gray9}{S} \textcolor{Gray9}{ull} \textcolor{Gray9}{ivan} \textcolor{Gray8}{"} \textcolor{Gray5}{s} \textcolor{Gray6}{o} \textcolor{Gray9}{e} \textcolor{Gray7}{uv} \textcolor{Gray9}{re} \textcolor{Gray9}{include} \textcolor{Gray8}{"} \textcolor{Gray7}{N} \textcolor{Gray9}{ell} \textcolor{Gray8}{:} \textcolor{Gray9}{A} \textcolor{Gray7}{T} \textcolor{Gray9}{ale} \textcolor{Gray9}{of} \textcolor{Gray6}{Emer} \textcolor{Gray9}{ald} \textcolor{Gray8}{Is} \textcolor{Gray9}{le} \textcolor{Gray9}{"} \textcolor{Gray9}{and} \textcolor{Gray9}{"} \textcolor{Gray9}{In} \textcolor{Gray9}{Night} \textcolor{Gray8}{"} \textcolor{Gray9}{s} \textcolor{Gray9}{Sil} \textcolor{Gray8}{ence} \textcolor{Gray9}{,} \textcolor{Gray9}{the} \textcolor{Gray7}{Stars} \textcolor{Gray9}{Will} \textcolor{Gray9}{Be} \textcolor{Gray8}{Our} \textcolor{Gray9}{Lam} \textcolor{Gray9}{ps} \textcolor{Gray7}{."} 

\textbf{EP2.} \textcolor{Gray0}{Two} \textcolor{Gray0}{notable} \textcolor{Gray2}{works} \textcolor{Gray2}{in} \textcolor{Gray1}{Edward} \textcolor{Gray3}{Patrick} \textcolor{Gray5}{S} \textcolor{Gray8}{ull} \textcolor{Gray4}{ivan} \textcolor{Gray2}{"} \textcolor{Gray0}{s} \textcolor{Gray0}{o} \textcolor{Gray3}{e} \textcolor{Gray5}{uv} \textcolor{Gray8}{re} \textcolor{Gray2}{include} \textcolor{Gray0}{"} \textcolor{Gray1}{N} \textcolor{Gray9}{ell} \textcolor{Gray1}{:} \textcolor{Gray2}{A} \textcolor{Gray0}{T} \textcolor{Gray3}{ale} \textcolor{Gray1}{of} \textcolor{Gray1}{Emer} \textcolor{Gray8}{ald} \textcolor{Gray1}{Is} \textcolor{Gray9}{le} \textcolor{Gray2}{"} \textcolor{Gray4}{and} \textcolor{Gray2}{"} \textcolor{Gray0}{In} \textcolor{Gray8}{Night} \textcolor{Gray1}{"} \textcolor{Gray7}{s} \textcolor{Gray8}{Sil} \textcolor{Gray8}{ence} \textcolor{Gray4}{,} \textcolor{Gray6}{the} \textcolor{Gray7}{Stars} \textcolor{Gray8}{Will} \textcolor{Gray7}{Be} \textcolor{Gray3}{Our} \textcolor{Gray9}{Lam} \textcolor{Gray8}{ps} \textcolor{Gray3}{."} 

\textbf{EP3.} \textcolor{Gray0}{Two} \textcolor{Gray0}{notable} \textcolor{Gray1}{works} \textcolor{Gray2}{in} \textcolor{Gray0}{Edward} \textcolor{Gray1}{Patrick} \textcolor{Gray4}{S} \textcolor{Gray6}{ull} \textcolor{Gray2}{ivan} \textcolor{Gray1}{"} \textcolor{Gray0}{s} \textcolor{Gray0}{o} \textcolor{Gray2}{e} \textcolor{Gray4}{uv} \textcolor{Gray7}{re} \textcolor{Gray1}{include} \textcolor{Gray0}{"} \textcolor{Gray1}{N} \textcolor{Gray9}{ell} \textcolor{Gray0}{:} \textcolor{Gray1}{A} \textcolor{Gray0}{T} \textcolor{Gray2}{ale} \textcolor{Gray0}{of} \textcolor{Gray0}{Emer} \textcolor{Gray7}{ald} \textcolor{Gray0}{Is} \textcolor{Gray9}{le} \textcolor{Gray2}{"} \textcolor{Gray3}{and} \textcolor{Gray2}{"} \textcolor{Gray0}{In} \textcolor{Gray6}{Night} \textcolor{Gray0}{"} \textcolor{Gray5}{s} \textcolor{Gray7}{Sil} \textcolor{Gray7}{ence} \textcolor{Gray3}{,} \textcolor{Gray5}{the} \textcolor{Gray6}{Stars} \textcolor{Gray7}{Will} \textcolor{Gray1}{Be} \textcolor{Gray0}{Our} \textcolor{Gray9}{Lam} \textcolor{Gray7}{ps} \textcolor{Gray2}{."} 

\textbf{EP4.} \textcolor{Gray0}{Two} \textcolor{Gray0}{notable} \textcolor{Gray1}{works} \textcolor{Gray1}{in} \textcolor{Gray0}{Edward} \textcolor{Gray0}{Patrick} \textcolor{Gray3}{S} \textcolor{Gray5}{ull} \textcolor{Gray1}{ivan} \textcolor{Gray0}{"} \textcolor{Gray0}{s} \textcolor{Gray0}{o} \textcolor{Gray2}{e} \textcolor{Gray3}{uv} \textcolor{Gray3}{re} \textcolor{Gray0}{include} \textcolor{Gray0}{"} \textcolor{Gray0}{N} \textcolor{Gray6}{ell} \textcolor{Gray0}{:} \textcolor{Gray1}{A} \textcolor{Gray0}{T} \textcolor{Gray1}{ale} \textcolor{Gray0}{of} \textcolor{Gray0}{Emer} \textcolor{Gray7}{ald} \textcolor{Gray0}{Is} \textcolor{Gray9}{le} \textcolor{Gray1}{"} \textcolor{Gray1}{and} \textcolor{Gray0}{"} \textcolor{Gray0}{In} \textcolor{Gray1}{Night} \textcolor{Gray0}{"} \textcolor{Gray2}{s} \textcolor{Gray6}{Sil} \textcolor{Gray4}{ence} \textcolor{Gray3}{,} \textcolor{Gray4}{the} \textcolor{Gray3}{Stars} \textcolor{Gray3}{Will} \textcolor{Gray0}{Be} \textcolor{Gray0}{Our} \textcolor{Gray7}{Lam} \textcolor{Gray6}{ps} \textcolor{Gray0}{."} 

\textbf{EP5.} \textcolor{Gray0}{Two} \textcolor{Gray0}{notable} \textcolor{Gray1}{works} \textcolor{Gray0}{in} \textcolor{Gray0}{Edward} \textcolor{Gray0}{Patrick} \textcolor{Gray1}{S} \textcolor{Gray3}{ull} \textcolor{Gray0}{ivan} \textcolor{Gray0}{"} \textcolor{Gray0}{s} \textcolor{Gray0}{o} \textcolor{Gray2}{e} \textcolor{Gray3}{uv} \textcolor{Gray2}{re} \textcolor{Gray0}{include} \textcolor{Gray0}{"} \textcolor{Gray0}{N} \textcolor{Gray3}{ell} \textcolor{Gray0}{:} \textcolor{Gray1}{A} \textcolor{Gray0}{T} \textcolor{Gray1}{ale} \textcolor{Gray0}{of} \textcolor{Gray0}{Emer} \textcolor{Gray6}{ald} \textcolor{Gray0}{Is} \textcolor{Gray9}{le} \textcolor{Gray1}{"} \textcolor{Gray1}{and} \textcolor{Gray0}{"} \textcolor{Gray0}{In} \textcolor{Gray1}{Night} \textcolor{Gray0}{"} \textcolor{Gray2}{s} \textcolor{Gray5}{Sil} \textcolor{Gray3}{ence} \textcolor{Gray2}{,} \textcolor{Gray3}{the} \textcolor{Gray1}{Stars} \textcolor{Gray2}{Will} \textcolor{Gray0}{Be} \textcolor{Gray0}{Our} \textcolor{Gray6}{Lam} \textcolor{Gray5}{ps} \textcolor{Gray0}{."}

\end{mdframed} 

\clearpage

\section{More Results}
\label{app: tuning}
\label{app: more results}

We benchmark the aforementioned works using existing evaluation metrics, further justifying our explorations and conclusions. Specifically, we employ the UWC evaluation framework and ES metrics as suggested by~\citep{wang2024unlearning}. This framework can quantify the extent of knowledge parametrization and ease the challenges associated with hyper-parameter, which often arise from the trade-off between unlearning and retention. % After hyper-parameter tuning, we further report on FQ and MU, aligning with the evaluation metrics suggested originally by~\citep{maini2024tofu}. 
All our experiments are conducted on TOFU fictitious unlearning datasets, please refer to Appendix~\ref{app: experiment setups} for more descriptions about the dataset details and experimental setups.

\begin{table}[ht]
\centering
\caption{\textbf{UWC Tuning for WGA.}  $\downarrow$ / $\uparrow$ indicate smaller / larger values are preferable.  }\label{tab: wga}
\resizebox{0.99\textwidth}{!}{
\begin{tabular}{ccccccccccc}
\toprule[1.5pt]
\multicolumn{2}{c}{WGA} & \multicolumn{4}{c}{Phi-1.5} && \multicolumn{4}{c}{Llama-2-7B} \\ \cline{3-6}\cline{8-11}
\multirow{2}{*}{setup} & \multirow{2}{*}{$\alpha$} &  \multicolumn{2}{c}{ES-exact} & \multicolumn{2}{c}{ES-perturb}  && \multicolumn{2}{c}{ES-exact} & \multicolumn{2}{c}{ES-perturb} \\
& & retain $\uparrow$ & unlearn $\downarrow$ & retain $\uparrow$ & unlearn $\downarrow$ && retain $\uparrow$ & unlearn $\downarrow$ & retain $\uparrow$ & unlearn $\downarrow$ \\ 
\midrule[1.2pt]
\multicolumn{2}{c}{before unlearning} & 0.4433 & 0.5969 & 0.2115 & 0.1605 && 0.8277 & 0.8039 & 0.5302 & 0.4001 \\
\cline{3-6}\cline{8-11}
\multirow{9}{*}{{1\%}}
& 0.05 
& 0.4205 & 0.2587 & 0.1927 & 0.1274 && 0.7549 & 0.2021 & 0.4493 & 0.1250 \\
& 0.10 
& 0.3804 & 0.1899 & 0.2136 & 0.1274 && 0.7317 & 0.2666 & 0.4428 & 0.3139 \\
& 0.50 
& 0.4267 & 0.1524 & 0.2108 & 0.0652 && 0.7593 & 0.0897 & 0.4900 & 0.0767 \\
& 0.70 
& 0.4412 & 0.1695 & 0.2052 & 0.0890 && 0.7251 & 0.1680 & 0.4863 & 0.0767 \\
& 1.00 
& 0.4369 & 0.1712 & 0.2052 & 0.0527 && 0.7392 & 0.1376 & 0.4863 & 0.0767 \\
& 2.00 
& 0.4369 & 0.0877 & 0.2052 & 0.0764 && 0.7637 & 0.0736 & 0.4701 & 0.0767 \\
& 4.00 
& 0.4055 & 0.0765 & 0.1857 & 0.0220 && 0.7021 & 0.0736 & 0.4881 & 0.0844 \\
& 5.00 
& 0.4045 & 0.0805 & 0.2201 & 0.0425 && 0.7040 & 0.0736 & 0.4708 & 0.0793 \\
& 7.00 
& 0.4356 & 0.1685 & 0.2145 & 0.0397 && 0.7040 & 0.0999 & 0.4504 & 0.0969 \\
& 10.00 
& 0.4058 & 0.1264 & 0.2085 & 0.0512 && 0.7040 & 0.1334 & 0.4751 & 0.1293 \\
\midrule[0.8pt]
\multicolumn{2}{c}{before unlearning} & 0.4433 & 0.5619 & 0.2115 & 0.2374 && 0.8277 & 0.7735 & 0.5302 & 0.4126 \\
\cline{3-6}\cline{8-11}
\multirow{9}{*}{5\%}
& 0.05 
& 0.4557 & 0.3555 & 0.1986 & 0.2349 && 0.7749 & 0.5709 & 0.4970 & 0.3596 \\
& 0.10 
& 0.4695 & 0.3618 & 0.1792 & 0.2349 && 0.7555 & 0.5681 & 0.4910 & 0.4371 \\
& 0.50 
& 0.4186 & 0.3538 & 0.1985 & 0.2514 && 0.7534 & 0.4310 & 0.4778 & 0.4013 \\
& 0.70 
& 0.4021 & 0.3592 & 0.2356 & 0.1607 && 0.7534 & 0.4328 & 0.4872 & 0.4013 \\
& 1.00 
& 0.4520 & 0.4142 & 0.2551 & 0.1967 && 0.7463 & 0.3790 & 0.4853 & 0.3295 \\
& 2.00 
& 0.4000 & 0.2345 & 0.1791 & 0.0792 && 0.7534 & 0.3826 & 0.4807 & 0.3489 \\
& 4.00 
& 0.4454 & 0.3659 & 0.1665 & 0.0927 && 0.7496 & 0.1478 & 0.5200 & 0.3516 \\
& 5.00 
& 0.3913 & 0.2798 & 0.2197 & 0.0823 && 0.7533 & 0.0103 & 0.5302 & 0.3516 \\
& 7.00 
& 0.4433 & 0.3663 & 0.1731 & 0.0559 && 0.7524 & 0.0000 & 0.4825 & 0.1430 \\
& 10.00 
& 0.4415 & 0.4021 & 0.2225 & 0.0274 && 0.7880 & 0.0638 & 0.4887 & 0.1602 \\
\midrule[0.8pt] 
\multicolumn{2}{c}{before unlearning} & 0.4433 & 0.4799 & 0.2115 & 0.1843 && 0.8277 & 0.8307 & 0.5302 & 0.3099 \\
\cline{3-6}\cline{8-11}
\multirow{9}{*}{10\%} 
& 0.05 
& 0.4733 & 0.3563 & 0.1841 & 0.1445 && 0.7641 & 0.5997 & 0.4805 & 0.2947 \\
& 0.10 
& 0.4094 & 0.2927 & 0.2032 & 0.1560 && 0.7463 & 0.5997 & 0.4727 & 0.2947 \\
& 0.50 
& 0.4310 & 0.4711 & 0.1665 & 0.1425 && 0.7494 & 0.5230 & 0.4809 & 0.2959 \\
& 0.70 
& 0.3911 & 0.4711 & 0.1993 & 0.0840 && 0.7534 & 0.5363 & 0.4825 & 0.2884 \\
& 1.00 
& 0.4477 & 0.4272 & 0.2345 & 0.0616 && 0.7534 & 0.5363 & 0.4779 & 0.2677 \\
& 2.00 
& 0.4269 & 0.1369 & 0.1794 & 0.0379 && 0.7571 & 0.1646 & 0.5184 & 0.2896 \\
& 4.00 
& 0.4370 & 0.1177 & 0.2161 & 0.0193 && 0.7646 & 0.0160 & 0.5038 & 0.2989 \\
& 5.00 
& 0.4218 & 0.0935 & 0.1881 & 0.0105 && 0.7836 & 0.1289 & 0.4777 & 0.1289 \\
& 7.00 
& 0.4042 & 0.0908 & 0.1727 & 0.0472 && 0.7241 & 0.0331 & 0.4563 & 0.3183 \\
& 10.00 
& 0.3982 & 0.1287 & 0.2020 & 0.0670 && 0.7146 & 0.0321 & 0.4877 & 0.3258 \\
\bottomrule[1.5pt]
\end{tabular}}
\end{table}

\begin{table}[t]
\centering
\caption{\textbf{UWC Tuning for NPO.}  $\downarrow$ / $\uparrow$ indicate smaller / larger values are preferable.  }\label{tab: npo}
\resizebox{0.99\textwidth}{!}{
\begin{tabular}{ccccccccccc}
\toprule[1.5pt]
\multicolumn{2}{c}{NPO} & \multicolumn{4}{c}{Phi-1.5} && \multicolumn{4}{c}{Llama-2-7B} \\ \cline{3-6}\cline{8-11}
\multirow{2}{*}{setup} & \multirow{2}{*}{$\beta$} &  \multicolumn{2}{c}{ES-exact} & \multicolumn{2}{c}{ES-perturb}  && \multicolumn{2}{c}{ES-exact} & \multicolumn{2}{c}{ES-perturb} \\
& & retain $\uparrow$ & unlearn $\downarrow$ & retain $\uparrow$ & unlearn $\downarrow$ && retain $\uparrow$ & unlearn $\downarrow$ & retain $\uparrow$ & unlearn $\downarrow$ \\ 
\midrule[1.2pt]
\multicolumn{2}{c}{before unlearning} & 0.4433 & 0.5969 & 0.2115 & 0.1605 && 0.8277 & 0.8039 & 0.5302 & 0.4001 \\
\cline{3-6}\cline{8-11}
\multirow{9}{*}{{1\%}}
& 0.05 
& 0.4283 & 0.1587 & 0.2136 & 0.0702 && 0.7655 & 0.1262 & 0.5084 & 0.2545 \\
& 0.10 
& 0.4553 & 0.1587 & 0.2121 & 0.0945 && 0.7547 & 0.1857 & 0.4995 & 0.2113 \\
& 0.50 
& 0.4030 & 0.0947 & 0.2136 & 0.1083 && 0.6967 & 0.2513 & 0.4777 & 0.1898 \\
& 0.70 
& 0.3909 & 0.1072 & 0.2136 & 0.1083 && 0.7517 & 0.2607 & 0.4733 & 0.1863 \\
& 1.00 
& 0.4261 & 0.1806 & 0.2136 & 0.1083 && 0.7517 & 0.2607 & 0.4777 & 0.1863 \\
& 2.00 
& 0.3954 & 0.1166 & 0.2136 & 0.1655 && 0.7234 & 0.2876 & 0.4588 & 0.2025 \\
& 4.00 
& 0.4223 & 0.1166 & 0.2136 & 0.1551 && 0.7565 & 0.2941 & 0.4777 & 0.2089 \\
& 5.00 
& 0.4218 & 0.1806 & 0.2136 & 0.1551 && 0.7874 & 0.2941 & 0.4777 & 0.2089 \\
& 7.00 
& 0.4218 & 0.1806 & 0.2001 & 0.1551 && 0.7874 & 0.2941 & 0.4588 & 0.2197 \\
& 10.00 
& 0.4218 & 0.1806 & 0.2136 & 0.1551 && 0.7457 & 0.2893 & 0.4777 & 0.2197 \\
\midrule[0.8pt]
\multicolumn{2}{c}{before unlearning} & 0.4433 & 0.5619 & 0.2115 & 0.2374 && 0.8277 & 0.7735 & 0.5302 & 0.4126 \\
\cline{3-6}\cline{8-11}
\multirow{9}{*}{5\%}
& 0.05 
& 0.4265 & 0.3671 & 0.2052 & 0.2349 && 0.7523 & 0.5005 & 0.4957 & 0.3697 \\
& 0.10 
& 0.4161 & 0.3709 & 0.1942 & 0.2228 && 0.7652 & 0.5473 & 0.4976 & 0.4066 \\
& 0.50 
& 0.4433 & 0.4539 & 0.2098 & 0.2228 && 0.7780 & 0.4966 & 0.4773 & 0.4009 \\
& 0.70 
& 0.3970 & 0.3452 & 0.2058 & 0.2314 && 0.7459 & 0.5005 & 0.4903 & 0.4013 \\
& 1.00 
& 0.4086 & 0.4177 & 0.1982 & 0.2228 && 0.7836 & 0.5195 & 0.4918 & 0.3785 \\
& 2.00 
& 0.4086 & 0.3863 & 0.2043 & 0.2203 && 0.7572 & 0.5809 & 0.4976 & 0.3884 \\
& 4.00 
& 0.4433 & 0.4188 & 0.2043 & 0.2147 && 0.7836 & 0.5809 & 0.4781 & 0.3884 \\
& 5.00 
& 0.4433 & 0.4188 & 0.2150 & 0.2147 && 0.7836 & 0.5946 & 0.5175 & 0.3726 \\
& 7.00 
& 0.4127 & 0.4034 & 0.2109 & 0.1805 && 0.7836 & 0.5303 & 0.4887 & 0.3674 \\
& 10.00 
& 0.4433 & 0.4034 & 0.1848 & 0.2000 && 0.7836 & 0.5703 & 0.5012 & 0.3674 \\
\midrule[0.8pt] 
\multicolumn{2}{c}{before unlearning} & 0.4433 & 0.4799 & 0.2115 & 0.1843 && 0.8277 & 0.8307 & 0.5302 & 0.3099 \\
\cline{3-6}\cline{8-11}
\multirow{9}{*}{10\%} 
& 0.05 
& 0.4370 & 0.4360 & 0.2231 & 0.1526 && 0.7765 & 0.6204 & 0.4825 & 0.3137 \\
& 0.10 
& 0.4222 & 0.4290 & 0.2048 & 0.1383 && 0.7765 & 0.5818 & 0.4809 & 0.3137 \\
& 0.50 
& 0.4270 & 0.4708 & 0.2088 & 0.1645 && 0.7836 & 0.6310 & 0.4825 & 0.3271 \\
& 0.70 
& 0.4413 & 0.4781 & 0.2088 & 0.1645 && 0.7836 & 0.6545 & 0.4825 & 0.3271 \\
& 1.00 
& 0.4073 & 0.4689 & 0.2074 & 0.1588 && 0.7836 & 0.6291 & 0.4825 & 0.3271 \\
& 2.00 
& 0.4433 & 0.4712 & 0.2362 & 0.2224 && 0.7836 & 0.6375 & 0.4874 & 0.3244 \\
& 4.00 
& 0.4433 & 0.4771 & 0.2225 & 0.1996 && 0.7836 & 0.6018 & 0.4795 & 0.3030 \\
& 5.00 
& 0.4433 & 0.4771 & 0.2260 & 0.2105 && 0.7836 & 0.5387 & 0.5101 & 0.2989 \\
& 7.00 
& 0.4433 & 0.4954 & 0.2260 & 0.1967 && 0.7479 & 0.5387 & 0.4809 & 0.2672 \\
& 10.00 
& 0.4404 & 0.5465 & 0.1905 & 0.1990 && 0.7479 & 0.5387 & 0.4838 & 0.2774 \\
\bottomrule[1.5pt]
\end{tabular}}
\end{table}

\begin{table}[t]
\centering
\caption{\textbf{UWC Tuning for TNPO.}  $\downarrow$ / $\uparrow$ indicate smaller / larger values are preferable.  }\label{tab: tnpo}
\resizebox{0.99\textwidth}{!}{
\begin{tabular}{ccccccccccc}
\toprule[1.5pt]
\multicolumn{2}{c}{TNPO} & \multicolumn{4}{c}{Phi-1.5} && \multicolumn{4}{c}{Llama-2-7B} \\ \cline{3-6}\cline{8-11}
\multirow{2}{*}{setup} & \multirow{2}{*}{$\beta$} &  \multicolumn{2}{c}{ES-exact} & \multicolumn{2}{c}{ES-perturb}  && \multicolumn{2}{c}{ES-exact} & \multicolumn{2}{c}{ES-perturb} \\
& & retain $\uparrow$ & unlearn $\downarrow$ & retain $\uparrow$ & unlearn $\downarrow$ && retain $\uparrow$ & unlearn $\downarrow$ & retain $\uparrow$ & unlearn $\downarrow$ \\ 
\midrule[1.2pt]
\multicolumn{2}{c}{before unlearning} & 0.4433 & 0.5969 & 0.2115 & 0.1605 && 0.8277 & 0.8039 & 0.5302 & 0.4001 \\
\cline{3-6}\cline{8-11}
\multirow{9}{*}{{1\%}}
& 0.05 
& 0.4218 & 0.2626 & 0.2099 & 0.1274 && 0.7641 & 0.2021 & 0.4428 & 0.2897 \\
& 0.10 
& 0.4245 & 0.2613 & 0.2136 & 0.1274 && 0.7655 & 0.2720 & 0.4976 & 0.2720 \\
& 0.50 
& 0.3670 & 0.1899 & 0.2136 & 0.1274 && 0.7393 & 0.1354 & 0.4782 & 0.0669 \\
& 0.70 
& 0.3927 & 0.1524 & 0.2136 & 0.1274 && 0.7321 & 0.1150 & 0.4782 & 0.0479 \\
& 1.00 
& 0.4154 & 0.1524 & 0.2121 & 0.0702 && 0.7491 & 0.1507 & 0.4764 & 0.0768 \\
& 2.00 
& 0.4367 & 0.1524 & 0.2136 & 0.1369 && 0.7038 & 0.1281 & 0.4990 & 0.3538 \\
& 4.00 
& 0.4504 & 0.1092 & 0.1709 & 0.0652 && 0.7324 & 0.1507 & 0.5103 & 0.3148 \\
& 5.00 
& 0.4321 & 0.0967 & 0.1709 & 0.0702 && 0.7657 & 0.1507 & 0.4603 & 0.3025 \\
& 7.00 
& 0.4143 & 0.0740 & 0.2052 & 0.1126 && 0.7001 & 0.1628 & 0.4447 & 0.3242 \\
& 10.00 
& 0.4388 & 0.0967 & 0.2136 & 0.1655 && 0.7518 & 0.1771 & 0.4603 & 0.3679 \\
\midrule[0.8pt]
\multicolumn{2}{c}{before unlearning} & 0.4433 & 0.5619 & 0.2115 & 0.2374 && 0.8277 & 0.7735 & 0.5302 & 0.4126 \\
\cline{3-6}\cline{8-11}
\multirow{9}{*}{5\%}
& 0.05 
& 0.4072 & 0.3340 & 0.2136 & 0.2349 && 0.7558 & 0.5709 & 0.4857 & 0.3136 \\
& 0.10 
& 0.4522 & 0.3618 & 0.2121 & 0.2349 && 0.7678 & 0.5659 & 0.4910 & 0.3869 \\
& 0.50 
& 0.4172 & 0.4095 & 0.2002 & 0.2314 && 0.7836 & 0.5693 & 0.4891 & 0.4066 \\
& 0.70 
& 0.4193 & 0.3709 & 0.2068 & 0.2151 && 0.7514 & 0.4728 & 0.4807 & 0.3681 \\
& 1.00 
& 0.3673 & 0.3832 & 0.1903 & 0.2651 && 0.7494 & 0.4300 & 0.4856 & 0.3975 \\
& 2.00 
& 0.4315 & 0.3542 & 0.2503 & 0.2423 && 0.7534 & 0.3985 & 0.4888 & 0.2750 \\
& 4.00 
& 0.3993 & 0.3729 & 0.2075 & 0.1895 && 0.7490 & 0.2432 & 0.4828 & 0.2098 \\
& 5.00 
& 0.4214 & 0.4023 & 0.1557 & 0.1869 && 0.7450 & 0.1869 & 0.4868 & 0.2252 \\
& 7.00 
& 0.3974 & 0.4062 & 0.2256 & 0.1855 && 0.7662 & 0.0843 & 0.4788 & 0.2225 \\
& 10.00 
& 0.4433 & 0.4287 & 0.1852 & 0.1735 && 0.7501 & 0.0514 & 0.4788 & 0.0777 \\
\midrule[0.8pt] 
\multicolumn{2}{c}{before unlearning} & 0.4433 & 0.4799 & 0.2115 & 0.1843 && 0.8277 & 0.8307 & 0.5302 & 0.3099 \\
\cline{3-6}\cline{8-11}
\multirow{9}{*}{10\%} 
& 0.05 
& 0.4205 & 0.2633 & 0.1772 & 0.1445 && 0.7641 & 0.5864 & 0.4805 & 0.3049 \\
& 0.10 
& 0.4074 & 0.2927 & 0.1748 & 0.1445 && 0.7566 & 0.5997 & 0.4805 & 0.2947 \\
& 0.50 
& 0.4397 & 0.5129 & 0.1829 & 0.1253 && 0.7534 & 0.5164 & 0.4825 & 0.3240 \\
& 0.70 
& 0.3893 & 0.5129 & 0.2414 & 0.1225 && 0.7534 & 0.5164 & 0.4778 & 0.3214 \\
& 1.00 
& 0.4020 & 0.4975 & 0.2020 & 0.1310 && 0.7534 & 0.5164 & 0.4872 & 0.2947 \\
& 2.00 
& 0.3980 & 0.4838 & 0.1888 & 0.0921 && 0.7660 & 0.4395 & 0.5184 & 0.3373 \\
& 4.00 
& 0.3959 & 0.2943 & 0.2157 & 0.0562 && 0.7500 & 0.3028 & 0.4809 & 0.3014 \\
& 5.00 
& 0.4380 & 0.2840 & 0.2050 & 0.0562 && 0.7720 & 0.1481 & 0.4809 & 0.3040 \\
& 7.00 
& 0.4242 & 0.3317 & 0.2286 & 0.0562 && 0.7244 & 0.1530 & 0.4798 & 0.2393 \\
& 10.00 
& 0.4242 & 0.2145 & 0.1541 & 0.0888 && 0.7453 & 0.1781 & 0.5003 & 0.2880 \\
\bottomrule[1.5pt]
\end{tabular}}
\end{table}

\begin{table}[t]
\centering
\caption{\textbf{UWC Tuning for WTNPO ($\alpha=0.5$).} $\downarrow$ / $\uparrow$ indicate smaller / larger values are preferable. }\label{tab: wtnpo_0.5}
\resizebox{0.99\textwidth}{!}{
\begin{tabular}{ccccccccccc}
\toprule[1.5pt]
\multicolumn{2}{c}{WTNPO} & \multicolumn{4}{c}{Phi-1.5} && \multicolumn{4}{c}{Llama-2-7B} \\ \cline{3-6}\cline{8-11}
\multirow{2}{*}{setup} & \multirow{2}{*}{$\beta$} &  \multicolumn{2}{c}{ES-exact} & \multicolumn{2}{c}{ES-perturb}  && \multicolumn{2}{c}{ES-exact} & \multicolumn{2}{c}{ES-perturb} \\
& & retain $\uparrow$ & unlearn $\downarrow$ & retain $\uparrow$ & unlearn $\downarrow$ && retain $\uparrow$ & unlearn $\downarrow$ & retain $\uparrow$ & unlearn $\downarrow$ \\ 
\midrule[1.2pt]
\multicolumn{2}{c}{before unlearning} & 0.4433 & 0.5969 & 0.2115 & 0.1605 && 0.8277 & 0.8039 & 0.5302 & 0.4001 \\
\cline{3-6}\cline{8-11}
\multirow{9}{*}{{1\%}}
& 0.05 
& 0.4412 & 0.1538 & 0.2080 & 0.0700 && 0.7343 & 0.0833 & 0.4863 & 0.0767 \\
& 0.10 
& 0.4394 & 0.1801 & 0.2052 & 0.0652 && 0.7606 & 0.0679 & 0.4957 & 0.0929 \\
& 0.50 
& 0.4142 & 0.1524 & 0.2136 & 0.0677 && 0.7251 & 0.1629 & 0.4976 & 0.0929 \\
& 0.70 
& 0.4325 & 0.1524 & 0.1882 & 0.0527 && 0.7874 & 0.1629 & 0.4863 & 0.0865 \\
& 1.00 
& 0.4412 & 0.1524 & 0.1948 & 0.0527 && 0.7289 & 0.1121 & 0.4976 & 0.1064 \\
& 2.00 
& 0.3944 & 0.1412 & 0.1709 & 0.0527 && 0.6673 & 0.0904 & 0.5152 & 0.3242 \\
& 4.00 
& 0.3713 & 0.0620 & 0.2052 & 0.0527 && 0.7040 & 0.0979 & 0.4358 & 0.1252 \\
& 5.00 
& 0.4213 & 0.0620 & 0.1799 & 0.0527 && 0.7040 & 0.0979 & 0.5152 & 0.3690 \\
& 7.00 
& 0.4315 & 0.0620 & 0.2052 & 0.0813 && 0.7040 & 0.1153 & 0.4974 & 0.1951 \\
& 10.00 
& 0.4523 & 0.0647 & 0.2052 & 0.0813 && 0.7040 & 0.1509 & 0.4603 & 0.2975 \\
\midrule[0.8pt]
\multicolumn{2}{c}{before unlearning} & 0.4433 & 0.5619 & 0.2115 & 0.2374 && 0.8277 & 0.7735 & 0.5302 & 0.4126 \\
\cline{3-6}\cline{8-11}
\multirow{9}{*}{5\%}
& 0.05 
& 0.4374 & 0.3243 & 0.1849 & 0.2479 && 0.7520 & 0.4073 & 0.5122 & 0.4013 \\
& 0.10 
& 0.3745 & 0.3848 & 0.2222 & 0.2479 && 0.7494 & 0.4776 & 0.5122 & 0.4013 \\
& 0.50 
& 0.4041 & 0.3562 & 0.2414 & 0.1587 && 0.7534 & 0.4044 & 0.5109 & 0.3975 \\
& 0.70 
& 0.4080 & 0.4222 & 0.2478 & 0.1867 && 0.7534 & 0.4337 & 0.4809 & 0.3803 \\
& 1.00 
& 0.4560 & 0.4222 & 0.2523 & 0.1967 && 0.7476 & 0.4233 & 0.4809 & 0.3645 \\
& 2.00 
& 0.4402 & 0.3209 & 0.1841 & 0.1850 && 0.7534 & 0.4085 & 0.4888 & 0.2940 \\
& 4.00 
& 0.4433 & 0.3903 & 0.1921 & 0.1619 && 0.7533 & 0.0764 & 0.4872 & 0.1426 \\
& 5.00 
& 0.4454 & 0.3792 & 0.2515 & 0.1719 && 0.7691 & 0.1178 & 0.4950 & 0.1690 \\
& 7.00 
& 0.4454 & 0.3357 & 0.2133 & 0.1669 && 0.7451 & 0.0777 & 0.5022 & 0.1690 \\
& 10.00 
& 0.4454 & 0.3814 & 0.1807 & 0.1694 && 0.7725 & 0.0242 & 0.5319 & 0.2442 \\
\midrule[0.8pt] 
\multicolumn{2}{c}{before unlearning} & 0.4433 & 0.4799 & 0.2115 & 0.1843 && 0.8277 & 0.8307 & 0.5302 & 0.3099 \\
\cline{3-6}\cline{8-11}
\multirow{9}{*}{10\%} 
& 0.05 
& 0.4210 & 0.4711 & 0.1829 & 0.1339 && 0.7534 & 0.5363 & 0.4825 & 0.2884 \\
& 0.10 
& 0.4601 & 0.4711 & 0.1963 & 0.1425 && 0.7534 & 0.5363 & 0.4809 & 0.2757 \\
& 0.50 
& 0.3865 & 0.3518 & 0.2189 & 0.1321 && 0.7534 & 0.5363 & 0.4825 & 0.2677 \\
& 0.70 
& 0.4200 & 0.3753 & 0.1676 & 0.0788 && 0.7534 & 0.5363 & 0.5063 & 0.2872 \\
& 1.00 
& 0.4322 & 0.3432 & 0.1615 & 0.0538 && 0.7520 & 0.4619 & 0.4842 & 0.2769 \\
& 2.00 
& 0.4519 & 0.4117 & 0.2014 & 0.0583 && 0.7720 & 0.3741 & 0.5049 & 0.3335 \\
& 4.00 
& 0.3994 & 0.2390 & 0.1854 & 0.0453 && 0.7720 & 0.0446 & 0.5216 & 0.2989 \\
& 5.00 
& 0.4223 & 0.1658 & 0.2102 & 0.0974 && 0.7691 & 0.0283 & 0.4809 & 0.2898 \\
& 7.00 
& 0.4242 & 0.2035 & 0.1774 & 0.0888 && 0.7484 & 0.0355 & 0.4911 & 0.2118 \\
& 10.00 
& 0.4212 & 0.2742 & 0.1633 & 0.0517 && 0.7717 & 0.0355 & 0.4960 & 0.2537 \\
\bottomrule[1.5pt]
\end{tabular}}
\end{table}

\begin{table}[t]
\centering
\caption{\textbf{UWC Tuning for WTNPO ($\alpha=1$).}   $\downarrow$ / $\uparrow$ indicate smaller / larger values are preferable.  }\label{tab: wtnpo_1}
\resizebox{0.99\textwidth}{!}{
\begin{tabular}{ccccccccccc}
\toprule[1.5pt]
\multicolumn{2}{c}{WTNPO} & \multicolumn{4}{c}{Phi-1.5} && \multicolumn{4}{c}{Llama-2-7B} \\ \cline{3-6}\cline{8-11}
\multirow{2}{*}{setup} & \multirow{2}{*}{$\beta$} &  \multicolumn{2}{c}{ES-exact} & \multicolumn{2}{c}{ES-perturb}  && \multicolumn{2}{c}{ES-exact} & \multicolumn{2}{c}{ES-perturb} \\
& & retain $\uparrow$ & unlearn $\downarrow$ & retain $\uparrow$ & unlearn $\downarrow$ && retain $\uparrow$ & unlearn $\downarrow$ & retain $\uparrow$ & unlearn $\downarrow$ \\ 
\midrule[1.2pt]
\multicolumn{2}{c}{before unlearning} & 0.4433 & 0.5969 & 0.2115 & 0.1605 && 0.8277 & 0.8039 & 0.5302 & 0.4001 \\
\cline{3-6}\cline{8-11}
\multirow{9}{*}{{1\%}}
& 0.05 
& 0.4412 & 0.1738 & 0.2052 & 0.0659 && 0.7090 & 0.1376 & 0.4863 & 0.0767 \\
& 0.10 
& 0.4412 & 0.1738 & 0.1989 & 0.0659 && 0.7166 & 0.1376 & 0.4879 & 0.0767 \\
& 0.50 
& 0.4412 & 0.1738 & 0.1925 & 0.0527 && 0.7713 & 0.1319 & 0.4968 & 0.0767 \\
& 0.70 
& 0.4412 & 0.1738 & 0.1861 & 0.0567 && 0.7118 & 0.0840 & 0.4896 & 0.0767 \\
& 1.00 
& 0.4412 & 0.1738 & 0.2052 & 0.0619 && 0.7522 & 0.0897 & 0.4896 & 0.0767 \\
& 2.00 
& 0.4412 & 0.0647 & 0.1978 & 0.0465 && 0.6497 & 0.0648 & 0.4777 & 0.0793 \\
& 4.00 
& 0.4199 & 0.0647 & 0.1969 & 0.0452 && 0.7040 & 0.0736 & 0.4960 & 0.0844 \\
& 5.00 
& 0.3790 & 0.0385 & 0.2074 & 0.0527 && 0.7040 & 0.0736 & 0.4955 & 0.1140 \\
& 7.00 
& 0.4258 & 0.0425 & 0.1865 & 0.0527 && 0.7040 & 0.0999 & 0.4505 & 0.1505 \\
& 10.00 
& 0.4319 & 0.0620 & 0.2070 & 0.0813 && 0.7214 & 0.1359 & 0.5200 & 0.2588 \\
\midrule[0.8pt]
\multicolumn{2}{c}{before unlearning} & 0.4433 & 0.5619 & 0.2115 & 0.2374 && 0.8277 & 0.7735 & 0.5302 & 0.4126 \\
\cline{3-6}\cline{8-11}
\multirow{9}{*}{5\%}
& 0.05 
& 0.4560 & 0.4082 & 0.2259 & 0.1967 && 0.7534 & 0.3855 & 0.4841 & 0.3697 \\
& 0.10 
& 0.4000 & 0.4238 & 0.2242 & 0.1967 && 0.7491 & 0.3754 & 0.4780 & 0.3645 \\
& 0.50 
& 0.4320 & 0.4062 & 0.1990 & 0.1063 && 0.7534 & 0.3754 & 0.4888 & 0.2914 \\
& 0.70 
& 0.4200 & 0.4062 & 0.1992 & 0.0823 && 0.7463 & 0.4174 & 0.4869 & 0.2837 \\
& 1.00 
& 0.4278 & 0.3698 & 0.2557 & 0.1097 && 0.7317 & 0.4240 & 0.4812 & 0.2837 \\
& 2.00 
& 0.4029 & 0.2473 & 0.2134 & 0.1203 && 0.7534 & 0.3786 & 0.4848 & 0.2642 \\
& 4.00 
& 0.4454 & 0.3853 & 0.2077 & 0.1105 && 0.7658 & 0.0781 & 0.4807 & 0.1971 \\
& 5.00 
& 0.4454 & 0.2985 & 0.2227 & 0.1754 && 0.7625 & 0.0681 & 0.4772 & 0.1820 \\
& 7.00 
& 0.4254 & 0.2913 & 0.1644 & 0.1679 && 0.7594 & 0.0448 & 0.4795 & 0.1356 \\
& 10.00 
& 0.3894 & 0.2826 & 0.1639 & 0.1477 && 0.7887 & 0.0304 & 0.4873 & 0.1871 \\
\midrule[0.8pt] 
\multicolumn{2}{c}{before unlearning} & 0.4433 & 0.4799 & 0.2115 & 0.1843 && 0.8277 & 0.8307 & 0.5302 & 0.3099 \\
\cline{3-6}\cline{8-11}
\multirow{9}{*}{10\%} 
& 0.05 
& 0.4810 & 0.2738 & 0.2188 & 0.0595 && 0.7534 & 0.5363 & 0.4779 & 0.2677 \\
& 0.10 
& 0.4246 & 0.2024 & 0.2036 & 0.0637 && 0.7534 & 0.4953 & 0.4809 & 0.2884 \\
& 0.50 
& 0.4180 & 0.3978 & 0.1639 & 0.0434 && 0.7491 & 0.5030 & 0.5073 & 0.2947 \\
& 0.70 
& 0.4540 & 0.3663 & 0.2202 & 0.0417 && 0.7534 & 0.5030 & 0.4989 & 0.2675 \\
& 1.00 
& 0.4502 & 0.2201 & 0.1992 & 0.0494 && 0.7513 & 0.3768 & 0.4893 & 0.2989 \\
& 2.00 
& 0.4234 & 0.1453 & 0.2065 & 0.0107 && 0.7551 & 0.2972 & 0.5185 & 0.2575 \\
& 4.00 
& 0.4205 & 0.1344 & 0.1958 & 0.0193 && 0.7675 & 0.0402 & 0.4792 & 0.2553 \\
& 5.00 
& 0.4208 & 0.1260 & 0.1926 & 0.0239 && 0.7691 & 0.0378 & 0.4960 & 0.2255 \\
& 7.00 
& 0.3934 & 0.1464 & 0.1557 & 0.1002 && 0.7001 & 0.0335 & 0.4742 & 0.2090 \\
& 10.00 
& 0.3860 & 0.1123 & 0.1652 & 0.1132 && 0.7693 & 0.0525 & 0.4943 & 0.2459 \\
\bottomrule[1.5pt]
\end{tabular}}
\end{table}

\begin{table}[t]
\centering
\caption{\textbf{UWC Tuning for WTNPO ($\alpha=1.5$).}   $\downarrow$ / $\uparrow$ indicate smaller / larger values are preferable. }\label{tab: wtnpo_1.5}
\resizebox{0.99\textwidth}{!}{
\begin{tabular}{ccccccccccc}
\toprule[1.5pt]
\multicolumn{2}{c}{WTNPO} & \multicolumn{4}{c}{Phi-1.5} && \multicolumn{4}{c}{Llama-2-7B} \\ \cline{3-6}\cline{8-11}
\multirow{2}{*}{setup} & \multirow{2}{*}{$\beta$} &  \multicolumn{2}{c}{ES-exact} & \multicolumn{2}{c}{ES-perturb}  && \multicolumn{2}{c}{ES-exact} & \multicolumn{2}{c}{ES-perturb} \\
& & retain $\uparrow$ & unlearn $\downarrow$ & retain $\uparrow$ & unlearn $\downarrow$ && retain $\uparrow$ & unlearn $\downarrow$ & retain $\uparrow$ & unlearn $\downarrow$ \\ 
\midrule[1.2pt]
\multicolumn{2}{c}{before unlearning} & 0.4433 & 0.5969 & 0.2115 & 0.1605 && 0.8277 & 0.8039 & 0.5302 & 0.4001 \\
\cline{3-6}\cline{8-11}
\multirow{9}{*}{{1\%}}
& 0.05 
& 0.4412 & 0.1688 & 0.1925 & 0.0619 && 0.7118 & 0.1319 & 0.4685 & 0.0767 \\
& 0.10 
& 0.4412 & 0.1688 & 0.2052 & 0.0619 && 0.7094 & 0.1319 & 0.4911 & 0.0767 \\
& 0.50 
& 0.4412 & 0.1412 & 0.2010 & 0.0619 && 0.7141 & 0.0472 & 0.4895 & 0.0398 \\
& 0.70 
& 0.4135 & 0.0647 & 0.2052 & 0.0557 && 0.7189 & 0.0679 & 0.4740 & 0.0793 \\
& 1.00 
& 0.4327 & 0.0647 & 0.1818 & 0.0619 && 0.6186 & 0.0824 & 0.4798 & 0.0767 \\
& 2.00 
& 0.4391 & 0.0647 & 0.1693 & 0.0274 && 0.7021 & 0.0736 & 0.4704 & 0.0844 \\
& 4.00 
& 0.4183 & 0.0647 & 0.1963 & 0.0336 && 0.7021 & 0.0736 & 0.4974 & 0.0844 \\
& 5.00 
& 0.4173 & 0.0647 & 0.1911 & 0.0425 && 0.7040 & 0.0912 & 0.5022 & 0.1505 \\
& 7.00 
& 0.4258 & 0.0500 & 0.2033 & 0.0425 && 0.7040 & 0.1404 & 0.4583 & 0.1428 \\
& 10.00 
& 0.4243 & 0.0620 & 0.2053 & 0.0527 && 0.7040 & 0.1521 & 0.4589 & 0.1667 \\
\midrule[0.8pt]
\multicolumn{2}{c}{before unlearning} & 0.4433 & 0.5619 & 0.2115 & 0.2374 && 0.8277 & 0.7735 & 0.5302 & 0.4126 \\
\cline{3-6}\cline{8-11}
\multirow{9}{*}{5\%}
& 0.05 
& 0.4539 & 0.4062 & 0.2574 & 0.0926 && 0.7505 & 0.3786 & 0.5122 & 0.3783 \\
& 0.10 
& 0.4560 & 0.4062 & 0.2374 & 0.0646 && 0.7534 & 0.3911 & 0.4908 & 0.3295 \\
& 0.50 
& 0.3934 & 0.2448 & 0.1984 & 0.0672 && 0.7534 & 0.3911 & 0.4888 & 0.3628 \\
& 0.70 
& 0.4469 & 0.2448 & 0.1934 & 0.1012 && 0.7505 & 0.3786 & 0.4888 & 0.3295 \\
& 1.00 
& 0.4510 & 0.2448 & 0.1791 & 0.1203 && 0.7534 & 0.3786 & 0.4888 & 0.3052 \\
& 2.00 
& 0.3915 & 0.3621 & 0.2047 & 0.1067 && 0.7534 & 0.3354 & 0.4828 & 0.2456 \\
& 4.00 
& 0.4214 & 0.3393 & 0.2172 & 0.1217 && 0.7533 & 0.0427 & 0.4805 & 0.1257 \\
& 5.00 
& 0.4334 & 0.2879 & 0.2247 & 0.1320 && 0.7480 & 0.0753 & 0.4950 & 0.1916 \\
& 7.00 
& 0.4454 & 0.2879 & 0.2177 & 0.1154 && 0.7497 & 0.0100 & 0.4796 & 0.1895 \\
& 10.00 
& 0.3894 & 0.2071 & 0.2177 & 0.1154 && 0.7570 & 0.0198 & 0.4920 & 0.1342 \\
\midrule[0.8pt] 
\multicolumn{2}{c}{before unlearning} & 0.4433 & 0.4799 & 0.2115 & 0.1843 && 0.8277 & 0.8307 & 0.5302 & 0.3099 \\
\cline{3-6}\cline{8-11}
\multirow{9}{*}{10\%} 
& 0.05 
& 0.4262 & 0.1453 & 0.1816 & 0.0091 && 0.7534 & 0.4925 & 0.4852 & 0.2677 \\
& 0.10 
& 0.4704 & 0.1625 & 0.1926 & 0.0173 && 0.7534 & 0.4437 & 0.4896 & 0.2677 \\
& 0.50 
& 0.4519 & 0.2246 & 0.2185 & 0.0280 && 0.7720 & 0.3792 & 0.4977 & 0.2677 \\
& 0.70 
& 0.4145 & 0.1369 & 0.2167 & 0.0453 && 0.7683 & 0.2972 & 0.5154 & 0.2677 \\
& 1.00 
& 0.4254 & 0.1253 & 0.2110 & 0.0336 && 0.7720 & 0.0355 & 0.5202 & 0.2842 \\
& 2.00 
& 0.4345 & 0.1135 & 0.2090 & 0.0109 && 0.7625 & 0.0149 & 0.4825 & 0.2989 \\
& 4.00 
& 0.4234 & 0.1357 & 0.2190 & 0.0120 && 0.7549 & 0.0451 & 0.5133 & 0.2677 \\
& 5.00 
& 0.4306 & 0.1347 & 0.1998 & 0.0239 && 0.7807 & 0.0111 & 0.5061 & 0.2952 \\
& 7.00 
& 0.3934 & 0.1161 & 0.1660 & 0.1002 && 0.7735 & 0.0043 & 0.4976 & 0.2302 \\
& 10.00 
& 0.4149 & 0.1380 & 0.1591 & 0.1002 && 0.7691 & 0.1148 & 0.4911 & 0.2921 \\
\bottomrule[1.5pt]
\end{tabular}}
\end{table}

\begin{table}[t]
\centering
\caption{\textbf{UWC Tuning for RMU (shallow).}   $\downarrow$ / $\uparrow$ indicate smaller / larger values are preferable. }\label{tab: rmu_s}
\resizebox{0.99\textwidth}{!}{
\begin{tabular}{ccccccccccc}
\toprule[1.5pt]
\multicolumn{2}{c}{RMU} & \multicolumn{4}{c}{Phi-1.5} && \multicolumn{4}{c}{Llama-2-7B} \\ \cline{3-6}\cline{8-11}
\multirow{2}{*}{setup} & \multirow{2}{*}{$c$} &  \multicolumn{2}{c}{ES-exact} & \multicolumn{2}{c}{ES-perturb}  && \multicolumn{2}{c}{ES-exact} & \multicolumn{2}{c}{ES-perturb} \\
& & retain $\uparrow$ & unlearn $\downarrow$ & retain $\uparrow$ & unlearn $\downarrow$ && retain $\uparrow$ & unlearn $\downarrow$ & retain $\uparrow$ & unlearn $\downarrow$ \\ 
\midrule[1.2pt]
\multicolumn{2}{c}{before unlearning} & 0.4433 & 0.5969 & 0.2115 & 0.1605 && 0.8277 & 0.8039 & 0.5302 & 0.4001 \\
\cline{3-6}\cline{8-11}
\multirow{7}{*}{{1\%}}
& 0.00 
& 0.4530 & 0.5969 & 0.2007 & 0.1855 &&  0.7604 & 0.5993 & 0.4888 & 0.3816 \\
& 1.00 
& 0.4122 & 0.4356 & 0.2115 & 0.1855 &&  0.7502 & 0.6278 & 0.4890 & 0.4253 \\
& 2.00 
& 0.4312 & 0.4080 & 0.2072 & 0.1855 && 0.7653 & 0.6714 & 0.4531 & 0.4002 \\
& 4.00 
& 0.4245 & 0.4682 & 0.2115 & 0.1855 && 0.7356 & 0.7223  & 0.4758 & 0.4008 \\
& 5.00 
& 0.4398 & 0.5149 & 0.1981 & 0.1855 && 0.7163 & 0.6287 & 0.4871 & 0.4008 \\
& 7.00 
& 0.4460 & 0.5096 & 0.2201 & 0.1855 &&  0.7292 & 0.7128 & 0.4516 & 0.4104 \\
& 10.00 
& 0.4215 & 0.4816 & 0.2018 & 0.1855 && 0.7292 & 0.6195 & 0.4453 & 0.4104 \\
\midrule[0.8pt]
\multicolumn{2}{c}{before unlearning} & 0.4433 & 0.5619 & 0.2115 & 0.2374 && 0.8277 & 0.7735 & 0.5302 & 0.4126 \\
\cline{3-6}\cline{8-11}
\multirow{7}{*}{5\%}
& 0.00 
& 0.4164 & 0.4924 & 0.1918 & 0.2172 && 0.7516 & 0.7292 & 0.4676 & 0.3616 \\
& 1.00 
& 0.4284 & 0.5124 & 0.2194 & 0.2172 &&  0.7762 & 0.7357 & 0.4677 & 0.4504 \\
& 2.00 
& 0.4044 & 0.4774 & 0.1939 & 0.2172 && 0.7146 & 0.6370 & 0.4453 & 0.4126 \\
& 4.00 
& 0.4404 & 0.4252 & 0.2047 & 0.2147 && 0.7619 & 0.6758 & 0.4812 & 0.4126 \\
& 5.00 
& 0.4404 & 0.4838 & 0.2181 & 0.2207 && 0.7139 & 0.6758 & 0.4812 & 0.4164 \\
& 7.00 
& 0.4204 & 0.3772 & 0.2073 & 0.2339 && 0.7604 & 0.6758 & 0.4793 & 0.4126 \\
& 10.00 
& 0.4194 & 0.4114 & 0.1903 & 0.2339 &&  0.7146 & 0.6370 & 0.4453 & 0.4126 \\
\midrule[0.8pt] 
\multicolumn{2}{c}{before unlearning} & 0.4433 & 0.4799 & 0.2115 & 0.1843 && 0.8277 & 0.8307 & 0.5302 & 0.3099 \\
\cline{3-6}\cline{8-11}
\multirow{7}{*}{10\%} 
& 0.00 
& 0.4425 & 0.5761 & 0.2055 & 0.1424 && 0.7887 & 0.8165 & 0.4246 & 0.2662 \\
& 1.00 
& 0.4424 & 0.5968 & 0.2133 & 0.1567 && 0.7568 & 0.6869 & 0.4771 & 0.2989 \\
& 2.00 
& 0.4304 & 0.5961 & 0.2028 & 0.1360 && 0.7628 & 0.6755 & 0.4690 & 0.2989 \\
& 4.00 
& 0.4364 & 0.5208 & 0.1944 & 0.1547 && 0.7229 & 0.5784 & 0.4812 & 0.2766 \\
& 5.00 
& 0.4284 & 0.5184 & 0.2007 & 0.1547 && 0.7262 & 0.6268 & 0.4797 & 0.2944 \\
& 7.00 
& 0.4404 & 0.5184 & 0.2007 & 0.1754 && 0.7271 & 0.5778 & 0.4232 & 0.3033 \\
& 10.00 
& 0.4404 & 0.4693 & 0.2136 & 0.1675 && 0.7032 & 0.5455 & 0.4849 & 0.3033 \\
\bottomrule[1.5pt]
\end{tabular}}
\end{table}

\begin{table}[t]
\centering
\caption{\textbf{UWC Tuning for RMU (middle).}   $\downarrow$ / $\uparrow$ indicate smaller / larger values are preferable. }\label{tab: rmu_m}
\resizebox{0.99\textwidth}{!}{
\begin{tabular}{ccccccccccc}
\toprule[1.5pt]
\multicolumn{2}{c}{RMU} & \multicolumn{4}{c}{Phi-1.5} && \multicolumn{4}{c}{Llama-2-7B} \\ \cline{3-6}\cline{8-11}
\multirow{2}{*}{setup} & \multirow{2}{*}{$c$} &  \multicolumn{2}{c}{ES-exact} & \multicolumn{2}{c}{ES-perturb}  && \multicolumn{2}{c}{ES-exact} & \multicolumn{2}{c}{ES-perturb} \\
& & retain $\uparrow$ & unlearn $\downarrow$ & retain $\uparrow$ & unlearn $\downarrow$ && retain $\uparrow$ & unlearn $\downarrow$ & retain $\uparrow$ & unlearn $\downarrow$ \\ 
\midrule[1.2pt]
\multicolumn{2}{c}{before unlearning} & 0.4433 & 0.5969 & 0.2115 & 0.1605 && 0.8277 & 0.8039 & 0.5302 & 0.4001 \\
\cline{3-6}\cline{8-11}
\multirow{7}{*}{{1\%}}
& 0.00 
& 0.4203 & 0.5969 & 0.2153 & 0.2069 && 0.7606 & 0.5127 & 0.5115 & 0.4001 \\
& 1.00 
& 0.4203 & 0.5969 & 0.2180 & 0.1409 && 0.7416 & 0.5093 & 0.4878 & 0.4001 \\
& 2.00 
& 0.4203 & 0.5969 & 0.1831 & 0.1261 && 0.7512 & 0.4263 & 0.4644 & 0.3794 \\
& 4.00 
& 0.4203 & 0.5969 & 0.1831 & 0.1261 && 0.7559 & 0.5093 & 0.4096 & 0.3538 \\
& 5.00 
& 0.4203 & 0.5969 & 0.2073 & 0.1328 && 0.7413 & 0.4810 & 0.4927 & 0.4001 \\
& 7.00 
& 0.4218 & 0.5969 & 0.2119 & 0.1261 && 0.7413 & 0.4810 & 0.4927 & 0.4001 \\
& 10.00 
& 0.4203 & 0.5969 & 0.2119 & 0.1350 && 0.7655 & 0.4137 & 0.4927 & 0.3624 \\
\midrule[0.8pt]
\multicolumn{2}{c}{before unlearning} & 0.4433 & 0.5619 & 0.2115 & 0.2374 && 0.8277 & 0.7735 & 0.5302 & 0.4126 \\
\cline{3-6}\cline{8-11}
\multirow{7}{*}{5\%}
& 0.00 
& 0.4262 & 0.5723 & 0.1952 & 0.2207 && 0.8017 & 0.6376 & 0.4754 & 0.3884 \\
& 1.00 
& 0.4232 & 0.4999 & 0.2032 & 0.2207 &&  0.7381 & 0.4284 & 0.4798 & 0.3884 \\
& 2.00 
& 0.4232 & 0.5013 & 0.2229 & 0.2207 &&  0.7179 & 0.5146 & 0.4379 & 0.3884 \\
& 4.00 
& 0.4218 & 0.5309 & 0.1887 & 0.2030 &&  0.7112 & 0.4034 & 0.4927 & 0.3884 \\
& 5.00 
& 0.3578 & 0.3762 & 0.2119 & 0.2030 && 0.7438 & 0.6323 & 0.4927 & 0.3884 \\
& 7.00 
& 0.4218 & 0.5946 & 0.1990 & 0.1971 && 0.7438 & 0.6684 & 0.4927 & 0.4126 \\
& 10.00 
& 0.4262 & 0.4000 & 0.1968 & 0.2005 &&  0.7552 & 0.6615 & 0.4644 & 0.4126 \\
\midrule[0.8pt] 
\multicolumn{2}{c}{before unlearning} & 0.4433 & 0.4799 & 0.2115 & 0.1843 && 0.8277 & 0.8307 & 0.5302 & 0.3099 \\
\cline{3-6}\cline{8-11}
\multirow{7}{*}{10\%} 
& 0.00 
& 0.4262 & 0.4584 & 0.1952 & 0.1786 && 0.7463 & 0.6152 & 0.4754 & 0.3884 \\
& 1.00 
& 0.4203 & 0.4909 & 0.2108 & 0.1816 && 0.7493 & 0.7636 & 0.4379 & 0.3139 \\
& 2.00 
& 0.4232 & 0.5025 & 0.2212 & 0.1786 && 0.7374 & 0.7275 & 0.4831 & 0.3158 \\
& 4.00 
& 0.4394 & 0.5025 & 0.2117 & 0.1901 &&  0.7874 & 0.7526 & 0.4871 & 0.3196 \\
& 5.00 
& 0.4224 & 0.4511 & 0.2117 & 0.1799 &&  0.7874 & 0.6907 & 0.4653 & 0.3220 \\
& 7.00 
& 0.4005 & 0.4568 & 0.1496 & 0.1741 && 0.7434 & 0.5821 & 0.4776 & 0.2908 \\
& 10.00 
& 0.4522 & 0.4938 & 0.1542 & 0.2000 && 0.7534 & 0.6495 & 0.4927 & 0.3316 \\
\bottomrule[1.5pt]
\end{tabular}}
\end{table}

\begin{table}[t]
\centering
\caption{\textbf{UWC Tuning for RMU (deep).}   $\downarrow$ / $\uparrow$ indicate smaller / larger values are preferable. }\label{tab: rmu_d}
\resizebox{0.99\textwidth}{!}{
\begin{tabular}{ccccccccccc}
\toprule[1.5pt]
\multicolumn{2}{c}{UWC} & \multicolumn{4}{c}{Phi-1.5} && \multicolumn{4}{c}{Llama-2-7B} \\ \cline{3-6}\cline{8-11}
\multirow{2}{*}{setup} & \multirow{2}{*}{$c$} &  \multicolumn{2}{c}{ES-exact} & \multicolumn{2}{c}{ES-perturb}  && \multicolumn{2}{c}{ES-exact} & \multicolumn{2}{c}{ES-perturb} \\
& & retain $\uparrow$ & unlearn $\downarrow$ & retain $\uparrow$ & unlearn $\downarrow$ && retain $\uparrow$ & unlearn $\downarrow$ & retain $\uparrow$ & unlearn $\downarrow$ \\ 
\midrule[1.2pt]
\multicolumn{2}{c}{before unlearning} & 0.4433 & 0.5969 & 0.2115 & 0.1605 && 0.8277 & 0.8039 & 0.5302 & 0.4001 \\
\cline{3-6}\cline{8-11}
\multirow{7}{*}{{1\%}}
& 0.00 
& 0.3936 & 0.5219 & 0.2136 & 0.1574 &&  0.7836 & 0.6364 & 0.4927 & 0.4089 \\
& 1.00 
& 0.4156 & 0.5219 & 0.2117 & 0.1574 && 0.7461 & 0.4564 & 0.4442 & 0.3402 \\
& 2.00 
& 0.4212 & 0.5219 & 0.2080 & 0.1655 && 0.6977 & 0.2814 & 0.4847 & 0.2790 \\
& 4.00 
& 0.4212 & 0.5153 & 0.1951 & 0.1655 && 0.6913 & 0.2992 & 0.4428 & 0.2748 \\
& 5.00 
& 0.4212 & 0.5121 & 0.2062 & 0.1655 && 0.7122 & 0.3974 & 0.4976 & 0.1982 \\
& 7.00 
& 0.4212 & 0.5108 & 0.1885 & 0.1686 && 0.7509 & 0.3271 & 0.4428 & 0.2305 \\
& 10.00 
& 0.4184 & 0.4963 & 0.2136 & 0.1717 && 0.7106 & 0.3815 & 0.4428 & 0.2062 \\
\midrule[0.8pt]
\multicolumn{2}{c}{before unlearning} & 0.4433 & 0.5619 & 0.2115 & 0.2374 && 0.8277 & 0.7735 & 0.5302 & 0.4126 \\
\cline{3-6}\cline{8-11}
\multirow{7}{*}{5\%}
& 0.00 
& 0.4212 & 0.4953 & 0.2007 & 0.2182 && 0.7731 & 0.7074 & 0.4675 & 0.3953 \\
& 1.00 
& 0.4049 & 0.5144 & 0.2115 & 0.2182 && 0.7731 & 0.6488 & 0.4801 & 0.3850 \\
& 2.00 
& 0.4110 & 0.5602 & 0.1967 & 0.2227 && 0.7410 & 0.6683 & 0.4801 & 0.3714 \\
& 4.00 
& 0.4151 & 0.5621 & 0.1930 & 0.2227 && 0.7731 & 0.6031 & 0.4598 & 0.3869 \\
& 5.00 
& 0.4212 & 0.5271 & 0.2099 & 0.2394 &&  0.7464 & 0.7001 & 0.4613 & 0.3958 \\
& 7.00 
& 0.4212 & 0.5285 & 0.1951 & 0.2394 && 0.8113 & 0.6983 & 0.5015 & 0.4464 \\
& 10.00 
& 0.4064 & 0.4816 & 0.2025 & 0.2349 && 0.7319 & 0.7763 & 0.4600 & 0.4393 \\
\midrule[0.8pt] 
\multicolumn{2}{c}{before unlearning} & 0.4433 & 0.4799 & 0.2115 & 0.1843 && 0.8277 & 0.8307 & 0.5302 & 0.3099 \\
\cline{3-6}\cline{8-11}
\multirow{7}{*}{10\%} 
& 0.00 
& 0.4212 & 0.4935 & 0.2095 & 0.1933 && 0.7577 & 0.6868 & 0.4410 & 0.2884 \\
& 1.00 
& 0.4049 & 0.4935 & 0.2039 & 0.1963 && 0.7673 & 0.7560 & 0.4571 & 0.2906 \\
& 2.00 
& 0.4212 & 0.4935 & 0.1969 & 0.1933 &&  0.7731 & 0.7402 & 0.4865 & 0.3239 \\
& 4.00 
& 0.4212 & 0.4935 & 0.2115 & 0.1933 &&0.7731 & 0.7414 & 0.4426 & 0.2674 \\
& 5.00 
& 0.4212 & 0.4959 & 0.1967 & 0.1933 &&0.7486 & 0.7688 & 0.4738 & 0.2192 \\
& 7.00 
& 0.4212 & 0.4799 & 0.2097 & 0.1933 && 0.7620 & 0.7402 & 0.4784 & 0.2547 \\
& 10.00 
& 0.3934 & 0.4799 & 0.1951 & 0.1786 &&0.7394 & 0.7402 & 0.4890 & 0.2547 \\
\bottomrule[1.5pt]
\end{tabular}}
\end{table}

\end{document}